\theoremstyle{plain}
\newtheorem{theorem}{Theorem}[section]
\newtheorem{proposition}[theorem]{Proposition}
\newtheorem{lemma}[theorem]{Lemma}
\theoremstyle{definition}
\newtheorem{definition}[theorem]{Definition}
\newtheorem{assumption}[theorem]{Assumption}
\theoremstyle{remark}
\DeclareMathOperator*{\argmin}{arg\,min}
\title{Get rich quick: exact solutions reveal how unbalanced initializations promote rapid feature learning}
\renewcommand\AB@affilsepx{ \quad \protect\Affilfont}
\def\thanks#1{\protected@xdef\@thanks{\@thanks\protect\footnotetext{#1}}}
\author[1]{\bf Daniel Kunin$^*$\thanks{$^*$ Equal contribution. Correspondence to \texttt{kunin@stanford.edu} and \texttt{aravento@stanford.edu}.}}
\author[1]{\bf Allan Raventós$^*$}
\author[2]{\bf Clémentine Dominé}
\author[1]{\bf Feng Chen}
\author[3]{\\ \bf David Klindt}
\author[2]{\bf Andrew Saxe}
\author[1]{\bf Surya Ganguli}
\affil[1]{Stanford University}
\affil[2]{University College London}
\affil[3]{Cold Spring Harbor Laboratory}
\begin{document}

\maketitle


\begin{abstract}
    \vspace{-10pt}
    While the impressive performance of modern neural networks is often attributed to their capacity to efficiently extract task-relevant features from data, the mechanisms underlying this \emph{rich feature learning regime} remain elusive, with much of our theoretical understanding stemming from the opposing \emph{lazy regime}.
    In this work, we derive exact solutions to a minimal model that transitions between lazy and rich learning, precisely elucidating how unbalanced \emph{layer-specific} initialization variances and learning rates determine the degree of feature learning. 
    %
    Our analysis reveals that they conspire to influence the learning regime through a set of conserved quantities that constrain and modify the geometry of learning trajectories in parameter and function space.
    We extend our analysis to more complex linear models with multiple neurons, outputs, and layers and to shallow nonlinear networks with piecewise linear activation functions.
    In linear networks, rapid feature learning only occurs from balanced initializations, where all layers learn at similar speeds. 
    While in nonlinear networks, unbalanced initializations that promote faster learning in earlier layers can accelerate rich learning.
    Through a series of experiments, we provide evidence that this unbalanced rich regime drives feature learning in deep finite-width networks, promotes interpretability of early layers in CNNs, reduces the sample complexity of learning hierarchical data, and decreases the time to grokking in modular arithmetic. 
    Our theory motivates further exploration of unbalanced initializations to enhance efficient feature learning.
\end{abstract}

\vspace{-10pt}
\section{Introduction}
\label{sec:introduction}
\vspace{-5pt}

Deep learning has transformed machine learning, demonstrating remarkable capabilities in a myriad of tasks ranging from image recognition to natural language processing. 
It's widely believed that the impressive performance of these models lies in their capacity to efficiently extract task-relevant features from data.
However, understanding this feature acquisition requires unraveling a complex interplay between datasets, network architectures, and optimization algorithms.
Within this framework, two distinct regimes, determined at initialization, have emerged: the lazy and the rich.

\textbf{Lazy regime.} 
Various investigations have revealed a notable phenomenon in overparameterized neural networks, where throughout training the networks remain close to their linearization \cite{du2018gradient, du2019gradient, allen2019learning, allen2019convergence, zou2020gradient}.
Seminal work by \citet{jacot2018neural}, demonstrated that in the infinite-width limit, the Neural Tangent Kernel (NTK), which describes the evolution of the neural network through training, converges to a deterministic limit.
Consequently, the network learns a solution akin to kernel regression with the NTK matrix.
Termed the \emph{lazy} or \emph{kernel} regime, this domain has been characterized by a deterministic NTK \cite{jacot2018neural, yang2020tensor}, minimal movement in parameter space \cite{chizat2019lazy}, static hidden representations, exponential learning curves, and implicit biases aligned with a reproducing kernel Hilbert space (RKHS) norm \cite{azulay2021implicit}.
However, \citet{chizat2019lazy} challenged this understanding, asserting that the lazy regime isn't a product of the infinite-width architecture, but is contingent on the $\emph{overall scale}$ of the network at initialization.
They demonstrated that given any finite-width model $f(x;\theta)$ whose output is zero at initialization, a scaled version of the model $\tau f(x;\theta)$ will enter the lazy regime as the scale $\tau$ diverges.
However, they also noted that these scaled models often perform worse in test error.
While the lazy regime offers insights into the network's convergence to a global minimum, it does not fully capture the generalization capabilities of neural networks trained with standard initializations.
It is thus widely believed that a different regime, driven by small or vanishing initializations, underlies the many successes of neural networks.

\textbf{Rich regime.}
In contrast to the lazy regime, the \emph{rich} or \emph{feature-learning} or \emph{active} regime is distinguished by a learned NTK that evolves through training, non-convex dynamics traversing between saddle points \cite{saxe2013exact, saxe2019mathematical,jacot2021saddle}, sigmoidal learning curves, and simplicity biases such as low-rankness \cite{li2020towards} or sparsity \cite{woodworth2020kernel}.
Yet, the exact characterization of rich learning and the features it learns frequently depends on the specific problem at hand, with its definition commonly simplified as what it is not: lazy.
Recent analyses have shown that beyond overall scale, other aspects of the initialization can substantially impact the extent of feature learning, such as the effective rank \cite{liu2023connectivity}, layer-specific initialization variances \cite{yang2020feature, luo2021phase, yang2022tensor}, and large learning rates \cite{lewkowycz2020large, ba2022high, zhu2023catapults, cui2024asymptotics}. 
\citet{azulay2021implicit} demonstrated that in two-layer linear networks, the relative difference in weight magnitudes between the first and second layer, termed the \emph{relative scale} in our work, can impact feature learning, with balanced initializations yielding rich learning dynamics, while unbalanced ones tend to induce lazy dynamics.
However, as shown in \cref{fig:two-layer-relu}, for nonlinear networks unbalanced initializations can induce both rich and lazy dynamics,  creating a complex phase portrait of learning regimes influenced by both overall and relative scale.
Building on these observations, our study aims to precisely understand how layer-specific initialization variances and learning rates determine the transition between lazy and rich learning in finite-width networks.
Moreover, we endeavor to gain insights into the inductive biases of both regimes, and the transition between them, during training and at interpolation, with the ultimate goal of elucidating how the rich regime acquires features that facilitate generalization.

\begin{figure*}[t]
    \begin{subfigure}[c]{0.465\textwidth}
        \label{fig:two-layer-relu-a}
        \centering
        \includegraphics[width=0.49\linewidth]{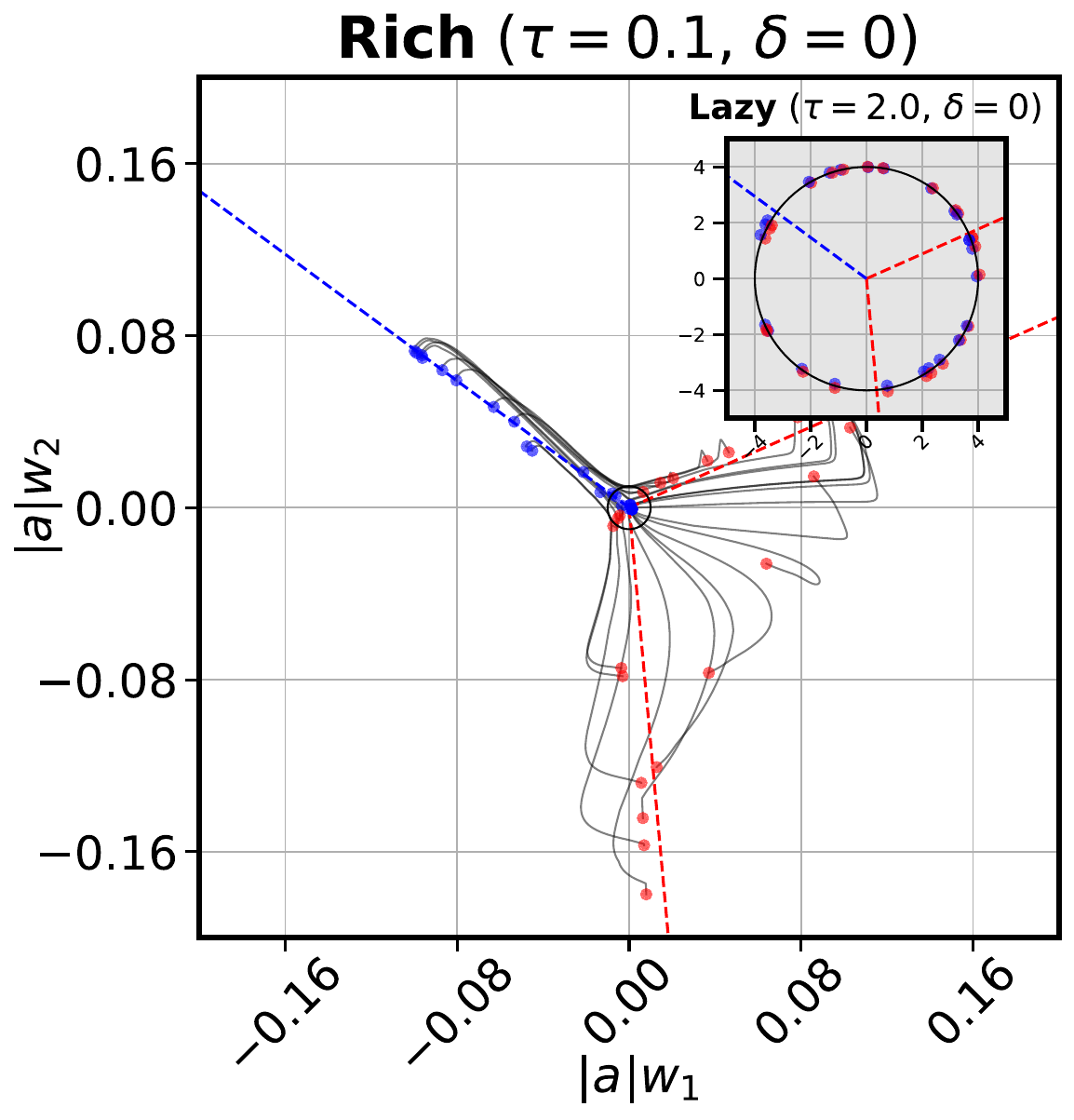}
        \includegraphics[width=0.49\linewidth]{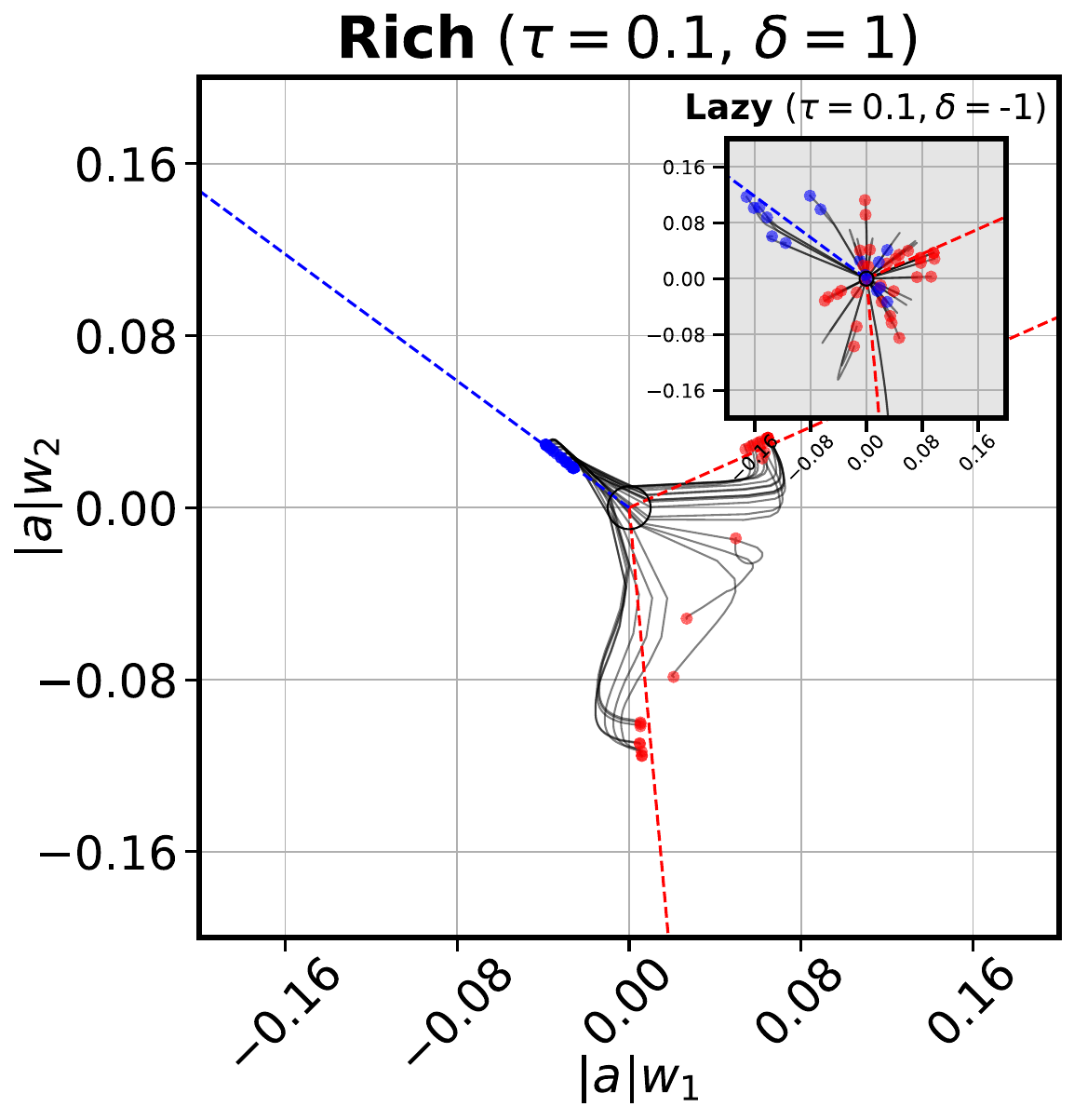}
        \caption{Overall and relative scale impact feature learning}
    \end{subfigure}
    \begin{subfigure}[c]{0.53\textwidth}
        \label{fig:two-layer-relu-b}
        \centering
        \includegraphics[width=\linewidth]{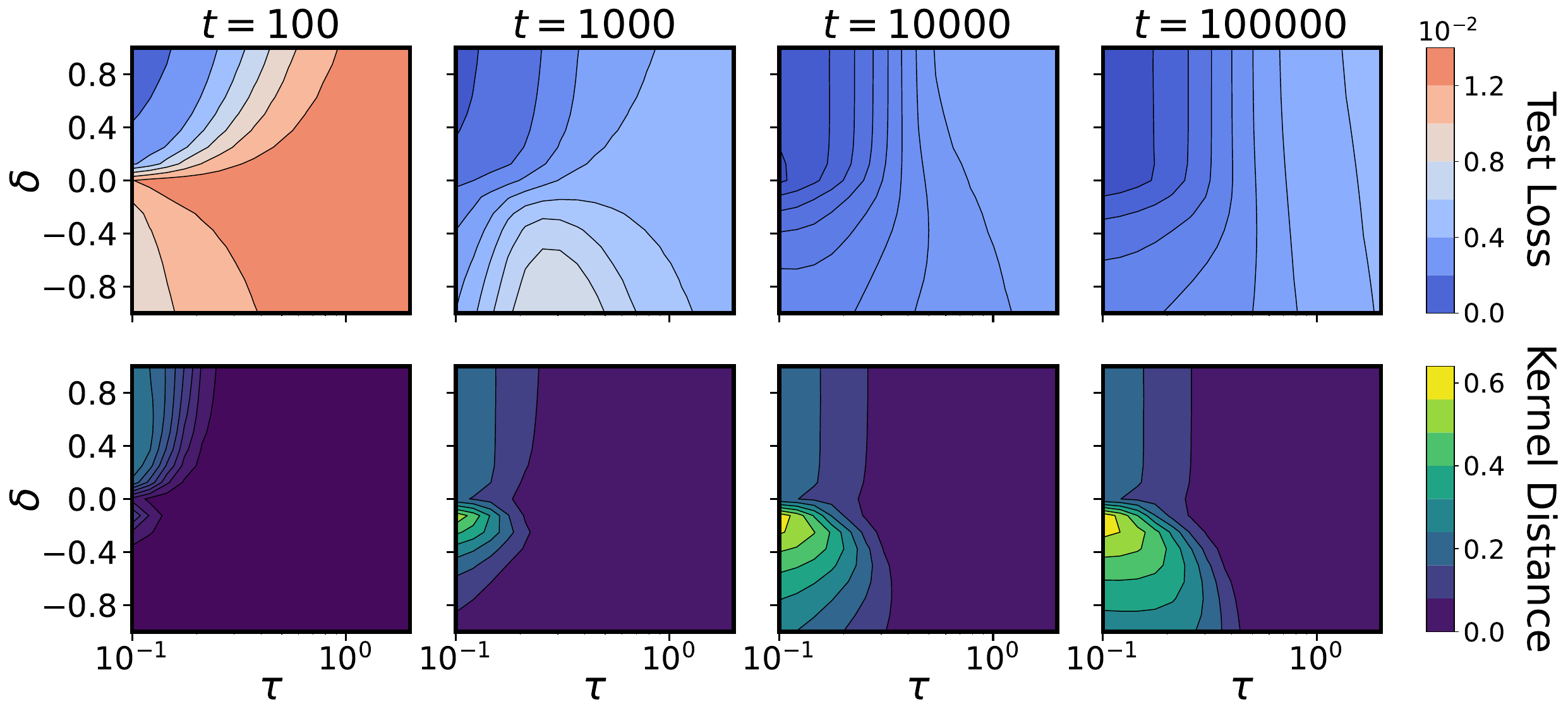}
        \caption{A complex phase portrait of feature learning}
    \end{subfigure}
    \caption{\textbf{Unbalanced initializations lead to rapid rich learning and generalization.}
    We follow the experimental setup used in Fig. 1 of \citet{chizat2019lazy} -- a wide two-layer student ReLU network $f(x;\theta) = \sum_{i=1}^h a_i \max(0,w_i^\intercal x)$ trained on a dataset generated from a narrow two-layer teacher ReLU network.
    The student parameters are initialized as $w_i \sim \text{Unif}(\mathbb{S}^{d-1}(\frac{\tau}{\alpha}))$ and $a_i = \pm\alpha\tau$, such that $\tau > 0$ controls the \emph{overall scale} of the function, while $\alpha > 0$ controls the \emph{relative scale} of the first and second layers through the conserved quantity $\delta = \tau^2 (\alpha^2 - \alpha^{-2})$.
    (a) Shows the training trajectories of $|a_i|w_i$ (color denotes $\mathrm{sgn}(a_i)$) when $d = 2$ for four different settings of $\tau, \delta$.
    The left plot confirms that small overall scale leads to rich and large overall scale to lazy.
    The right plot shows that even at small overall scale, the relative scale can move the network between rich and lazy as well.
    Here an upstream initialization $\delta > 0$ shows striking alignment to the teacher (dotted lines), while a downstream initialization $\delta < 0$ shows no alignment.
    (b) Shows the test loss and kernel distance from initialization computed through training over a sweep of $\tau$ and $\delta$ when $d=100$.
    Lazy learning happens when $\tau$ is large, rich learning happens when $\tau$ is small, and rapid rich learning happens when \emph{both} $\tau$ is small and $\delta$ is large -- an upstream initialization.
    This initialization also leads to the smallest test loss.
    See \cref{fig:two-layer-supporting} in \cref{app:experimental-details-two-layer} for supporting figures.
    }
    \label{fig:two-layer-relu}
    \vspace{-20pt}
\end{figure*}

\textbf{Our contributions.}
Our work begins with an exploration of the two-layer single-neuron linear network proposed by \citet{azulay2021implicit} as a minimal model displaying both lazy and rich learning.
In \cref{sec:single-neuron}, we derive exact solutions for the gradient flow dynamics with layer-specific learning rates of this model by employing a combination of hyperbolic and spherical coordinate transformations.
%
Alongside recent work by \citet{xu2024does}\footnote{\citet{xu2024does} presented exact NTK dynamics for a linear model trained with one-dimensional data.}, our analysis stands out as one of the few analytically tractable models for the transition between lazy and rich learning in a finite-width network, marking a notable contribution to the field.
%
Our analysis reveals that the layer-specific initialization variances and learning rates conspire to influence the learning regime through a simple set of conserved quantities that constrain the geometry of learning trajectories.
%
Additionally, it reveals that a crucial aspect of the relative scale overlooked in prior analysis is its directionality.
While a \emph{balanced initialization} results in all layers learning at similar rates, an \emph{unbalanced initialization} can cause faster learning in either earlier layers, referred to as an \emph{upstream initialization}, or later layers, referred to as a \emph{downstream initialization}. 
Due to the depth-dependent expressivity of layers in a network, upstream and downstream initializations often exhibit fundamentally distinct learning trajectories.
In \cref{sec:wide-deep-linear} we extend our analysis of the relative scale developed in the single-neuron model to more complex linear models with multiple neurons, outputs, and layers and in \cref{sec:nonlinear} to two-layer nonlinear networks with piecewise linear activation functions.
We find that in linear networks, rapid rich learning can only occur from balanced initializations, while in nonlinear networks, upstream initializations can actually accelerate rich learning.
Finally, through a series of experiments, we provide evidence that upstream initializations drive feature learning in deep finite-width networks, promote interpretability of early layers in CNNs, reduce the sample complexity of learning hierarchical data, and decrease the time to grokking in modular arithmetic.

\textbf{Notation.}
In this work, we consider a feedforward network $f(x;\theta): \mathbb{R}^d \to \mathbb{R}^c$ parameterized by $\theta \in \mathbb{R}^m$.
Unless otherwise specified, $c = 1$.
The network is trained by gradient flow $\dot{\theta} = -{\eta_{\theta}} \cdot \nabla_{\theta} \mathcal{L}(\theta)$, with an initialization $\theta_0$ and layer-specific learning rate $\eta_{\theta} \in \mathbb{R}^m_+$, to minimize the mean squared error $\mathcal{L}(\theta) = \frac{1}{2} \sum_{i=1}^n (f(x_i;\theta) - y_i)^2$ computed over a dataset $\{(x_1,y_1),\dots,(x_n,y_n)\}$ of size $n$.
We denote the input matrix as $X \in \mathbb{R}^{n \times d}$ with rows $x_i \in \mathbb{R}^d$ and the label vector as $y \in \mathbb{R}^n$.
%
%
The network's output $f(x;\theta)$ evolves according to the differential equation, $\partial_tf(x;\theta) = \sum_{i=1}^n \Theta(x,x_i;\theta)(y_i - f(x_i;\theta))$, where $\Theta(x,x';\theta) : \mathbb{R}^{d} \times \mathbb{R}^{d} \to \mathbb{R}$ is the \emph{Neural Tangent Kernel (NTK)}, defined as $\Theta(x, x'; \theta) = \sum_{p = 1}^m {\eta_{\theta}}_p \partial_{\theta_p} f(x;\theta)\partial_{\theta_p} f(x';\theta)$.
The NTK quantifies how one gradient step with data point $x'$ affects the evolution of the networks's output evaluated at another data point $x$.
When ${\eta_{\theta}}_p$ is shared by all parameters, the NTK is the kernel associated with the feature map $\nabla_\theta f(x;\theta) \in \mathbb{R}^m$.
We also define the \emph{NTK matrix} $K \in \mathbb{R}^{n \times n}$, which is computed across the training data such that $K_{ij} = \Theta(x_i, x_j;\theta)$.
The NTK matrix evolves from its initialization $K_0$ to convergence $K_\infty$ through training.
Lazy and rich learning exist on a spectrum, with the extent of this evolution serving as the distinguishing factor.
Various studies have proposed different metrics to track the evolution of the NTK matrix \cite{cortes2012algorithms, geiger2020disentangling, baratin2021implicit}.
We use \emph{kernel distance} \cite{fort2020deep}, defined as
$S(t_1,t_2) = 1 - \langle K_{t_1}, K_{t_2}\rangle / \left(\|K_{t_1}\|_F\|K_{t_2}\|_F\right)$, which is a scale invariant measure of similarity between the NTK at two times.
In the lazy regime $S(0,t) \approx 0$, while in the rich regime $0 \ll S(0,t) \le 1$.

\vspace{-5pt}
\section{Related Work}
\label{sec:related-work}
\vspace{-5pt}

%

\textbf{Linear networks.}
Significant progress in studying the rich regime has been achieved in the context of linear networks. 
In this setting, $f(x;\theta) = \beta(\theta)^\intercal x$ is linear in its input $x$, but can exhibit highly nonlinear dynamics in parameter $\theta$ and function $\beta(\theta)$ space.
Foundational work by \citet{saxe2013exact} provided exact solutions to gradient flow dynamics in linear networks with task-aligned initializations. 
They achieved this by solving a system of Bernoulli differential equations that prioritize learning the most salient features first, which can be beneficial for generalization \cite{lampinen2018analytic}.
This analysis has been extended to wide \cite{fukumizu1998effect, braun2022exact} and deep \cite{arora2018optimization, arora2019implicit, ziyin2022exact} linear networks with more flexible initialization schemes \cite{gidel2019implicit, tarmoun2021understanding, gissin2019implicit}.
It has also been applied to study the evolution of the NTK \cite{atanasov2021neural} and the influence of the scale on the transition between lazy and rich learning \cite{jacot2021saddle, xu2024does}.
In this work, we present novel exact solutions for a minimal model utilizing a mix of Bernoulli and Riccati equations to showcase a complex phase portrait of lazy and rich learning with separate alignment and fitting phases.

\textbf{Implicit bias.}
An effective analysis approach to understanding the rich regime studies how the initialization influences the inductive bias at interpolation. 
The aim is to identify a function $Q(\theta)$ such that the network converges to a first-order KKT point minimizing $Q(\theta)$ among all possible interpolating solutions.
Foundational work by \citet{soudry2018implicit} pioneered this approach for a linear classifier trained with gradient descent, revealing a max margin bias.
These findings have been extended to deep linear networks \cite{ji2018gradient, gunasekar2018implicit, moroshko2020implicit}, homogeneous networks \cite{lyu2019gradient, nacson2019lexicographic, chizat2020implicit}, and quasi-homogeneous networks \cite{kunin2022asymmetric}.
A similar line of research expresses the learning dynamics of networks trained with mean squared error as a \emph{mirror flow} for some potential $\Phi(\beta)$, such that the inductive bias can be expressed as a \emph{Bregman divergence} \cite{gunasekar2018characterizing}.
This approach has been applied to diagonal linear networks, revealing an inductive bias that interpolates between $\ell^1$ and $\ell^2$ norms in the rich and lazy regimes respectively \cite{woodworth2020kernel}.
However, finding the potential $\Phi(\beta)$ is problem-specific and requires solving a second-order differential equation, which may not be solvable even in simple settings \cite{gunasekar2021mirrorless, li2022implicit}.
\citet{azulay2021implicit} extended this analysis to a time-warped mirror flow, enabling the study of a broader class of architectures.
In this work we derive exact expressions for the inductive bias of our minimal model and extend the results in \citet{azulay2021implicit} to wide and deep linear networks.

\textbf{Two-layer networks.}
Two-layer, or single-hidden layer, piecewise linear networks have emerged as a key setting for advancing our understanding of the rich regime.
\citet{maennel2018gradient} observed that in training two-layer ReLU networks from small initializations, the first-layer weights concentrate along fixed directions determined by the training data, irrespective of network width.
This phenomenon, termed \emph{quantization}, has been proposed as a \emph{simplicity bias} inherent to the rich regime, driving the network towards low-rank solutions when feasible. 
Subsequent studies have aimed to precisely elucidate this effect by introducing structural constraints on the training data \cite{phuong2020inductive, lyu2021gradient, boursier2022gradient, wang2022early, min2023early, wang2024understanding}.
Across these analyses, a consistent observation is that the learning dynamics involve distinct phases: an initial alignment phase characterized by quantization, followed by fitting phases where the task is learned. 
All of these studies assumed a balanced (or nearly balanced) initialization between the first and second layer.
In this study, we explore how unbalanced initializations influence the phases of learning, demonstrating that it can eliminate or augment the quantization effect.

\textbf{Infinite-width networks.}
Many recent advancements in understanding the rich regime have come from studying how the initialization variance and layer-wise learning rates should scale in the infinite-width limit to ensure constant movement in the activations, gradients, and outputs.
In this limit, analyzing dynamics becomes simpler in several respects: random variables concentrate and quantities will either vanish to zero, remain constant, or diverge to infinity \cite{luo2021phase}.
A set of works used tools from statistical mechanics to provide analytic solutions for the rich population dynamics of two-layer nonlinear neural networks initialized according to the \emph{mean field} parameterization \cite{mei2018mean, chizat2018global, sirignano2020mean, rotskoff2022trainability}.
These ideas were extended to deeper networks through a \emph{tensor program} framework, leading to the derivation of \emph{maximal update parametrization} ($\mu\mathrm{P}$) \cite{yang2020feature, yang2022tensor}.
The $\mu\mathrm{P}$ parameterization has also been derived through a self-consistent dynamical mean field theory \cite{bordelon2022self} and a spectral scaling analysis \cite{yang2023spectral}.
In this study, we focus on finite-width neural networks, but discuss the connection between our work and these width-dependent parameterizations in \cref{sec:nonlinear}.



\begin{wrapfigure}{R}{0.58\textwidth}
    \vspace{-10pt}
    \begin{subfigure}{0.325\linewidth}
        \centering
        \includegraphics[width=\linewidth]{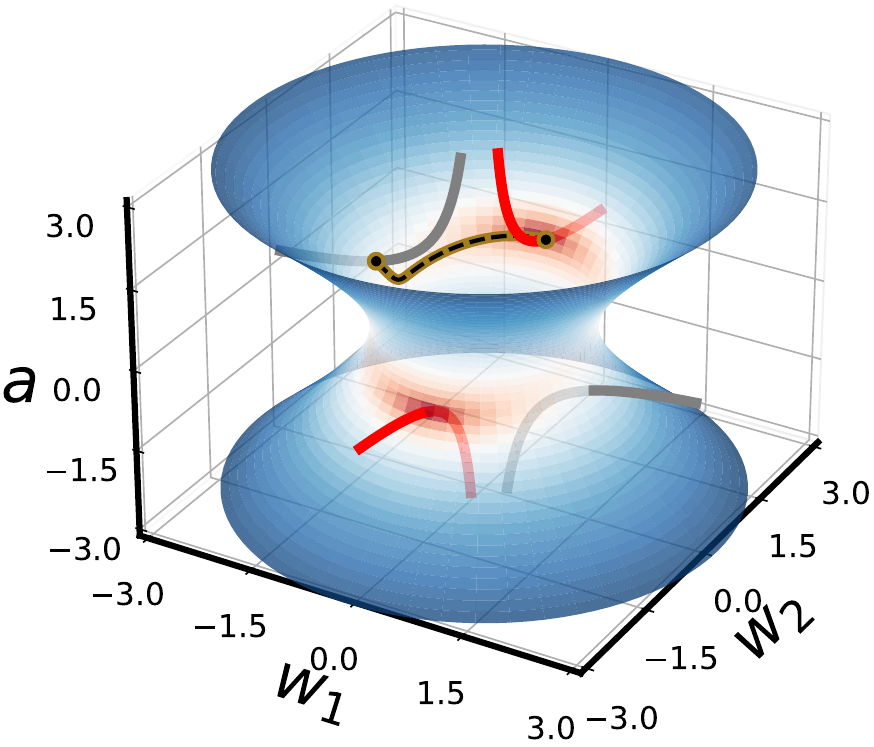}
        \caption{$\delta = -2$}
    \end{subfigure}
    \begin{subfigure}{0.325\linewidth}
        \centering
        \includegraphics[width=\linewidth]{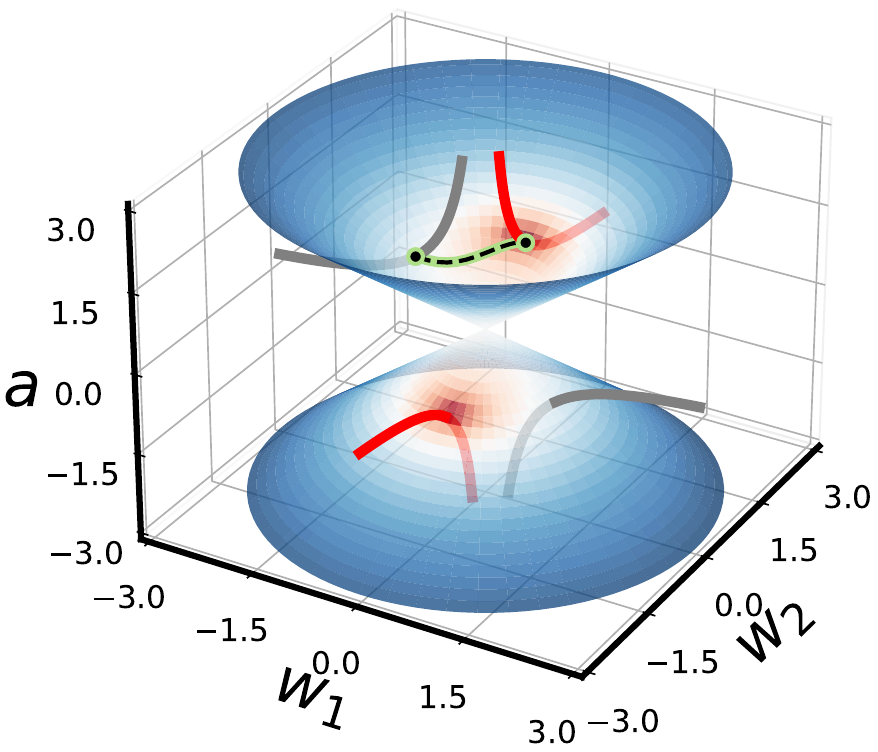}
        \caption{$\delta = 0$}
    \end{subfigure}
    \begin{subfigure}{0.325\linewidth}
        \centering
        \includegraphics[width=\linewidth]{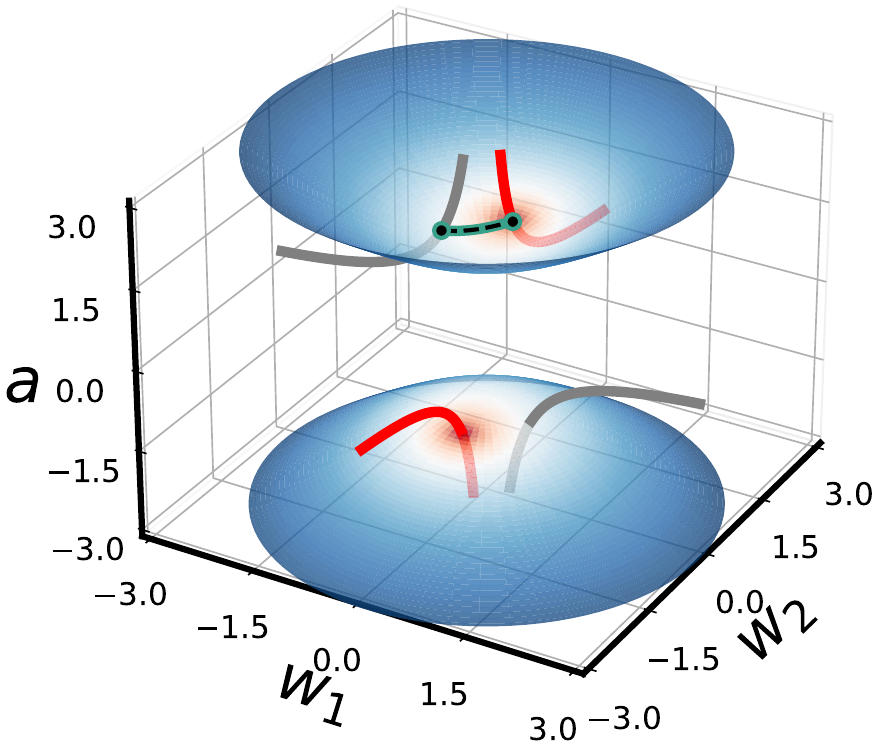}
        \caption{$\delta = 2$}
    \end{subfigure}
    \caption{\textbf{Balance determines geometry of trajectory.}
    The quantity $\delta = \eta_wa^2 - \eta_a\|w\|^2$ is conserved through gradient flow, which constrains the trajectory to: (a) a one-sheeted hyperboloid for downstream initializations, (b) a double cone for balanced initializations, and (c) a two-sheeted hyperboloid for upstream initializations.
    Gradient flow dynamics for three different initializations $a_0, w_0$ with the same product $\beta_0 = a_0w_0$ are shown.
    The minima manifold is shown in red and the manifold of equivalent $\beta_0$ initializations in gray.
    The surface is colored according to training loss, with blue representing higher loss and red representing lower loss.
    }
    \vspace{-8pt}
    \label{fig:single-neuron}
\end{wrapfigure}
\vspace{-10pt}
\section{A Minimal Model of Lazy and Rich Learning with Exact Solutions}
\label{sec:single-neuron}
\vspace{-5pt}

Here we explore an illustrative setting simple enough to admit exact gradient flow dynamics, yet complex enough to showcase lazy and rich learning regimes.
We study a two-layer linear network with a single hidden neuron defined by the map $f(x;\theta) = a w^\intercal x$ where $a \in \mathbb{R}$, $w \in \mathbb{R}^d$ are the parameters.
We examine how the parameter initializations $a_0, w_0$ and the layer-wise learning rates $\eta_{a}, \eta_{w}$ influence the training trajectory in parameter space, function space (defined by the product $\beta = aw$), and the evolution of the the NTK matrix,
\begin{equation}
    \label{eq:single-neuron-NTK-matrix}
    K = X \left(\eta_wa^2 \mathbf{I}_d + \eta_aww^\intercal\right)X^\intercal.
\end{equation}
Except for a measure zero set of initializations which converge to saddle points\footnote{The set of saddle points $\{(a,w)\}$ is the $d-1$ dimensional subspace satisfying $a = 0$ and $w^\intercal X^\intercal y = 0$.}, all gradient flow trajectories will converge to a global minimum, determined by the normal equations $X^\intercal X aw = X^\intercal y$.
However, even when $X^\intercal X$ is invertible such that the global minimum $\beta_*$ is unique, the rescaling symmetry between $a$ and $w$ results in a manifold of minima in parameter space.
The minima manifold is a one-dimensional hyperbola where $w \propto \beta_*$ and has two distinct branches for positive and negative $a$.
The symmetry also imposes a constraint on the network's trajectory, maintaining the difference $\delta = \eta_wa^2 - \eta_a\|w\|^2 \in \mathbb{R}$ throughout training (see \cref{app:single-neuron-conserved} for details).
This confines the parameter dynamics to the surface of a hyperboloid where the magnitude and sign of the conserved quantity determines the geometry, as shown in \cref{fig:single-neuron}.
An upstream initialization occurs when $\delta > 0$, a balanced initialization when $\delta = 0$, and a downstream initialization when $\delta < 0$.

\textbf{Deriving exact solutions in parameter space.}
We initially assume\footnote{We relax this assumption when considering the dynamics of $\beta$ in function space and their implicit bias.} whitened input $X^\intercal X = \mathbf{I}_d$ such that the ordinary least squares solution is $\beta_* = X^\intercal y$, and the gradient flow dynamics simplify to
$\dot{a} = \eta_a \left(w^\intercal \beta_* - a \|w\|^2\right)$, $\dot{w} = \eta_w \left(a\beta_* - a^2  w\right)$.
Notice that $w(t) \in \mathrm{span}(\{w_0, \beta_*\})$, and through training, $w$ aligns in direction to $\pm \beta_*$ depending on the basin of attraction\footnote{The basin is given by $\mathrm{sgn}(a_0)$ for $\delta \ge 0$ or $\mathrm{sgn}(w_0^\intercal \beta_* + \frac{a_0}{2} (\delta + \sqrt{\delta^2 + 4 \|\beta_*\|^2}))$ for $\delta < 0$. See \ref{app:single-neuron-basins}.} the parameters are initialized in.
Therefore, we can monitor the dynamics by tracking the hyperbolic geometry between $a$ and $\|w(t)\|$ and the spherical angle between $w(t)$ and $\beta_*$. 
We study the variables $\mu = a \|w\|$, an invariant under the rescale symmetry, and $\phi = \frac{w^\intercal \beta_*}{\|w\|\|\beta_*\|}$, the cosine of the spherical angle.
From these two scalar quantities $\mu(t), \phi(t)$ and the initialization $a_0, w_0$, we can determine the trajectory $a(t)$ and $w(t)$ in parameter space.
The dynamics for $\mu, \phi$ are given by the coupled nonlinear ODEs,
\begin{equation}
    \label{eq:single-neuron-transformed-dynamics}
    \dot{\mu} = \sqrt{\delta^2 + 4\eta_a\eta_w\mu^2}\left(\phi\|\beta_*\| - \mu\right),\qquad
    \dot{\phi} =  \frac{\eta_a\eta_w2\mu\|\beta_*\|}{\sqrt{\delta^2 + 4\eta_a\eta_w\mu^2} - \delta}\left(1 - \phi^2\right).
\end{equation}
Amazingly, this system can be solved exactly, as discussed in \cref{app:single-neuron-exact-solutions}, and shown in \cref{fig:single-neuron-exact-dynamics}.
Without delving into the specifics, we can develop an intuitive understanding of the solutions by examining the influence of the relative scale $\delta$.

\begin{wrapfigure}{R}{0.63\textwidth}
    \vspace{-14pt}
    \begin{subfigure}{0.325\linewidth}
        \centering
        \includegraphics[width=\linewidth]{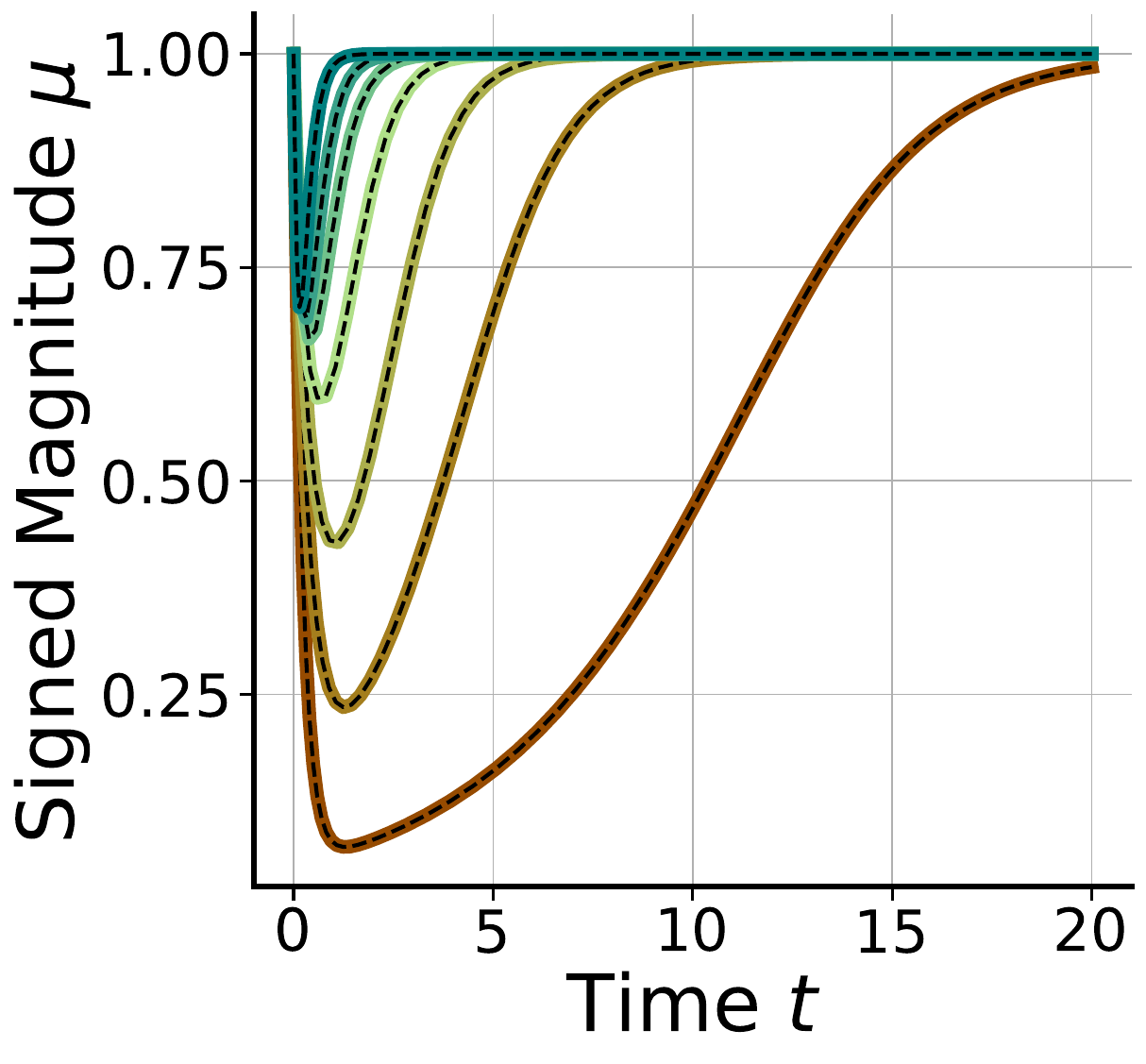}
        \caption{$\mu(t)$}
    \end{subfigure}
    \begin{subfigure}{0.325\linewidth}
        \centering
        \includegraphics[width=\linewidth]{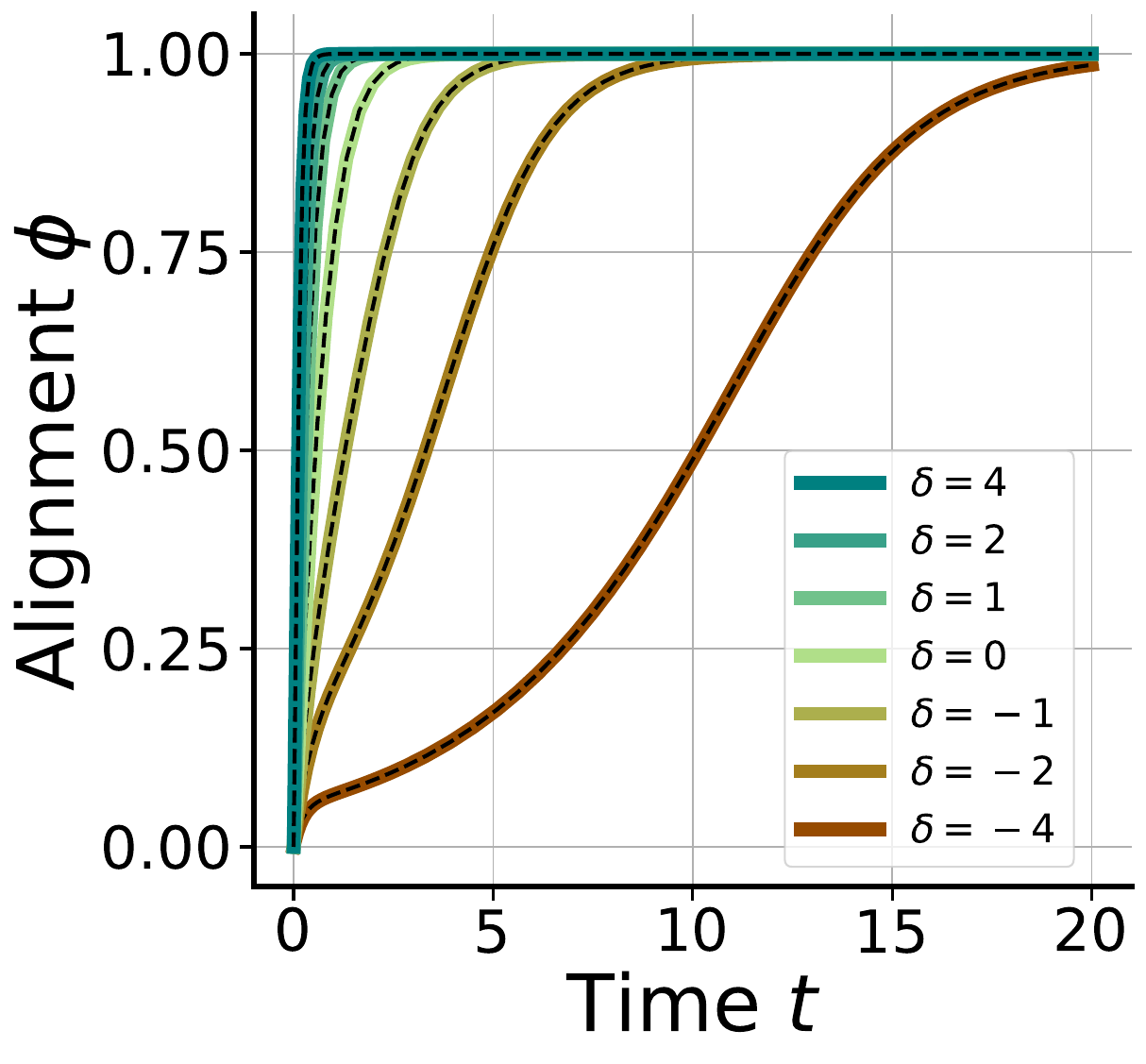}
        \caption{$\phi(t)$}
    \end{subfigure}
    \begin{subfigure}{0.325\linewidth}
        \centering
        \includegraphics[width=\linewidth]{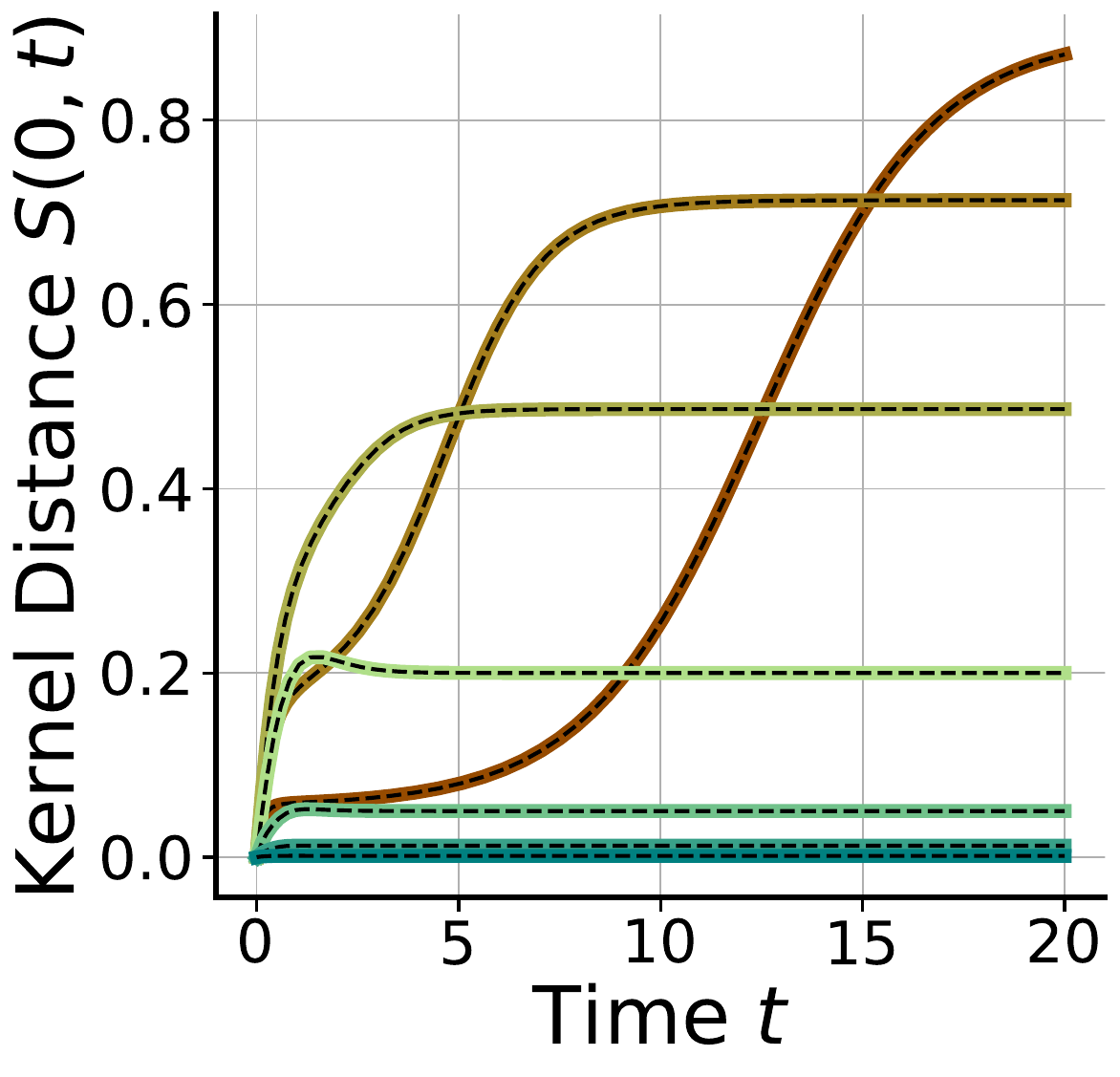}
        \caption{$S(0,t)$}
    \end{subfigure}
    \caption{
    \textbf{Exact solutions for the single hidden neuron model.}
    Our theoretical predictions (black dashed lines) agree with gradient flow simulations (solid lines, color-coded based on $\delta$ values), shown here for three key metrics: $\mu$ (left), $\phi$ (middle), and $S(0,t)$ (right).
    Each metric starts at the same value for all $\delta$, but varying $\delta$ has a pronounced effect on the metric's dynamics.
    For upstream initializations ($\delta \gg 0$), $\mu$ changes only slightly, $\phi$ exponentially aligns, and $S$ remains near zero, indicative of the lazy regime.
    For balanced initializations ($\delta = 0$), both $\mu$ and $\phi$ change significantly and $S$ quickly moves away from zero, indicative of the rich regime.
    For downstream initializations ($\delta \ll 0$), $\mu$ quickly drops to zero, then $\mu$ and $\phi$ slowly climb back to one. 
    Similarly, $S$ remains small before a sudden transition towards one, indicative of a delayed rich regime.
     See \cref{app:single-neuron-exact-solutions} for further details.
    }
    \vspace{-18pt}
    \label{fig:single-neuron-exact-dynamics}
\end{wrapfigure}

\emph{\textbf{Upstream.}}
When $\delta \gg 0$, the updates for both $\mu$ and $\phi$ diverge, but $\phi$ updates much more rapidly.
We can decouple the dynamics of $\mu$ and $\phi$ by separation of their time scales and assume $\phi$ has reached its steady-state of $\pm 1$ before $\mu$ has updated.
Then, the dynamics of $\mu$ is linear and proceeds exponentially to $\pm \|\beta_*\|$.
This regime exhibits minimal kernel movement (see \cref{fig:single-neuron-exact-dynamics} (c)) because the kernel is dominated by the $\eta_wa^2\mathbf{I}_d$ term, whereas it is mainly $w$ that updates.

\emph{\textbf{Balanced.}}
When $\delta = 0$, $\mu$ follows a Bernoulli differential equation driven by a time-dependent signal $\phi \|\beta_*\|$, and $\phi$ follows a Riccati equation evolving from an initial value to $\pm 1$ depending on the basin of attraction.
For vanishing initialization $\|\beta_0\| \to 0$, the temporal dynamics of $\mu$ and $\phi$ decouple such that there are two phases of learning: 
an initial alignment phase where $\phi \to \pm 1$, followed by a fitting phase where $\mu \to \pm\|\beta_*\|$.
In the first phase, $w$ aligns to $\beta_*$ resulting in a rank-one update to the NTK, identical to the silent alignment effect described in \citet{atanasov2021neural}.
In the second phase, the dynamics of $\mu$ simplify to the Bernoulli equation studied in \citet{saxe2013exact} and the kernel evolves solely in overall scale.

\emph{\textbf{Downstream.}}
When $\delta \ll 0$, the updates for $\mu$ diverge, while the updates for $\phi$ vanishes.
In this regime the dynamics proceed by an initial fast phase where $\mu$ converges exponentially to its steady state of $\phi\|\beta_*\|$.
Plugging this steady state into the dynamics of $\phi$ gives a Bernoulli differential equation $\dot{\phi} = \eta_a\eta_w\|\beta_*\|^2|\delta|^{-1}\phi(1 - \phi^2)$.
Due to the coefficient $|\delta|^{-1}$, the second alignment phase proceeds very slowly as $\phi$ approaches $\pm 1$, assuming $\phi, \mu \neq 0$, which is a saddle point.
In this regime, the dynamics proceed by an initial lazy fitting phase, followed by a rich alignment phase, where the delay is determined by the magnitude of $\delta$.


\begin{wrapfigure}{R}{0.55\textwidth}
    \vspace{-15pt}
    \begin{subfigure}{0.49\linewidth}
        \centering
        \includegraphics[width=\linewidth]{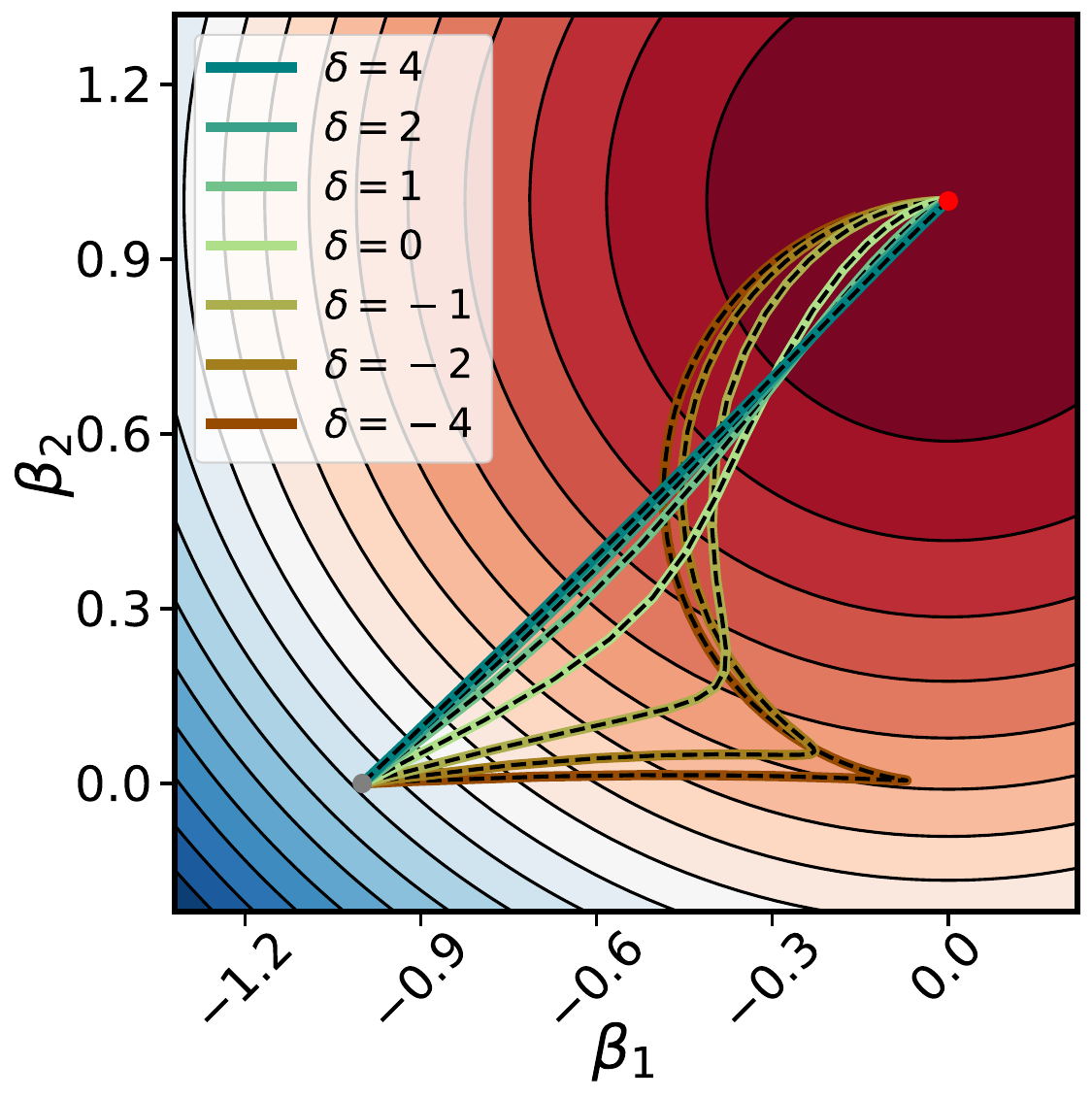}
        \caption{Whitened $X^\intercal X$}
    \end{subfigure}
    \begin{subfigure}{0.49\linewidth}
        \centering
        \includegraphics[width=\linewidth]{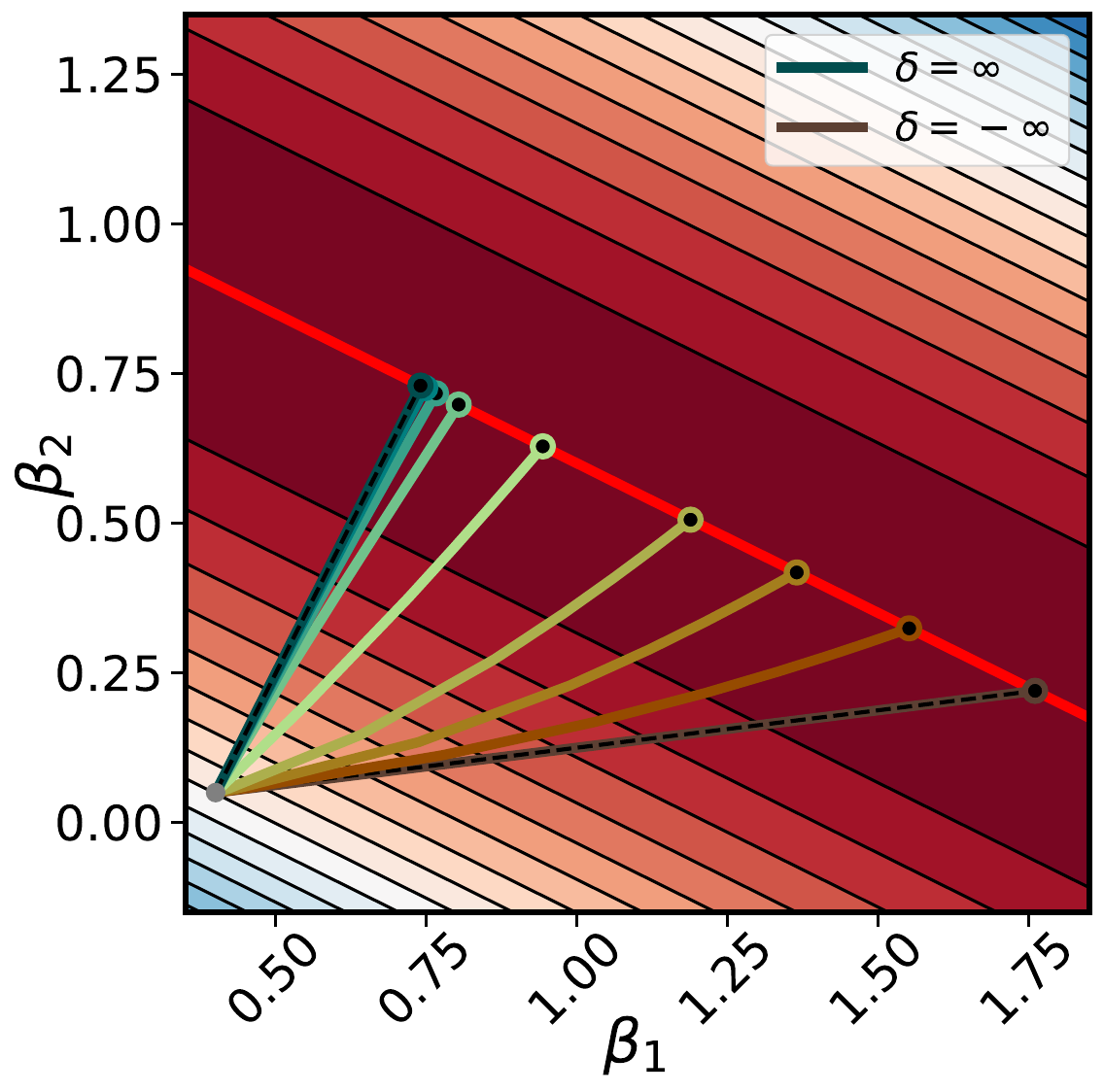}
        \caption{Low-Rank $X^\intercal X$}
    \end{subfigure}
    \caption{\textbf{Balance modulates $\beta$ dynamics and implicit bias.}
    Here we show the dynamics of $\beta = a w$ with different values of $\delta$, but the same initial $\beta_0$.
    When $X^\intercal X$ is whitened (left), we can solve for the dynamics exactly using our expressions for $\mu, \phi$ (black dashed lines).
    Upstream initializations follow the trajectory of gradient flow on $\beta$, downstream initializations first move in the direction of $\beta_0$ before sweeping around towards $\beta_*$, and balanced initializations take an intermediate trajectory between these two.
    When $X^\intercal X$ is low-rank (right), then we can only predict the trajectories in the limit of $\delta = \pm \infty$.
    If the interpolating manifold is one-dimensional, then we can solve for the solution in terms of $\delta$ exactly (black dots).
    See \cref{app:single-neuron-inductive-bias} for details.
    }
    \vspace{-15pt}
    \label{fig:single-neuron-beta}
\end{wrapfigure}
\textbf{Identifying regimes of learning in function space.}
Here we take an alternative route towards understanding the influence of the relative scale by directly examining the dynamics in function space, an analysis strategy we will generalize to broader setups in~\cref{sec:wide-deep-linear,sec:nonlinear}.
The network's function is determined by the product $\beta = aw$ and governed by the ODE,
\begin{equation}
    \label{eq:beta_c_1_k_1}
    \dot{\beta} = -\underbrace{\left(\eta_w a^2\mathbf{I}_d + \eta_a w w^\intercal\right)}_{M} X^\intercal \rho,
\end{equation}
where $\rho = X \beta - y$ is the residual.
These dynamics can be interpreted as preconditioned gradient flow on the loss in function space where the preconditioning matrix $M$ depends on time through its dependence on $a^2$ and $ww^\intercal$.
Whenever $\|\beta\| \neq 0$, we can express $M$ directly in terms of $\beta$ and $\delta$ as
\begin{equation}
    \label{eq:M_c=1}
    M = \frac{\kappa + \delta}{2}\mathbf{I}_d + \frac{\kappa - \delta}{2}\frac{\beta\beta^\intercal}{\|\beta\|^2},
\end{equation}
where $\kappa = \sqrt{\delta^2 + 4\eta_a\eta_w\|\beta\|^2}$ (see \cref{app:single-neuron-beta} for a derivation).
This establishes a \emph{self-consistent} equation for the dynamics of $\beta$ regulated by $\delta$.
Additionally, notice that $M$ characterizes the NTK matrix \cref{eq:single-neuron-NTK-matrix}. 
Thus, understanding the evolution of $M$ along the trajectory $\beta_0$ to $\beta_*$ offers a method to discern between lazy and rich learning.
\emph{\textbf{Upstream.}} When $\delta \gg 0$, $M \approx \delta \mathbf{I}_d$, and the dynamics of $\beta$ converge to the trajectory of linear regression trained by gradient flow.
Along this trajectory the NTK matrix remains constant, confirming the dynamics are lazy.
\emph{\textbf{Balanced.}} When $\delta = 0$, $M = \sqrt{\eta_a\eta_w}\|\beta\|(\mathbf{I}_d + \tfrac{\beta\beta^\intercal}{\|\beta\|^2})$. 
Here the dynamics balance between following the lazy trajectory and attempting to fit the task by only changing in norm.
As a result the NTK changes in both magnitude and direction through training, confirming the dynamics are rich.
\emph{\textbf{Downstream.}} When $\delta \ll 0$, $M \approx |\delta| \tfrac{\beta\beta^\intercal}{\|\beta\|^2}$, and $\beta$ follows a projected gradient descent trajectory, attempting to reach $\beta_*$ in the direction of $\beta_0$.
Along this trajectory the NTK matrix doesn't evolve.
However, if $\beta_0$ is not aligned to $\beta_*$, then at some point the dynamics of $\beta$ will slowly align.
In this second alignment phase the NTK matrix will change, confirming the dynamics are initially lazy followed by a delayed rich phase. 
See \cref{app:single-neuron-kernel} for a derivation of the NTK dynamics $\dot{K}$.

\textbf{Determining the implicit bias via mirror flow.}
So far we have considered whitened or full rank $X^\intercal X$, ensuring the existence of a unique least squares solution $\beta_*$.
In this setting, $\delta$ influences the trajectory the model takes from $\beta_0$ to $\beta_*$, as shown in \cref{fig:single-neuron-beta} (a).
Now we consider low-rank $X^\intercal X$, such that there exist infinitely many interpolating solutions in function space.
By studying the structure of $M$, we can characterize how $\delta$ determines the interpolating solution the dynamics converge to.
Extending a time-warped mirror flow analysis strategy pioneered by \citet{azulay2021implicit} to allow $\delta < 0$ (see \cref{app:single-neuron-inductive-bias} for details), we prove the following theorem, which shows a tradeoff between reaching the minimum norm solution and preserving the direction of the initialization $\beta_0$.
\begin{theorem} [Extending Theorem 2 in \citet{azulay2021implicit}]
    \label{thrm:single-neuron-implicit-bias}
    For a single hidden neuron linear network, for any $\delta \in \mathbb{R}$, and initialization $\beta_0$ such that $\beta(t) \neq 0$ for all $t \ge 0$, if the gradient flow solution $\beta(\infty)$ satisfies $X \beta(\infty) = y$, then,
    \begin{equation}
        \beta(\infty) = \argmin_{\beta \in \mathbb{R}^d} \Psi_\delta(\beta) - \psi_\delta \tfrac{\beta_0}{\|\beta_0\|}^\intercal \beta \quad \mathrm{s.t.} \quad X \beta = y
    \end{equation}
    where $\Psi_\delta(\beta) = \frac{1}{3}\left(\sqrt{\delta^2 + 4\|\beta\|^2} - 2\delta\right)\sqrt{\sqrt{\delta^2 + 4\|\beta\|^2} + \delta}$ and $\psi_\delta = \sqrt{\sqrt{\delta^2 + 4\|\beta_0\|^2} - \delta}$.
\end{theorem}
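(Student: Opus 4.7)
The plan is to recognize the $\beta$-dynamics $\dot\beta = -MX^\intercal \rho$ as a \emph{time-warped mirror flow} for some convex potential $\Phi(\beta)$, and then read off the implicit bias from the corresponding KKT conditions. Concretely, I would look for a scalar potential $\Phi$ and a positive time-warp $h(\beta)$ satisfying $\nabla^2 \Phi(\beta) = h(\beta)\, M(\beta)^{-1}$. Once such a pair is identified, the chain rule gives $\tfrac{d}{dt} \nabla \Phi(\beta(t)) = \nabla^2\Phi(\beta)\dot\beta = -h(\beta)\, X^\intercal \rho(t)$, which lies in the range of $X^\intercal$. Integrating from $0$ to $\infty$ yields $\nabla \Phi(\beta(\infty)) - \nabla\Phi(\beta_0) = X^\intercal \nu$ for some $\nu \in \mathbb{R}^n$, which together with $X\beta(\infty) = y$ is exactly the KKT system for the convex program $\min_\beta \Phi(\beta) - \nabla\Phi(\beta_0)^\intercal \beta$ subject to $X\beta = y$.

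\textbf{Key steps.} First I would invert $M$. Since $M$ has eigenvalue $\kappa$ along $\hat\beta := \beta/\|\beta\|$ and eigenvalue $(\kappa+\delta)/2$ on the orthogonal complement, one gets $M^{-1} = \tfrac{2}{\kappa+\delta}(\mathbf{I}_d - \hat\beta\hat\beta^\intercal) + \tfrac{1}{\kappa}\hat\beta\hat\beta^\intercal$. Motivated by the rotational symmetry of $M^{-1}$, I would posit the radial ansatz $\Phi(\beta) = g(\|\beta\|)$ with $h = h(\|\beta\|)$. Computing $\nabla^2 \Phi = \tfrac{g'(r)}{r}(\mathbf{I}_d - \hat\beta\hat\beta^\intercal) + g''(r)\hat\beta\hat\beta^\intercal$ and matching it eigen-by-eigen against $h\,M^{-1}$ yields the two requirements $g'(r)/r = 2h/(\kappa+\delta)$ and $g''(r) = h/\kappa$. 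Differentiating the first and equating with the second is a compatibility condition on $h$, and I expect this to be the crux of the proof: after simplification using $(\kappa+\delta)(\kappa-\delta) = 4r^2$, it reduces to the separable ODE $h'/h = 2r / [\kappa(\kappa+\delta)]$, which, under the substitution $u = \kappa + \delta$ (so that $du/dr = 4r/\kappa$), integrates to $h(r) = \sqrt{\kappa+\delta}$ up to an irrelevant multiplicative constant.

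\textbf{Closing the argument.} With $h$ determined, $g'(r) = 2r/\sqrt{\kappa+\delta}$, which simplifies to $\sqrt{\kappa-\delta}$ again via $(\kappa+\delta)(\kappa-\delta) = 4r^2$. Evaluating at $r = \|\beta_0\|$ immediately gives $\nabla\Phi(\beta_0) = \sqrt{\kappa_0 - \delta}\,\hat\beta_0 = \psi_\delta\,\tfrac{\beta_0}{\|\beta_0\|}$, matching the linear term in the theorem. Another application of the same substitution integrates $g'$ to $g(r) = \tfrac{1}{3}(\kappa - 2\delta)\sqrt{\kappa+\delta}$, which is precisely $\Psi_\delta(\beta)$. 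Finally, the two matching conditions show that the eigenvalues of $\nabla^2 \Psi_\delta$ are $2/\sqrt{\kappa+\delta}$ and $\sqrt{\kappa+\delta}/\kappa$, both strictly positive whenever $\beta \neq 0$, so $\Psi_\delta$ is strictly convex on $\mathbb{R}^d \setminus \{0\}$ and the KKT condition derived from the mirror-flow integration promotes to a unique global minimum of the stated constrained program. The hypothesis $\beta(t) \neq 0$ for all $t \ge 0$ is used exactly here: it guarantees that the trajectory stays in the region where $\Psi_\delta$ is $C^2$ and strictly convex and where $\kappa + \delta > 0$ so that the time-warp is well-defined, a subtlety that does not arise when restricting to $\delta \ge 0$ as in \citet{azulay2021implicit}.
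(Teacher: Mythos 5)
Your proof is correct, and you reach the exact same potential $\Psi_\delta$ and time warp $h = \sqrt{\kappa+\delta}$ as the paper, but by a somewhat more streamlined route. The paper's proof (Appendix A.4.2) inverts $M$ via Sherman--Morrison, then invokes the general integrability criterion for a symmetric matrix field to be a Hessian map ($\partial H_{ij}/\partial\beta_k = \partial H_{ik}/\partial\beta_j$ for all $i,j,k$), shows that $M^{-1}$ fails this test, and then solves for the scalar $g_\delta(\|\beta\|^2)$ that restores integrability before finally extracting the potential. You instead exploit the rotational symmetry of $M^{-1}$ from the start: posit a radial ansatz $\Phi = g(r)$ and time warp $h(r)$, match the two distinct eigenvalues of $\nabla^2\Phi$ against those of $h M^{-1}$, and read the time warp off the compatibility ODE that forces the two resulting expressions for $g''$ to agree. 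Both derivations are correct; yours is shorter and avoids the entry-wise integrability bookkeeping, at the cost of assuming rather than deriving that the potential is radially symmetric. That assumption is safe here since $M^{-1}$ depends on $\beta$ only through $\|\beta\|$ and $\hat\beta\hat\beta^\intercal$, but in a setting without that structure the paper's approach would be the one that generalizes.

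Two small points worth stating explicitly if you expand this into a full proof. First, as in the paper's time-warped mirror flow framework, the integration $\nabla\Phi(\beta(\infty)) - \nabla\Phi(\beta_0) = X^\intercal\nu$ with $\nu = \int_0^\infty h(\beta(t))\rho(t)\,dt$ requires this integral to be finite; this is assumed, not proven, in both your argument and the paper's. Second, strict convexity on $\mathbb{R}^d\setminus\{0\}$ alone is not quite enough to promote the KKT point to a unique global minimizer over the feasible affine set, since that set may approach the origin even if $\beta(\infty) \ne 0$; one also wants (ordinary) convexity of $\Psi_\delta$ at the origin, which holds ($\Psi_\delta(\beta) \sim \sqrt{2|\delta|}\|\beta\|$ near $0$ for $\delta < 0$, and is $C^1$ with nonnegative second radial derivative for $\delta \ge 0$), but is worth a sentence.
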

We observe that for vanishing initializations there is functionally no difference between the inductive bias of the upstream ($\delta \gg 0$) and balanced ($\delta = 0$) settings.
However, in the downstream setting ($\delta \ll 0$), it is the second term preserving the direction of the initialization that dominates the inductive bias.
This tradeoff in inductive bias as a function of $\delta$ is presented in \cref{fig:single-neuron-beta} (b), where if the null space of $X^\intercal X$ is one-dimensional, we can solve for $\beta(\infty)$ in closed form (see \cref{app:single-neuron-inductive-bias}).

%
%
%

\vspace{-5pt}
\section{Wide and Deep Linear Networks}
\label{sec:wide-deep-linear}
\vspace{-5pt}

We now show how the analysis techniques used to study the influence of relative scale in the single-neuron setting can be applied to linear networks with multiple neurons, outputs, and layers.

\textbf{Wide linear networks.}
We consider the dynamics of a two-layer linear network with $h$ hidden neurons and $c$ outputs, $f(x;\theta) = A^\intercal W x$, where $W \in \mathbb{R}^{h \times d}$ and $A \in \mathbb{R}^{h \times c}$.
We assume $h \ge \min(d,c)$, such that this parameterization can represent all linear maps from $\mathbb{R}^d \to \mathbb{R}^c$.
The rescaling symmetry between $A$ and $W$ implies the $h \times h$ matrix $\Delta =  \eta_wA_0A_0^\intercal - \eta_aW_0W_0^\intercal$ is conserved throughout gradient flow \cite{du2018algorithmic}.
Drawing insights from our analysis of the single-neuron scenario ($h = c = 1$), 
we consider the dynamics of $\beta = W^\intercal A \in \mathbb{R}^{d \times c}$,
\begin{equation}
    \mathrm{vec}\left(\dot{\beta}\right) = -\underbrace{\left(\eta_wA^{\intercal}A \oplus \eta_aW^{\intercal}W\right)}_{M} \mathrm{vec}(X^\intercal X\beta - X^\intercal Y),
\end{equation}
where $\mathrm{vec}(\cdot)$ denotes the vectorization operator and $\oplus$ denotes the Kronecker sum\footnote{The Kronecker sum is defined for square matrices $C \in \mathbb{R}^{c \times c}$ and $D \in \mathbb{R}^{d \times d}$ as $C \oplus D = C \otimes \mathbf{I}_{d} + \mathbf{I}_{c} \otimes D$.}.
As in the single-neuron setting, we find that the dynamics of $\beta$ are preconditioned by a matrix $M$ that depends on quadratics of $A$ and $W$ and characterizes the NTK matrix $K = \left(\mathbf{I}_{c} \otimes X\right)M\left(\mathbf{I}_{c} \otimes X^\intercal\right)$.
We now show how $M$ can be expressed\footnote{When $h = c = 1$ we can recover \cref{eq:M_c=1} presented in the single-neuron setting directly from \cref{eq:Mi}.} in terms of the rank-1 matrices $\beta_k = w_ka_k^\intercal \in \mathbb{R}^{d \times c}$, which represent the contribution to $\beta$ of a single neuron with parameters $w_k$, $a_k$ and conserved quantity $\delta_k = \Delta_{kk}$.
%
%

\begin{theorem}
    \label{thrm:vec_beta}
    Whenever $\|\beta_k\|_F \neq 0$ for all $k \in [h]$, the matrix $M$ can be expressed as the sum $M = \sum_{k = 1}^h M_k$ over hidden neurons where $M_k$ is defined as,
    \begin{equation}
    \label{eq:Mi}
        M_k = \left(\frac{\sqrt{\delta_k^2 + 4\eta_a\eta_w\|\beta_k\|_F^2} + \delta_k}{2}\right)\frac{\beta_k^\intercal\beta_k}{\|\beta_k\|_F^2} \oplus \left(\frac{\sqrt{\delta_k^2 + 4\eta_a\eta_w\|\beta_k\|_F^2} - \delta_k}{2}\right)\frac{\beta_k\beta_k^\intercal}{\|\beta_k\|_F^2}.
    \end{equation}
\end{theorem}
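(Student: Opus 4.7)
The plan is to reduce the theorem to three short steps: an additive split of $M$ over hidden neurons, rewriting each summand in terms of $\beta_k$, and solving a small quadratic to eliminate $\|w_k\|^2$ and $\|a_k\|^2$ in favor of $\delta_k$ and $\|\beta_k\|_F$. First, I would use $A^\intercal A = \sum_{k=1}^h a_k a_k^\intercal$ and $W^\intercal W = \sum_{k=1}^h w_k w_k^\intercal$, together with the linearity of the Kronecker sum, to write
\begin{equation*}
    M \;=\; \eta_w A^\intercal A \oplus \eta_a W^\intercal W \;=\; \sum_{k=1}^h \big(\eta_w a_k a_k^\intercal \oplus \eta_a w_k w_k^\intercal\big) \;=:\; \sum_{k=1}^h M_k.
\end{equation*}
This reduces the problem to identifying each $M_k$.

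Second, I would rewrite the per-neuron rank-one blocks in terms of $\beta_k = w_k a_k^\intercal$. A direct computation gives $\beta_k^\intercal \beta_k = \|w_k\|^2 \, a_k a_k^\intercal$, $\beta_k \beta_k^\intercal = \|a_k\|^2 \, w_k w_k^\intercal$, and $\|\beta_k\|_F^2 = \|w_k\|^2 \|a_k\|^2$. The hypothesis $\|\beta_k\|_F \neq 0$ is used precisely to guarantee that both $\|w_k\|$ and $\|a_k\|$ are strictly positive, so that the projectors $\beta_k^\intercal \beta_k / \|\beta_k\|_F^2$ and $\beta_k \beta_k^\intercal / \|\beta_k\|_F^2$ are well defined. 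These identities let us write $\eta_w a_k a_k^\intercal = (\eta_w \|a_k\|^2)\, \beta_k^\intercal \beta_k / \|\beta_k\|_F^2$ and $\eta_a w_k w_k^\intercal = (\eta_a \|w_k\|^2)\, \beta_k \beta_k^\intercal / \|\beta_k\|_F^2$.

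Third, I would determine the scalar prefactors $u := \eta_w \|a_k\|^2$ and $v := \eta_a \|w_k\|^2$ from two relations available to us. The conservation law stated above the theorem yields $u - v = \Delta_{kk} = \delta_k$, since $(\eta_w AA^\intercal - \eta_a WW^\intercal)_{kk} = \eta_w\|a_k\|^2 - \eta_a\|w_k\|^2$. Combining this with $uv = \eta_a \eta_w \|\beta_k\|_F^2$ (which is just $\|\beta_k\|_F^2 = \|w_k\|^2 \|a_k\|^2$ multiplied through by $\eta_a\eta_w$) gives a quadratic in $v$ whose unique non-negative solutions are
\begin{equation*}
    u \;=\; \frac{\sqrt{\delta_k^2 + 4\eta_a\eta_w \|\beta_k\|_F^2} + \delta_k}{2}, \qquad v \;=\; \frac{\sqrt{\delta_k^2 + 4\eta_a\eta_w \|\beta_k\|_F^2} - \delta_k}{2}.
\end{equation*}
Substituting back gives exactly the claimed expression for $M_k$ in \cref{eq:Mi}.

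The main obstacle is really just bookkeeping: checking that the Kronecker-sum split is per-neuron (which works because $M$ only depends on $A^\intercal A$ and $W^\intercal W$, both of which are sums over rows, with no cross-terms $a_k a_j^\intercal$ for $j \neq k$), and verifying that the appropriate sign branch is taken when solving the quadratic (forced by $u,v \geq 0$). The off-diagonal entries of the conserved matrix $\Delta$ play no role since only the diagonal enters the final formula; the non-trivial content of the theorem is that the neuron-wise conservation $\delta_k$ together with $\|\beta_k\|_F$ suffices to recover the individual layer norms.
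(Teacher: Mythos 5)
Your proof is correct and follows essentially the same route as the paper's: express the per-neuron outer products $a_k a_k^\intercal$ and $w_k w_k^\intercal$ in terms of the rank-one gram matrices of $\beta_k$, then eliminate $\eta_w\|a_k\|^2$ and $\eta_a\|w_k\|^2$ by solving the difference/product system given by the per-neuron conservation law $\delta_k$ and $\|\beta_k\|_F^2$. The only presentational difference is that you perform the Kronecker-sum split into $M = \sum_k M_k$ up front, whereas the paper substitutes into $A^\intercal A = \sum_k a_k a_k^\intercal$ and $W^\intercal W = \sum_k w_k w_k^\intercal$ at the end — same argument, different bookkeeping order.
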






By studying the dependence of $M$ on the conserved quantities $\delta_k$ and the dimensions $d$, $h$ and $c$, we can determine the influence of the relative scale on the learning regime.
When $\min (d,c) \le h < \max (d,c)$, and assuming independent initializations for all $\beta_k$, then networks which narrow from input to output ($d > c$) enter the lazy regime when all \(\delta_k \gg 0\), whereas networks which expand from input to output ($d < c$) do so when all \(\delta_k \ll 0\).
However, with opposite signs for $\delta_k$, and assuming all $\beta_k(0) \not\propto \beta_*$, these networks enter a \emph{delayed rich regime}.
As elaborated in \cref{app:wide-deep-linear-network-architectures},  this occurs because in these regimes a solution \(\beta_{*}\) does not exist within the space spanned by \(M\) at initialization.
When $h \ge \max (d,c)$ all networks enter the lazy regime when all $\delta_k \gg 0$ or all  $\delta_k \ll 0$.
Conversely, as all $\delta_k \to 0$, all networks  transition into the rich regime regardless of dimensions.
While \cref{thrm:vec_beta} offers valuable insight into the learning regimes in the limits of $\delta_k$, understanding the transition between regimes remains challenging. 
To achieve this, we aim to express $M$ in terms of $\beta$, rather than $\beta_k$, by introducing structure on the conserved quantities $\Delta$.

\begin{theorem}
    \label{thrm:vec_beta-isotropic}
    When $\Delta = \delta \mathbf{I}_h$ and $h = d$ if $\delta <0$ or $h = c$ if $\delta > 0$, then the matrix $M$ can be expressed as $M = \sqrt{\eta_a \eta_w\beta^{\intercal}\beta + \frac{\delta^2}{4}\mathbf{I}_c}   \otimes \mathbf{I}_d + \mathbf{I}_c \otimes \sqrt{\eta_a\eta_w \beta \beta^{\intercal} + \frac{\delta^2}{4}\mathbf{I}_d}$.
\end{theorem}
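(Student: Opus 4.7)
The plan is to reduce the identity for $M$ to two scalar-square-root identities, one for each factor of the Kronecker sum. Concretely, I aim to establish $\eta_w A^\intercal A = \sqrt{\eta_a\eta_w\, \beta^\intercal\beta + \tfrac{\delta^2}{4}\mathbf{I}_c} + \tfrac{\delta}{2}\mathbf{I}_c$ and $\eta_a W^\intercal W = \sqrt{\eta_a\eta_w\, \beta\beta^\intercal + \tfrac{\delta^2}{4}\mathbf{I}_d} - \tfrac{\delta}{2}\mathbf{I}_d$. Once these are in hand, forming $M = \eta_w A^\intercal A \otimes \mathbf{I}_d + \mathbf{I}_c \otimes \eta_a W^\intercal W$ is immediate: the $+\tfrac{\delta}{2}\mathbf{I}_{cd}$ coming from the first factor cancels the $-\tfrac{\delta}{2}\mathbf{I}_{cd}$ from the second, leaving exactly the Kronecker sum of matrix square roots claimed by the theorem.

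The derivation of each identity is a short calculation driven by the conservation law $\eta_w A A^\intercal - \eta_a W W^\intercal = \delta \mathbf{I}_h$, which holds throughout gradient flow since $\Delta = \delta \mathbf{I}_h$. For the first, substituting $\eta_a W W^\intercal = \eta_w A A^\intercal - \delta \mathbf{I}_h$ into the sandwich $\beta^\intercal \beta = A^\intercal (W W^\intercal) A$ collapses it, by associativity, to $\eta_a\eta_w\, \beta^\intercal \beta = P^2 - \delta P$ with $P := \eta_w A^\intercal A$; completing the square gives $\bigl(P - \tfrac{\delta}{2}\mathbf{I}_c\bigr)^2 = \eta_a\eta_w\, \beta^\intercal \beta + \tfrac{\delta^2}{4}\mathbf{I}_c$. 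A mirror substitution into $\beta\beta^\intercal = W^\intercal (A A^\intercal) W$ produces $\bigl(Q + \tfrac{\delta}{2}\mathbf{I}_d\bigr)^2 = \eta_a\eta_w\, \beta\beta^\intercal + \tfrac{\delta^2}{4}\mathbf{I}_d$ with $Q := \eta_a W^\intercal W$.

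The main obstacle is passing from these squared equations to their signed square roots. Both right-hand sides are PSD and hence admit unique PSD principal square roots, but to equate those with the unsquared expressions $P - \tfrac{\delta}{2}\mathbf{I}_c$ and $Q + \tfrac{\delta}{2}\mathbf{I}_d$ one has to verify that those shifted matrices are themselves PSD; otherwise the ``wrong sign'' branch contaminates some invariant subspace. For $\delta \le 0$, $P - \tfrac{\delta}{2}\mathbf{I}_c \succeq P \succeq 0$ for free; when $\delta > 0$, the conservation law forces $\eta_w A A^\intercal \succeq \delta \mathbf{I}_h$, so $A$ has full row rank $h$. The hypothesis $h = c$ then makes $A$ square, so $A^\intercal A$ inherits the nonzero eigenvalues of $A A^\intercal$ and $P \succeq \delta \mathbf{I}_c$. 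A mirror argument with $h = d$ when $\delta < 0$ handles $Q + \tfrac{\delta}{2}\mathbf{I}_d$. Without these dimension matches, a zero eigenvalue of $A^\intercal A$ (respectively $W^\intercal W$) would leave the corresponding shifted matrix with an eigenvalue of the wrong sign for the relevant $\delta$, and the square-root extraction would fail; this is precisely where the hypotheses of the theorem are essential.
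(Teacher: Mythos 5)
Your proof is correct and follows essentially the same route as the paper's: sandwiching the conservation law $\eta_w AA^\intercal - \eta_a WW^\intercal = \delta\mathbf{I}_h$ to obtain quadratic equations in $\eta_w A^\intercal A$ and $\eta_a W^\intercal W$, completing the square, and extracting the PSD principal square root after checking the sign of the shifted matrices (the paper's Lemmas B.4 and B.5). Your explicit invocation of $A$ or $W$ being square and invertible to transfer the eigenvalue lower bound from $AA^\intercal$ to $A^\intercal A$ (resp.\ $WW^\intercal$ to $W^\intercal W$) is the same content as the paper's eigenvalue-matching step under $h\ge d$ or $h\ge c$, which is forced to equality by the conservation law when $\delta\neq0$.
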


From \cref{thrm:vec_beta-isotropic} the resulting dynamics of $\beta$ simplify to a self-consistent equation regulated by $\delta$,
\begin{equation}
    \dot{\beta} = -X^\intercal P \sqrt{\eta_a\eta_w \beta^\intercal\beta + \frac{\delta^2}{4} \mathbf{I}_c} - \sqrt{\eta_a\eta_w \beta\beta^\intercal + \frac{\delta^2}{4} \mathbf{I}_d}X^\intercal P,
\end{equation}
where $P = X\beta - Y$ is the residual.
Under our isotropic assumption on the conserved quanitities $\Delta = \delta \mathbf{I}_h$, these dynamics are exact. Concurrent to our work, \citet{tu2024mixed} finds that $\beta$ \textit{approximately} follows these dynamics in the overparameterized setting $h \gg \max(d, c)$ under a Gaussian initialization $\mathcal{N}(0,\sigma^2)$ of the parameters where $\sigma^2 h$ is analogous to $\delta$.

Equipped with a self-consistent equation for the dynamics of $\beta$ we now aim to interpret these dynamics as a mirror flow with a $\delta$-dependent potential.
As presented in \cref{thrm:multi-neuron-singular}, the dynamics of the singular values of $\beta$ can be described as a mirror flow with a \emph{hyperbolic entropy} potential, which smoothly interpolates between an $\ell^1$ and $\ell^2$ penalty on the singular values for the rich ($\delta \to 0$) and lazy ($\delta \to \pm \infty$) regimes respectively.
This potential was first identified as the inductive bias for diagonal linear networks by \citet{woodworth2020kernel} and the same mirror flow on the singular values is derived from a different initialization choice in prior work by \citet{varre2024spectral}.
%

\textbf{Deep linear networks}.
As presented in \cref{thm:deep-rich}, we generalize the inductive bias derived for rich two-layer linear networks by \citet{azulay2021implicit} to deep linear networks.
For a depth-$(L + 1)$ linear network, $f(x;\theta) = A^\intercal \prod_{l=1}^{L} W_l x$, where $\beta = \prod_{l=1}^{L} W_l^\intercal A$, we find that the inductive bias of the rich regime is $Q(\beta) = (\tfrac{L+1}{L+2})\|\beta\|^{\frac{L+2}{L+1}} - \|\beta_0\|^{-\frac{L}{L+1}}\beta_0^\intercal \beta$.
This inductive bias strikes a depth-dependent balance between attaining the minimum norm solution and preserving the initialization direction.

%
%

\vspace{-10pt}
\section{Piecewise Linear Networks}
\label{sec:nonlinear}
\vspace{-5pt}

We now take a first step towards extending our analysis from linear networks to piecewise linear networks with activation functions of the form $\sigma(z) = \max(z, \gamma z)$. 
%
%
%
The input-output map of a piecewise linear network with $L$ hidden layers and $h$ hidden neurons per layer is comprised of potentially $O(h^{dL})$ convex \emph{activation regions} \cite{raghu2017expressive}.
Each region is defined by a unique \emph{activation pattern} of the hidden neurons.
The input-output map is linear within each region and continuous at the boundary between regions. 
Collectively, the activation regions form a 2-colorable\footnote{To our knowledge, this property has not been recognized before. See \cref{app:nonlinear-surface} for a formal statement.} convex partition of input space, as shown in \cref{fig:xor-dynamics}.
We investigate how the relative scale influences the evolution of this partition and the linear maps within each region.

\textbf{Two-layer network.}
We consider the dynamics of a two-layer piecewise linear network without biases, $f(x;\theta) = a^\intercal \sigma(W x)$, where $W \in \mathbb{R}^{h \times d}$ and $a \in \mathbb{R}^h$.
%
%
Following the approach in \cref{sec:wide-deep-linear}, we consider the contribution to the input-output map from a single hidden neuron $k \in [h]$ with parameters $w_k \in \mathbb{R}^d$, $a_k \in \mathbb{R}$ and conserved quantity $\delta_k = \eta_w a_k^2 - \eta_a\|w_k\|^2$ \cite{du2018algorithmic}.
However, unlike the linear setting, the neuron's contribution to $f(x_i;\theta)$ is regulated by whether the input $x_i$ is in the neuron's \emph{active halfspace}. 
Let $C \in \mathbb{R}^{h \times n}$ be the matrix with elements $c_{ki} = \sigma'(w_k^\intercal x_i)$, which determines the activation of the $k^{\mathrm{th}}$ neuron for the $i^{\mathrm{th}}$ data point. 
The dynamics of $\beta_k = a_kw_k$ are,
\begin{equation}
    \dot{\beta}_k = -\underbrace{\left(\eta_wa_k^2\mathbf{I}_d + \eta_aw_kw_k^\intercal\right)}_{M_k}\underbrace{\textstyle\sum_{i=1}^nc_{ki}x_i(f(x_i;\theta) - y_i)}_{\xi_k}.
\end{equation}
The matrix $M_k \in \mathbb{R}^{d \times d}$ is a preconditioning matrix on the dynamics, and when $\beta_k \neq 0$, it can be expressed in terms of $\beta_k$ and $\delta_k$.
Unlike the linear setting, $\xi_k \in \mathbb{R}^d$ driving the dynamics is not shared for all neurons because of its dependence on $c_{ki}$.
Additionally, the NTK matrix in this setting depends on $M_k$ and $C$, with elements $K_{ij} = \sum_{k = 1}^h c_{ki} x_i^\intercal M_k x_j c_{kj}$.
To examine the evolution of $K$, we consider a \emph{signed spherical coordinate} transformation separating the dynamics of $\beta_k$ into its directional $\hat{\beta}_k = \mathrm{sgn}(a_k)\tfrac{\beta_k}{\|\beta_k\|}$ and radial $\mu_k = \mathrm{sgn}(a_k)\|\beta_k\|$ components, such that $\beta_k = \mu_k \hat{\beta}_k$.
$\hat{\beta}_k$ determines the direction and orientation of the halfspace where the $k^{\mathrm{th}}$ neuron is active, while $\mu_k$ determines the slope of the contribution in this halfspace.
These coordinates evolve according to,
\begin{equation}
    \dot{\mu}_k = -\sqrt{\delta_k^2 + 4\eta_a\eta_w\mu_k^2}\hat{\beta}_k^\intercal \xi_k, \qquad
    \dot{\hat{\beta}}_k = -\frac{\sqrt{\delta_k^2 + 4\eta_a\eta_w\mu_k^2} + \delta_k}{2\mu_k} \left(\mathbf{I}_d - \hat{\beta}_k\hat{\beta}_k^\intercal\right)\xi_k.
\end{equation}
\emph{\textbf{Downstream.}}
When $\delta_k \ll 0 $, $M_k \approx |\delta_k|\hat{\beta}_k\hat{\beta}_k^\intercal$, and the dynamics are approximately $\partial_t\hat{\beta}_k = 0$ and $\partial_t\mu_k = -|\delta_k|\hat{\beta}_k^\intercal \xi_k$.
Irrespective of $\xi_k$, $\hat{\beta}_k(t) =  \hat{\beta}_k(0)$, which implies the overall partition map doesn't change (\cref{fig:xor-dynamics}, bottom), nor the activation patterns $C$, nor $M_k$.
Only $\mu_k$ changes to fit the data, while the NTK remains constant.
If the number of hidden neurons is insufficient to fit the data, there is a delayed rich alignment phase where the kernel will change, with $|\delta_k|$ determining the delay.

\emph{\textbf{Balanced.}}
When $\delta_k = 0$, $M_k = \sqrt{\eta_a\eta_w}|\mu_k|(\mathbf{I}_d + \hat{\beta}_k\hat{\beta}_k^\intercal)$, and the dynamics simplify to, $ \partial_t\hat{\beta}_k = -\sqrt{\eta_a\eta_w}\mathrm{sgn}(\mu_k)(\mathbf{I}_d - \hat{\beta}_k\hat{\beta}_k^\intercal)\xi_k$ and $\partial_t\mu_k = -2\sqrt{\eta_a\eta_w}|\mu_k|\hat{\beta}_k^\intercal \xi_k$.
Here both the direction and magnitude of $\beta_k$ evolve, resulting in changes to the activation regions, patterns $C$, and NTK $K$.
For vanishing initializations where $\|\beta_k(0)\| \to 0$ for all $k \in [h]$, we can decouple the dynamics into two distinct phases of training (\cref{fig:xor-dynamics}, top), analogous to the rich regime discussed in \cref{sec:single-neuron}.
\emph{Phase I: Partition alignment.} 
At vanishing scale, the output $f(x;\theta_0) \approx 0$ for all input $x$, such that the vector driving the dynamics $\xi_k \approx -\sum_{i=1}^n c_{ki}x_iy_i$ is independent of the other hidden neurons. 
At the same time, the radial dynamics slow down relative to the directional dynamics, and the function's output will remain small as each neuron aligns to certain data-dependent fixed points, decoupled from the rest.
Prior works have introduced structural constraints on the training data, such as orthogonally separable \cite{phuong2020inductive, wang2022early, min2023early}, pair-wise orthonormal \cite{boursier2022gradient}, linearly separable and symmetric \cite{lyu2021gradient} or small angle \cite{wang2024understanding}, to analytically determine the fixed points of this alignment phase.
\emph{Phase II: Data fitting.}
After enough time, the magnitudes of $\beta_k$ have grown such that we can no longer assume $f(x;\theta) \approx 0$ and thus the residual will depend on all $\beta_k$.
In this phase, the radial dynamics dominate the learning driving the network to fit the data.
However, it is possible for the directions to continue to change, and therefore some prior works have further decomposed this phase into multiple stages.

\begin{figure*}[t]
    \vspace{-15pt}
    \centering
    \begin{subfigure}{0.49\linewidth}
        \centering
        \includegraphics[width=\linewidth]{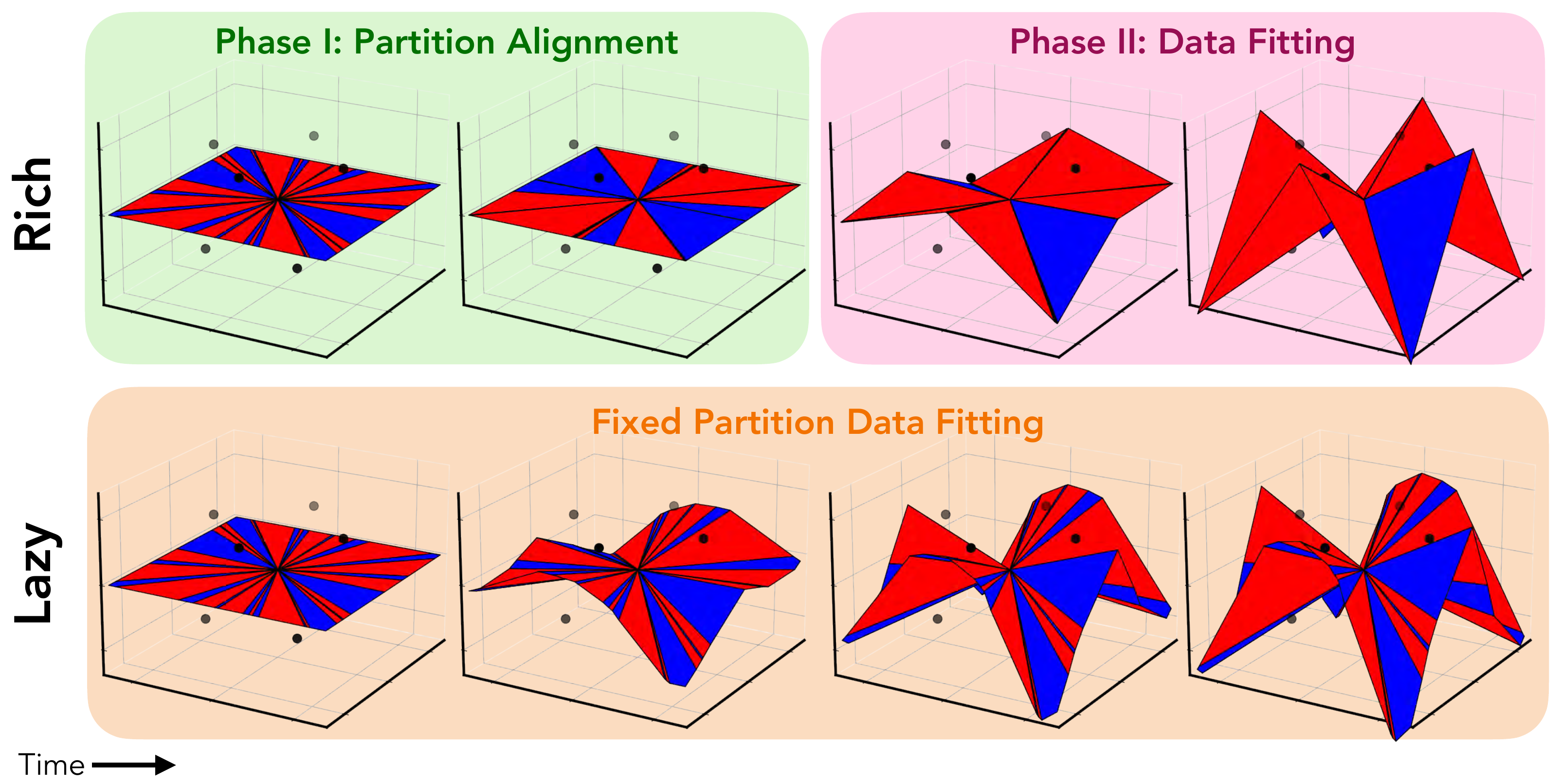}
        \caption{Evolution of a ReLU network's input-output map}
    \end{subfigure}
    \begin{subfigure}{0.49\linewidth}
        \centering
        \includegraphics[width=\linewidth]{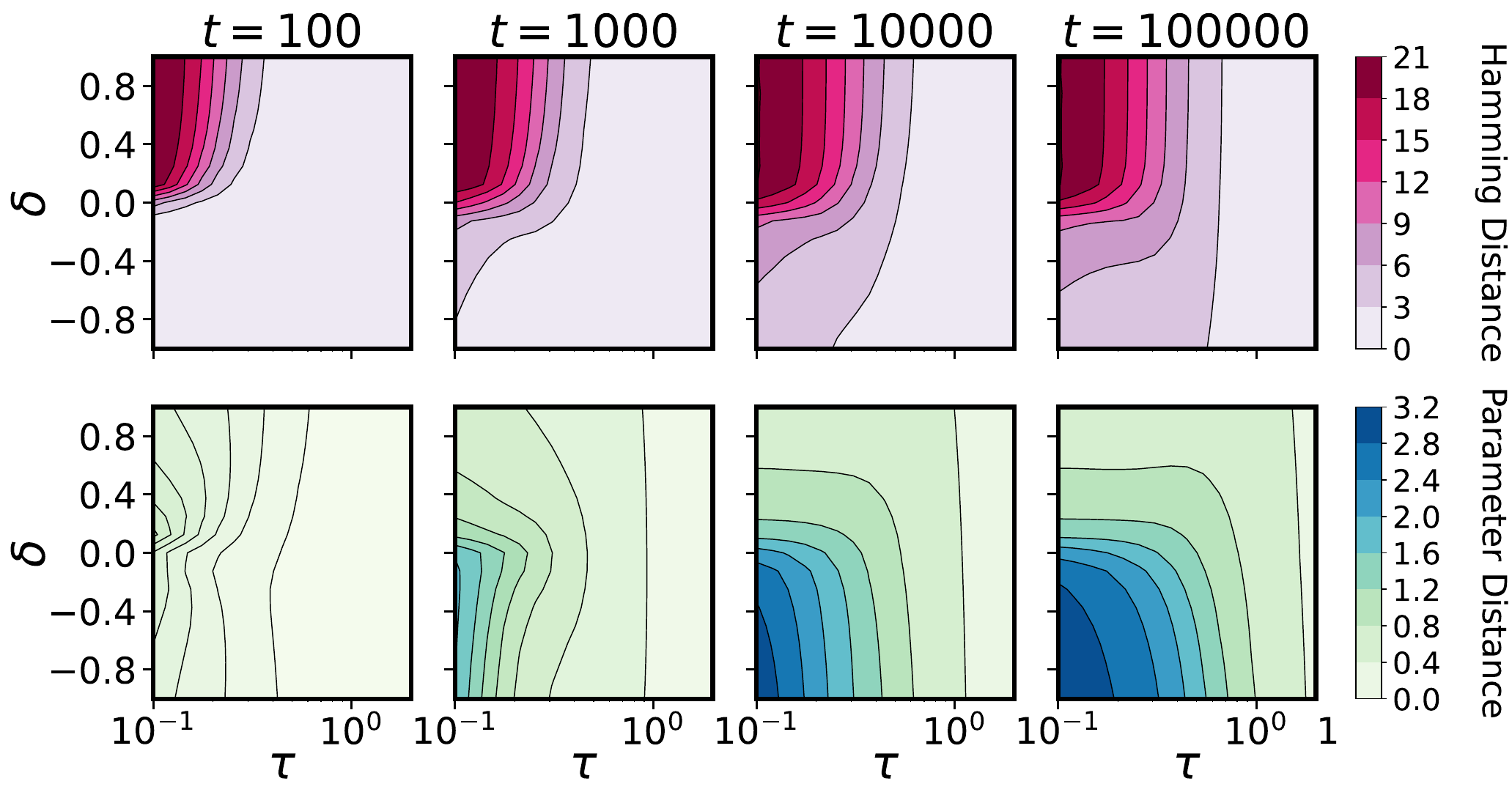}
        \caption{Hamming and parameter distance over $\tau$-$\delta$ sweep}
    \end{subfigure}
    \caption{
    \textbf{Rapid feature learning is caused by large activation changes with minimal parameter movement.}
    (a) We show the surface of a two-layer ReLU network trained on an XOR-like task, starting with a near-zero input-output map, $f(x;\theta_0)\approx0$. 
    The surface consists of convex conic regions, each with a distinct activation pattern, colored by the parity of active neurons. 
    A lazy initialization (\textit{bottom}) maintains a fixed activation partition throughout training, reweighting the hidden neurons to fit the data.
    In contrast, a rich balanced or upstream initialization (\textit{top}) features an initial alignment phase where the partition map changes rapidly while the input-output map remains close to zero, followed by a data-fitting phase.
    (b) We show the evolution of Hamming distance in activation patterns and parameter distance, relative to $t=0$, as a function of \textit{overall} and \textit{relative} scales (same experiments as in \cref{fig:two-layer-relu}(b)). \emph{Rapid feature learning} occurs from a small-$\tau$ upstream initialization that promotes faster learning in early layers, driving a large change in Hamming distance, but a small change in parameter space. In contrast, small-$\tau$ downstream initializations require large parameter movement to fit the data in the delayed rich regime.}
    \vspace{-20pt}
    \label{fig:xor-dynamics}
\end{figure*}

\emph{\textbf{Upstream.}}
When $\delta_k \gg 0$, $M_k \approx \delta_k \mathbf{I}_d$, and the dynamics are approximately $\partial_t\hat{\beta}_k = -\delta_k\mu_k^{-1}(\mathbf{I}_d - \hat{\beta}_k\hat{\beta}_k^\intercal)\xi_k$ and $\partial_t\mu_k = -\delta_k\hat{\beta}_k^\intercal \xi_k$.
Again, both the direction and magnitude of $\beta_k$ change.
However, unlike the balanced setting, in this setting $M_k$ is independent of $\beta_k$ and stays constant through training.
Yet, as $\beta_k$ change in direction, so can $C$, and thus the NTK.
This setting is unique, because it is rich due to a changing activation pattern, but the dynamics do not move far in parameter space.
Furthermore, unlike in the balanced scenario where scale adjusts the speed of radial dynamics, here it regulates the speed of directional dynamics, with vanishing initializations prompting an extremely fast alignment phase, as observed in \cref{fig:two-layer-relu} and in \cref{fig:xor-dynamics}.
%

\begin{figure*}[t]
    \begin{subfigure}{0.30\textwidth}
        \centering
        \includegraphics[width=\linewidth]{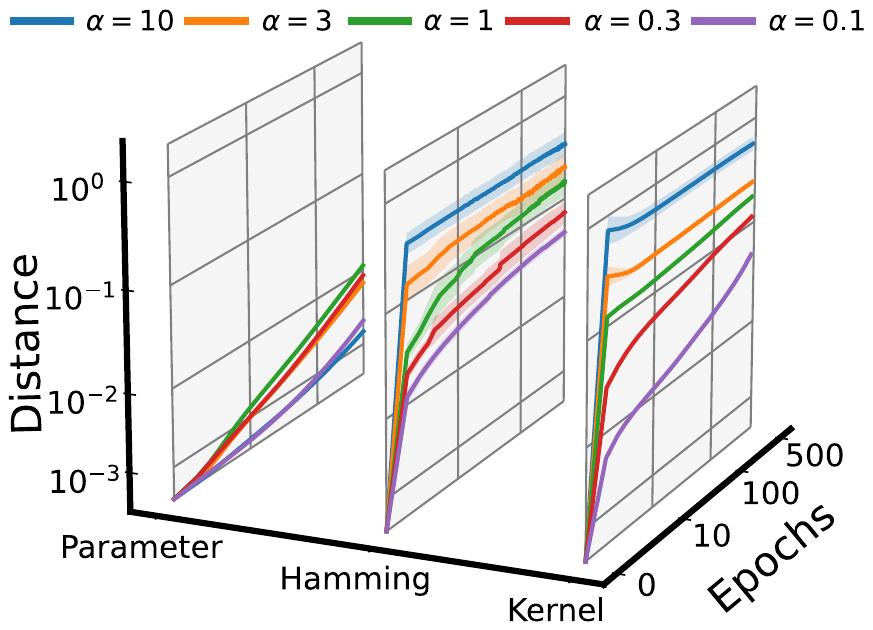}
        \caption{Kernel Distance}
    \end{subfigure}
    \begin{subfigure}{0.22\textwidth}
        \centering
        \includegraphics[width=\linewidth]{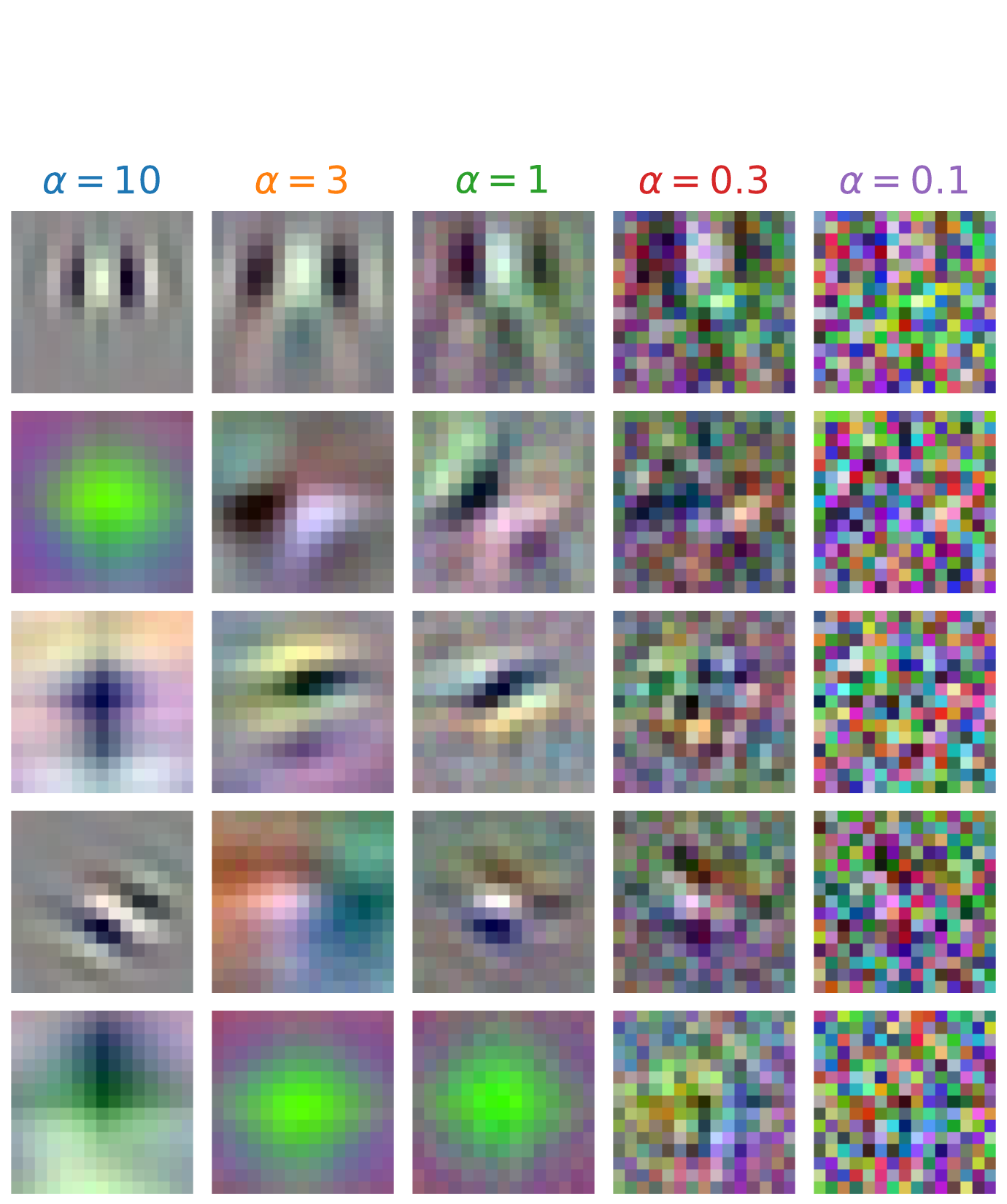}
        \caption{Convolutional Filters}
    \end{subfigure}
    \begin{subfigure}{0.23\textwidth}
        \centering
        \includegraphics[width=\linewidth]{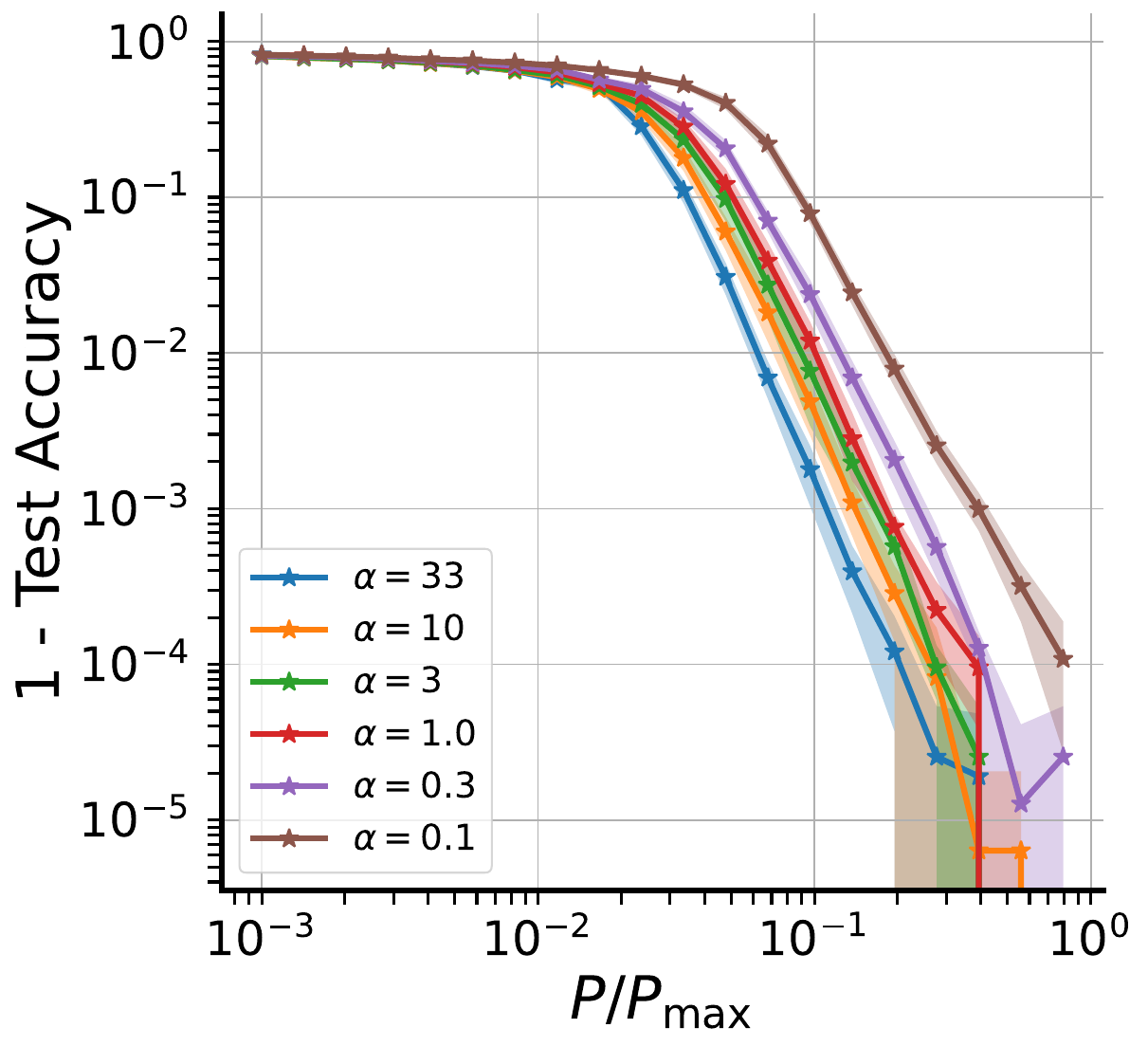}
        \caption{Random Hierarchy}
    \end{subfigure}
    \begin{subfigure}{0.23\textwidth}
        \centering
        \includegraphics[width=\linewidth]{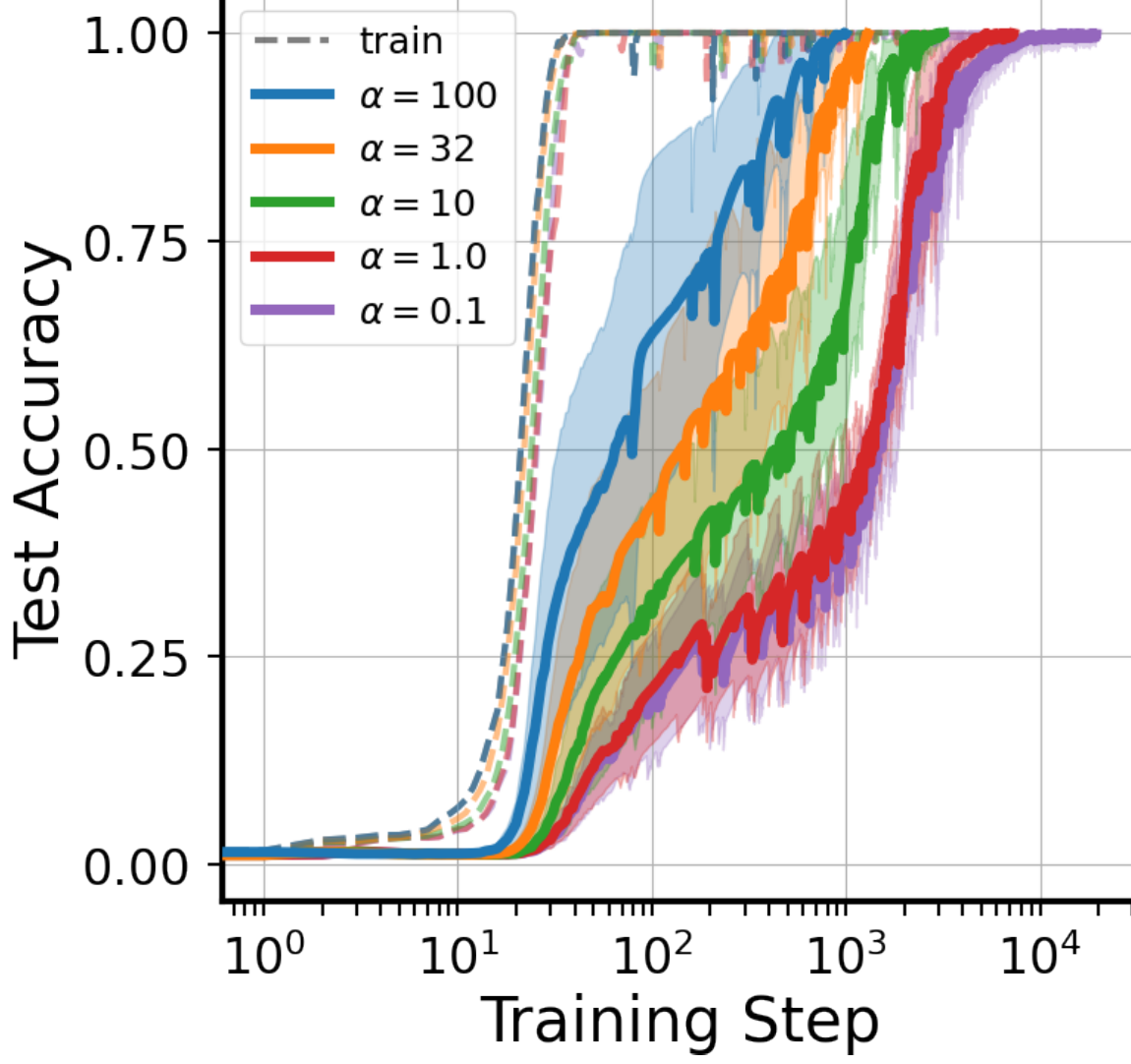}
        \caption{Transformer Grokking}
    \end{subfigure}
    \caption{\textbf{Impact of upstream initializations in practice.}
    Here we provide evidence that an upstream initialization (a) drives feature learning through changing activation patterns, (b) promotes interpretability of early layers in CNNs, (c) reduces the sample complexity of learning hierarchical data, and (d) decreases the time to grokking in modular arithmetic.
    In these experiments, we regulate the first layer's learning speed relative to the rest of the network by dividing its initialization by \(\alpha\). For models without normalization layers, we also scale the last layer's initialization by \(\alpha\) to preserve the input-output map. \(\alpha = 1\) represents standard parameterization, while \(\alpha \gg 1\) and \(\alpha \ll 1\) correspond to upstream and downstream initializations, respectively.
    See \cref{app:fig-5} for details.}
    \label{fig:deep-network}
    \vspace{-18pt}
\end{figure*}

\textbf{Connections to infinite-width.} Our study of learning regimes in finite-width two-layer ReLU networks as a function of the overall and relative scale is consistent with existing infinite-width analysis of feature learning. For example, in \citet{luo2021phase} they consider a network $f(x) = \frac{1}{\alpha}\sum_{k = 1}^ha_k\sigma(w_k^\intercal x)$ with weights initialized as $a_k \sim \mathcal{N}(0, \beta_a^2)$ and $w_k \sim \mathcal{N}(0, \beta_W^2\mathbf{I}_d)$ as width $h\to\infty$. They obtain a phase diagram at infinite width capturing the dependence of learning regime on the overall function scale $\beta_a\beta_W/\alpha$ and the relative initialization scale $\beta_a/\beta_W$, each suitably normalized as a function of width. The resulting phase portrait is analogous to ours in \cref{fig:two-layer-relu} (b), where we use the conserved quantity $\delta$ rather than the relative scale $\beta_a/\beta_W$. Specifically, there is a lazy regime that includes the NTK parameterization, which is always achieved at large scale (as in the large-$\tau$ regions of \cref{fig:two-layer-relu} (b)), but is also achieved at small scale if the first layer variance is sufficiently larger than the second (as in the downstream initializations at small $\tau$ in \cref{fig:two-layer-relu} (b)). On the other side of the phase boundary is the infinite-width analog of rapid rich learning, where all neurons condense to a few directions. This is induced either at small function scale, or at larger scales if $\beta_a/\beta_W$ is sufficiently large, such that $W$ learns fast enough relative to $a$. The phase boundary, in turn, which exists only at infinite width, contains a range of parametrizations, including the mean-field parametrization.
More broadly, across width-dependent parametrizations, the random initialization of weights induces a distribution over per-neuron conserved quantities. 
While the distinction between the NTK and the mean-field parametrizations has been extensively studied, both lead to the same distribution of per-neuron conserved quantities, which is zero in expectation with a non-vanishing variance.
A more thorough study of what role the \textit{distribution} of per-neuron conserved quantities plays in feature learning at finite-widths is left to future work.

\textbf{Unbalanced initializations in practice.}
Our analysis shows that upstream initializations can drive rapid rich learning in nonlinear networks. 
Further experiments in \cref{fig:deep-network} show that upstream initializations are relevant across various domains of deep learning:
(a) Standard initializations see significant NTK evolution early in training \cite{fort2020deep}. 
We show the movement is linked to changes in activation patterns rather than large parameter shifts. 
Adjusting the initialization variance of the first and last layers can amplify or diminish this movement.
(b) Filters in CNNs trained on image classification tasks often align with edge detectors \cite{krizhevsky2017imagenet}. 
We show that adjusting the learning speed of the first layer can enhance or degrade this alignment.
(c) Deep learning models are believed to avoid the curse of dimensionality and learn with limited data by exploiting hierarchical structures in real-world tasks.
Using the Random Hierarchy Model, introduced by \citet{petrini2023deep} as a framework for synthetic hierarchical tasks, we show that modifying the relative scale can decrease or increase the sample complexity of learning.
(d) Networks trained on simple modular arithmetic tasks will suddenly generalize long after memorizing their training data \cite{power2022grokking}. 
This behavior, termed grokking, is thought to result from a transition from lazy to rich learning \cite{kumar2023grokking, lyu2023dichotomy, rubin2024grokking} and believed to be important towards understanding emergent phenomena \cite{nanda2023progress}. 
We show that decreasing the variance of the embedding in a single-layer transformer ($<6\%$ of all parameters) significantly reduces the time to grokking.
%

\vspace{-10pt}
\section{Conclusion}
\label{sec:conclusion}
\vspace{-10pt}

In this work, we derived exact solutions to a minimal model that can transition between lazy and rich learning to precisely elucidate how unbalanced layer-specific initialization variances and learning rates determine the degree of feature learning.
We further extended our analysis to wide and deep linear networks and shallow piecewise linear networks.
We find through theory and empirics that unbalanced initializations, which promote faster learning at earlier layers, can actually accelerate rich learning.
\textbf{Limitations.}
The primary limitation lies in the difficulty to extend our theory to deeper nonlinear networks. 
In contrast to linear networks, where additional symmetries simplify dynamics, nonlinear networks require consideration of the activation pattern's impact on subsequent layers. 
One potential solution involves leveraging the path framework used in \citet{saxe2022neural}.
Another limitation is our omission of discretization and stochastic effects of SGD, which disrupt the conservation laws central to our study and introduce additional simplicity biases \cite{kunin2020neural, tanaka2021noether, chen2024stochastic, ziyin2024implicit}. 
\textbf{Future work.}
Our theory encourages further investigation into unbalanced initializations to optimize efficient feature learning. 
%
%
Understanding how the \emph{learning speed profile} across layers impacts feature learning, inductive biases, and generalization is an important direction for future work.


\clearpage
\begin{ack}
    We thank Francisco Acosta, Alex Atanasov, Yasaman Bahri, Abby Bertics, Blake Bordelon, Nan Cheng, Alex Infanger, Mason Kamb, Guillaume Lajoie, Nina Miolane, Cengiz Pehlevan, Ben Sorscher, Javan Tahir, Atsushi Yamamura for helpful discussions.
    D.K. thanks the Open Philanthropy AI Fellowship for support.
    S.G. thanks the James S. McDonnell and Simons Foundations, NTT Research, and an NSF CAREER Award for support.
    This research was supported in part by grant NSF PHY-1748958 to the Kavli Institute for Theoretical Physics (KITP).
\end{ack}

\section*{Author Contributions}
This project originated from conversations between Daniel and Allan at the Kavli Institute for Theoretical Physics.
Daniel, Allan, and Feng are primarily responsible for the single neuron analysis in \cref{sec:single-neuron}.
Daniel, Clem, Allan, and Feng are primarily responsible for the wide and deep linear analysis in \cref{sec:wide-deep-linear}.
Daniel is primarily responsible for the nonlinear analysis in \cref{sec:nonlinear}.
Allan, Feng, and David are primarily responsible for the empirics in \cref{fig:two-layer-relu} and \cref{fig:deep-network}.
Daniel is primarily responsible for writing the main sections.
All authors contributed to the writing of the appendix and the polishing of the manuscript.

\bibliography{bibliography}
\appendix

\clearpage
\section{Single-Neuron Linear Network}
\label{app:single-neuron}

In this section, we provide a detailed analysis of the two-layer linear network with a single hidden neuron discussed in \cref{sec:single-neuron}. 
The network is defined by the function $f(x;\theta) = a w^\intercal x$, where $a \in \mathbb{R}$ and $w \in \mathbb{R}^d$ are the parameters.
We aim to understand the impact of the initializations $a_0, w_0$ and the layer-wise learning rates $\eta_a, \eta_w$ on the training trajectory in parameter space, function space (defined by the product $\beta = aw$), and the evolution of the Neural Tangent Kernel (NTK) matrix $K$:
\begin{equation}
    K = X \left(\eta_w a^2 \mathbf{I}_d + \eta_a ww^\intercal\right)X^\intercal.
\end{equation}
The gradient flow dynamics are governed by the following coupled ODEs:
\begin{align}
    \label{eq:single-neuron-dynamics-unwhitened-a}
    \dot{a} &= -\eta_a w^\intercal \left(X^\intercal X aw - X^\intercal y\right),  &&a(0) = a_0,\\
    \label{eq:single-neuron-dynamics-unwhitened-w}
    \dot{w} &= -\eta_w a \left(X^\intercal X aw - X^\intercal y\right),  &&w(0) = w_0.
\end{align}
The global minima of this problem are determined by the normal equations $X^\intercal X aw = X^\intercal y$.
Even when $X^\intercal X$ is invertible, yielding a unique global minimum in function space $\beta_* = (X^\intercal X)^{-1} X^\intercal y$, the symmetry between $a$ and $w$, permitting scaling transformations, $a \to a \alpha$ and $w \to w / \alpha$ for any $\alpha \neq 0$ without changing the product $aw$, results in a manifold of minima in parameter space.
This minima manifold is a one-dimensional hyperbola where $a w = \beta_*$, with two distinct branches for positive and negative $a$.
The set of saddle points $\{(a,w)\}$ forms a $(d-1)$-dimensional subspace satisfying $a = 0$ and $w^\intercal X^\intercal y = 0$.
Except for a measure zero set of initializations that converge to the saddle points, all gradient flow trajectories will converge to a global minimum. 
In \cref{app:single-neuron-basins}, we detail the basin of attraction for each branch of the minima manifold and the $d$-dimensional surface of initializations that converge to saddle points, separating the two basins.

\subsection{Conserved quantity}
\label{app:single-neuron-conserved}

The scaling symmetry between $a$ and $w$ results in a conserved quantity $\delta \in \mathbb{R}$ throughout training, as noted in many prior works \cite{saxe2013exact, du2018algorithmic, kunin2020neural}, where
\begin{equation}
    \delta = \eta_wa^2 - \eta_a\|w\|^2.
\end{equation} 
This can be easily verified by explicitly writing out the dynamics of $\delta$.
Define $\rho = \left(X^\intercal X aw - X^\intercal y\right)$ for succinct notation, such that 
\begin{align*}
    \dot{\delta} &= 2\eta_w a \dot{a} - 2\eta_a w^\intercal \dot{w}\\
    &= 2\eta_w a \left(-\eta_a w^\intercal \rho\right) - 2\eta_a w^\intercal \left(-\eta_w a \rho\right)\\
    &= 0.
\end{align*}
    %

The conserved quantity confines the parameter dynamics to the surface of a hyperboloid where the magnitude and sign of the conserved quantity determines the geometry, as shown in \cref{fig:single-neuron}.
A hyperboloid of the form $\sum_{i=1}^kx_i^2 - \sum_{i=k+1}^n x_i^2 = \alpha$, with $\alpha \ge 0$, exhibits varied topology and geometry based on $k$ and $\alpha$. It has two sheets when $k = 1$ and one sheet otherwise. Its geometry is primarily dictated by $\alpha$: as $\alpha$ tends to infinity, curvature decreases, while at $\alpha = 0$, a singularity occurs at the origin.

\subsection{Exact solutions}
\label{app:single-neuron-exact-solutions}

To derive exact dynamics we assume the input data is whitened such that $X^\intercal X = \mathbf{I}_d$ and $\beta_* = X^\intercal y$ such that $\beta_* \neq 0$.
The dynamics of $a$ and $w$ can then be simplified as
\begin{align} 
    \label{eq:single-neuron-a-w-dynamics}
    \dot{a} &= \eta_a \left(w^\intercal \beta_* - a \|w\|^2\right), &&a(0) = a_0\\
    \dot{w} &= \eta_w \left(a\beta_* - a^2w\right), &&w(0) = w_0.
\end{align}

\subsubsection{Deriving the dynamics for \texorpdfstring{$\mu$ and $\phi$}{}}
\label{app:single-neuron-mu-phi}

As discussion in \cref{sec:single-neuron} we study the variables $\mu = a \|w\|$, an invariant under the rescale symmetry, and $\phi = \frac{w^\intercal \beta_*}{\|w\|\|\beta_*\|}$, the cosine of the angle between $w$ and $\beta_*$.
This change of variables can also be understood as a signed spherical decomposition of $\beta$: $\mu$ is the signed magnitude of $\beta$ and $\phi$ is the cosine angle between $\beta$ and $\beta_*$. Through chain rule, we obtain the dynamics for $\mu$ and $\phi$, which can be expressed as
\begin{align}
    \dot{\mu} &= \sqrt{\delta^2 + 4\eta_a\eta_w\mu^2}\left(\phi\|\beta_*\| - \mu\right), &&\mu(0)=a_0\|w_0\|,\\
    \dot{\phi} &= \frac{\eta_a\eta_w 2\mu\|\beta_*\|}{\sqrt{\delta^2 + 4\eta_a\eta_w\mu^2} - \delta} \left(1 - \phi^2\right), &&\phi(0)=\frac{w_0^\intercal \beta_*}{\|w_0\|\|\beta_*\|}.
\end{align}

We leave the derivation to the reader, but emphasize that a key simplification used is to express the sum $\eta_w a^2 + \eta_a\|w\|^2$ in terms of $\delta$,
\begin{equation}
    \label{eq:single-neuron-key-simplification}
    \eta_w a^2 + \eta_a\|w\|^2 = \sqrt{\delta^2 + 4\eta_a\eta_w\mu^2}.
\end{equation}
Additionally, notice that $\eta_a$ and $\eta_w$ only appear in the dynamics for $\mu$ and $\phi$ as the product $\eta_a\eta_w$ or in the expression for $\delta$.
If we were to define $\mu' = \sqrt{\eta_a\eta_w}\mu$ and $\beta_*' = \sqrt{\eta_a\eta_w} \beta_*$, then it is not hard to show that the product $\eta_a\eta_w$ is absorbed into the dynamics.
Thus, without loss of generality we can assume the product $\eta_a\eta_w = 1$, resulting in the following coupled system of nonlinear ODEs,
\begin{align}
    \dot{\mu} &= \sqrt{\delta^2 + 4\mu^2}\left(\phi\|\beta_*\| - \mu\right), &&\mu(0)=a_0\|w_0\|\\
    \dot{\phi} &= \frac{2\mu\|\beta_*\|}{\sqrt{\delta^2 + 4\mu^2} - \delta} \left(1 - \phi^2\right), &&\phi(0)=\frac{w_0^\intercal \beta_*}{\|w_0\|\|\beta_*\|}
\end{align}
We will now show how to solve this system of equations for $\mu$ and $\phi$.
We will solve this system when $\delta = 0$, $\delta > 0$, and $\delta < 0$ separately.
We will then in \cref{app:single-neuron-a-w-from-mu-phi} show a general treatment on how to obtain the individual coordinates of $a$ and $w$ from the solutions for $\mu$ and $\phi$.

\subsubsection{\texorpdfstring{Balanced $\delta = 0$}{}}
\label{app:single-neuron-balanced}

When $\delta = 0$, the dynamics for $\mu, \phi$ are,
\begin{align}
    \dot{\mu} &= \mathrm{sgn}(\mu) 2 \mu(\phi\|\beta_*\| - \mu), &&\mu(0)=a_0\|w_0\|,\\  \dot{\phi} &= \mathrm{sgn}(\mu) \|\beta_*\|(1 - \phi^2), &&\phi(0)=\tfrac{w_0^\intercal \beta_*}{\|w_0\|\|\beta_*\|}.
\end{align}

First, we show that the sign of $\mu$ cannot change through training and $\mathrm{sgn}(\mu) = \mathrm{sgn}(a)$.
Because $\delta = 0$, the dynamics of $a$ and $w$ are constrained to a double cone with a singularity at the origin ($a = 0, w = 0$). 
This point is a saddle point of the dynamics, so the trajectory cannot pass through this point to move from one cone to the other. 
In other words, the cone where the dynamics are initialized on is the cone they remain on.
Without loss of generality, we assume $a_0 > 0$, and solve the dynamics.
The dynamics of $\mu$ is a Bernoulli differential equation driven by a time-dependent signal $\phi \|\beta_*\|$.
The dynamics of $\phi$ is decoupled from $\mu$ and is in the form of a Riccati equation evolving from an initial value $\phi_0$ to $1$, as we have assumed an initialization with positive $a_0$.
This ODE is separable with the solution,
\begin{equation}
    \phi(t) = \tanh\left(c_\phi + \|\beta_*\| t\right),
\end{equation}
where $c_\phi = \tanh^{-1}(\phi_0)$.
Plugging this solution into the dynamics for $\mu$ gives a Bernoulli differential equation,
\begin{equation}
    \dot{\mu} = 2\|\beta_*\|\tanh\left(c_\phi + \|\beta_*\| t\right) \mu  - 2 \mu^2,
\end{equation}
with the solution,
\begin{equation}
    \mu(t) = \frac{2\cosh^2\left(c_\phi + \|\beta_*\|t\right)}{2\left(c_\phi + \|\beta_*\|t\right) + \sinh\left(2(c_\phi + \|\beta_*\|t)\right) + c_\mu},
\end{equation}
where $c_\mu = 2\mu_0^{-1}\cosh^2(c_\phi) - \left(2c_\phi + \sinh(2c_\phi)\right)$.
Note, if $\phi_0 = -1$, then $\dot{\phi} = 0$, and the dynamics of $\mu$ will be driven to $0$, which is a saddle point.

\subsubsection{Upstream \texorpdfstring{$\delta > 0$}{}} 
\label{app:single-neuron-positive}

When $\delta > 0$, the dynamics are constrained to a hyperboloid composed of two identical sheets determined by the sign of $a_0$ (as shown in \cref{fig:single-neuron} (c)).
%
%
Without loss of generality we assume $a_0 > 0$, which ensures $a(t) > 0$ for all $t \ge 0$.
However, unlike in the balanced setting, the dynamics of $\mu$ and $\phi$ do not decouple, making it difficult to solve.
Instead, we consider $\nu = \tfrac{w^\intercal \beta_*}{a}$, which evolves according to the Riccati equation,
\begin{align}
    \dot{\nu} 
    &= \|\beta_*\|^2 - \delta\nu - \nu^2, &&\nu(0)=\tfrac{w_0^\intercal \beta_*}{a_0}.
\end{align}
The solution is given by,
\begin{align} \label{eq:nu}
    \nu(t) = \frac{2R\nu_0 \cosh \left(R t\right)+\left(2\|\beta_*\|^2 - \delta\nu_0\right) \sinh \left(Rt\right)}{2R \cosh \left(R t\right) + (2 \nu_0+\delta) \sinh \left(Rt \right)},
\end{align}
where $R = \frac{1}{2}\sqrt{\delta^2+4 \|\beta_*\|^2}$.
The trajectory of $a(t)$ is given by the Bernoulli equation,
\begin{align} \label{eq:a-dot-with-nu}
    \dot{a} &= a(\nu(t) + \delta - a^2), && a(0)=a_0,
\end{align}
which can be solved analytically using $\nu(t)$.
For $a_0 > 0$, we have that
\begin{align*}
\begin{split}
    a(t) = 2e^{t\delta/2}\|\beta_*\|\sqrt{\delta} \Bigg(& \mathrm{sech}^2\left(Y(t)\right) \bigg[ 4e^{t\delta}\|\beta_*\|^2 
    -\frac{\left(\delta ^2+4 \|\beta_*\|^2\right) \left(\|\beta_*\|^2 \left(\delta -a_0^2\right) + b_0^2\right)}{b_0^2 - a_0^2\|\beta_*\|^2 + a_0b_0\delta} \\
    & - \delta  e^{\delta  t}
     \left( \delta  \cosh \left(2Y(t)\right)-\sqrt{\delta ^2+4 \|\beta_*\|^2} \sinh \left(2Y(t)\right)\right)\bigg] \Bigg)^{-1/2}
\end{split}
\end{align*}
where $b_0 = w_0^\intercal \beta_*$, and $Y(t) = \frac{1}{2}\sqrt{\delta^2 + 4\|\beta_*\|^2}t + \mathrm{atanh}\left(\frac{\frac{2b_0}{a_0} + \delta}{\sqrt{\delta^2 + 4\|\beta_*\|^2}}\right)$.
From the solutions for $\nu, a$, we can easily obtain dynamics for $\mu, \phi$.

\subsubsection{Downstream \texorpdfstring{$\delta < 0$}{}} 
\label{app:single-neuron-negative}

When $\delta < 0$, the dynamics are constrained to a hyperboloid composed of a single sheet (as shown in \cref{fig:single-neuron} (a)).
However, unlike in the upstream setting, $a$ may change sign. 
A zero-crossing in $a$ leads to a finite time blowup in $\nu$.
Consequently, applying the approach used to solve for the dynamics in the upstream setting becomes more intricate.
First we show the following lemma:
\begin{lemma} \label{lemma:crossing}
    If $a_0\neq 0$ or $w_0^\intercal \beta_*\neq 0$, then $a(t) w(t)^\intercal \beta_* = 0$ has at most one solution for $t \geq 0$.
\end{lemma}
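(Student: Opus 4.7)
The plan is to reduce the single-neuron gradient flow to a planar autonomous system in $(a, b)$ with $b := w^\intercal \beta_*$, and then show that $F(t) := a(t)\, b(t)$ has strictly positive time derivative at each of its zeros, which forces at most one zero on $t \geq 0$. Differentiating gives $\dot b = \dot w^\intercal \beta_* = a\|\beta_*\|^2 - a^2 b$, and combining $\dot a = b - a\|w\|^2$ with the conservation law $\|w\|^2 = a^2 - \delta$ (absorbing $\eta_a\eta_w$ into rescalings as in \cref{app:single-neuron-mu-phi}) collapses the dynamics to the closed smooth planar ODE
\[
\dot a = b - a^3 + \delta a, \qquad \dot b = a \|\beta_*\|^2 - a^2 b.
\]
A direct computation then yields
\[
\dot F = \dot a\, b + a\, \dot b = b^2 + a^2 \|\beta_*\|^2 + a\delta b - 2 a^3 b.
\]

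At any $t^\star$ with $F(t^\star) = 0$, either $a(t^\star) = 0$ or $b(t^\star) = 0$, giving $\dot F(t^\star) = b(t^\star)^2$ or $\dot F(t^\star) = a(t^\star)^2 \|\beta_*\|^2$ respectively. Both expressions are nonnegative, with equality only at the origin $(a,b)=(0,0)$, which is itself an equilibrium of the planar system. By uniqueness of solutions to the smooth ODE, the only trajectory ever passing through $(0,0)$ is the constant one, and the hypothesis $a_0 \neq 0$ or $w_0^\intercal \beta_* \neq 0$ excludes this case. Consequently $(a(t), b(t)) \neq (0,0)$ for every $t \geq 0$, so $\dot F(t^\star) > 0$ strictly at every zero of $F$ along the trajectory.

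Strict positivity at zeros then forces uniqueness by a standard sign-change argument. Suppose for contradiction $F(t_1) = F(t_2) = 0$ with $0 \leq t_1 < t_2$. Since $\dot F(t_1) > 0$ and $F(t_1)=0$, we have $F > 0$ on some interval $(t_1, t_1 + \epsilon)$; similarly $\dot F(t_2) > 0$ forces $F < 0$ on $(t_2 - \epsilon, t_2)$. Setting $t^\star := \inf\{t \in (t_1, t_2) : F(t) < 0\}$, continuity yields $F(t^\star) = 0$ and $F \geq 0$ on $(t_1, t^\star)$, which forces $\dot F(t^\star) \leq 0$ and contradicts the previous paragraph. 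Hence $F$ has at most one zero on $t \geq 0$.

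The only subtle point is the saddle exclusion, which is handled completely by the uniqueness of ODE solutions through the equilibrium $(0,0)$; everything else is a direct computation and the sign-change argument above, so I do not expect any serious obstacle.
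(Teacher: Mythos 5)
Your proof is correct and takes essentially the same approach as the paper's: you reduce to the planar autonomous system in $(a,\omega)$ with $\omega = w^\intercal\beta_*$, observe that on $\{a\omega=0\}\setminus\{(0,0)\}$ the vector field points strictly into the open quadrants $\{a\omega>0\}$, and exclude the origin via ODE uniqueness since it is an equilibrium. The paper phrases this as positive invariance of the orthants $\{a>0,\omega>0\}$ and $\{a<0,\omega<0\}$, while you package the identical observation as $\dot F > 0$ at every zero of $F=a\omega$ and close the argument with an explicit sign-change contradiction.
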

\begin{proof}
Let $\omega(t)=w(t)^\intercal\beta_*$.
The two-dimensional dynamics of $a(t)$ and $\omega(t)$ are given by,
\begin{align}
    \dot{a} &= \omega - a (a^2-\delta),\\
    \dot{\omega} &= a\|\beta_*\|^2 - a^2  \omega.
\end{align}
Consider the orthant $O^+=\{(a,\omega)|a> 0, \omega>0\}$.
The boundary $\partial O^+$ is formed by two orthogonal subspaces.
On $\{(a,\omega)|a = 0, \omega\geq 0\}$, $\dot{a} \geq 0$.
On $\{(a,\omega)|a \geq 0, \omega = 0\}$, $\dot{\omega} \geq 0$. 
Therefore, $O^+$ is a positively invariant set. 
Similarly, $O^-=\{(a,\omega)|a < 0, \omega<0\}$ is a positively invariant set. 
On the boundary $\partial O^+ \cup \partial O^-\ = \{(a,\omega)|a\omega=0\}$, the flow is contained only at the origin $a=0, \omega=0$, which represents all saddle points of the dynamics of $(a, w)$. 
By assumption, $(a, w)$ is not initialized at a saddle point, and thus the origin is not reachable for $t \ge 0$.
As a result, the trajectory $(a(t),\omega(t))$ will at most intersect the boundary $\partial O_+ \cup \partial O_-$ once.
\end{proof}

From \cref{lemma:crossing}, we conclude that either $a$ crosses zero, $w^\intercal \beta_*$ crosses zero, or neither crosses zero.
When $a$ doesn't cross zero, then $\nu$ is well-defined for $t\geq 0$, and our argument from \cref{app:single-neuron-positive} still holds, leading to solutions for $\mu, \phi$.
When $a$ does cross zero, instead of $\nu$, we consider $\upsilon = \tfrac{a}{w^\intercal \beta_*}$, the inverse of $\nu$.
In this case, we know from \cref{lemma:crossing} that $w^\intercal \beta_*$ does not cross zero and thus $\upsilon$ is well-defined for $t \ge 0$ and evolves according to the Riccatti equation,
\begin{align}
    \dot{\upsilon} 
    &=  1 + \delta\upsilon - \|\beta_*\|^2\upsilon^2, &&\upsilon(0)=\tfrac{a_0}{w_0^\intercal \beta_*}.
\end{align}
These dynamics have a solution similar to \cref{eq:nu}, which we leave to the reader.
With $\upsilon(t)$, we can then solve for the dynamics of $w^\intercal \beta_*$.
Let $\omega = w^\intercal \beta_*$, then $\omega$ evolves according to the Bernoulli equation,
\begin{align}
    \dot{\omega} &= \omega \left( \upsilon\|\beta\|^2 - \upsilon^{2}\omega^2 \right), &&\omega(0) = w(0)^\intercal \beta_*,
\end{align}
which can be solved analytically using $\upsilon(t)$, analogous to the solution for $a(t)$ in \cref{app:single-neuron-positive}.
From the solutions for $\upsilon, \omega$, we can easily obtain dynamics for $\mu, \phi$.

\subsubsection{Basins of attraction}
\label{app:single-neuron-basins}
From Lemma~\ref{lemma:crossing} we know that $a$ can cross zero no more than once during its trajectory. 
Consequently, we can identify the basin of attraction by determining the conditions under which $a$ changes sign.
This analysis is crucial because initial conditions leading to a sign change in $a$ correspond to scenarios where initial positive and negative values of $a_0$ are drawn towards the negative and positive branches of the minima manifold, respectively.
From \cref{eq:nu} we can immediately see that $a$ will change sign when the denominator vanishes. 
This can happen if  $\sqrt{\delta^2 + 4\|\beta_*\|^2} < - 2\nu_0 - \delta$. 
For $\delta < 0$, this is satisfied if $\nu_0 < \frac{1}{2} \left(-\delta - \sqrt{\delta^2 + 4\|\beta_*\|^2}\right)$, which gives the hyperplane $w_0^\intercal \beta_* + \frac{a_0}{2}\left(\delta + \sqrt{\delta^2 + 4\|\beta_*\|^2} \right) = 0$ that separates between initializations for which $a$ changes sign and initializations for which it does not (\cref{fig:single-neuron-basins}).
Consequently, letting $S^+$ be the set of initializations attracted to the minimum manifold with $a > 0$, we have that:

\begin{equation}
    S^+ = \left\{ (w_0,a_0) \ \middle| \
    \begin{array}{ll}
        a_0 > 0 & \text{if } \delta \geq 0 \\
        w_0^\intercal \beta_* > -\frac{a_0}{2}\left(\delta + \sqrt{\delta^2 + 4\|\beta_*\|^2}\right) & \text{if } \delta < 0
    \end{array}
    \right\}
\end{equation}
where the bottom inequality means that $\beta_0$ is sufficiently aligned to $\beta_*$ in the case of $a_0 \geq 0$ or sufficiently misaligned in the case of $a_0 \leq 0$. We can similarly define the analogous $S^-$. An initialization on the separating hyperplane will converge to a saddle point where $w^\intercal\beta_* = a = 0$.

\begin{figure}[H]
    \centering
    \includegraphics[width=0.5\linewidth]{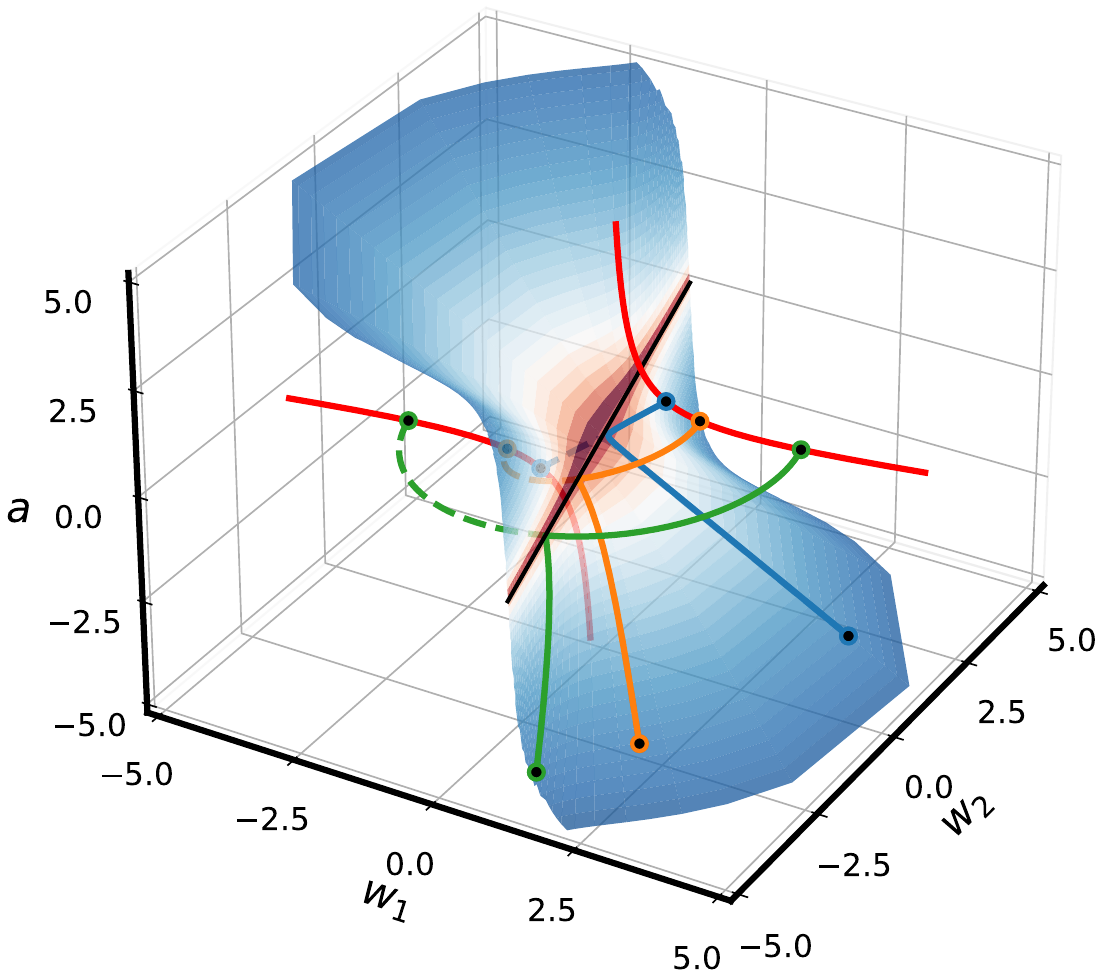}
    \caption{\textbf{Two basins of attraction.}
    For this model, parameter space is partitioned into two basins of attraction, one for the positive and negative branch of the minima manifold.
    The surface separating the basins of attraction is determined by the equation $w_0^\intercal \beta_* + \frac{a_0}{2} \left(\delta + \sqrt{\delta^2 + 4 \|\beta_*\|^2}\right) = 0$.
    For a given $\delta$, this equation describes a hyperplane through the origin.
    However, a given $\delta$ can only be achieved on the surface of some hyperboloid.
    Thus, the separating surface is the union of the intersections of a hyperplane and a hyperboloid, both parameterized by $\delta$.
    This intersection is empty if $\delta > 0$.
    Initializations exactly on the separating surface will travel along the surface to a saddle point where $w^\intercal \beta_* = a = 0$.
    }
    \label{fig:single-neuron-basins}
\end{figure}

\subsubsection{Recovering parameters \texorpdfstring{$(a, w)$}{} from \texorpdfstring{$(\mu, \phi)$}{}}
\label{app:single-neuron-a-w-from-mu-phi}

We now discuss how to recover the dynamics of the parameters $(a, w)$ from our solutions for $(\mu, \phi)$.
We can recover $a$ and $\|w\|$ from $\mu$.
Using \cref{eq:single-neuron-key-simplification} discussed previously, we can show
\begin{equation}
    a = \mathrm{sgn}(\mu)\sqrt{\frac{\sqrt{\delta^2 + 4\mu^2} + \delta}{2}}, \qquad \|w\| = \sqrt{\frac{\sqrt{\delta^2 + 4\mu^2} - \delta}{2}}.
\end{equation}
We now discuss how to obtain the vector $w$ from $\phi$. 
The key observation, as discussed in \cref{sec:single-neuron}, is that $w$ only moves in the span of $w_0$ and $\beta_*$. This means we can express $w(t)$ as 
\begin{equation}
    w(t) = c_1(t)\left(\frac{\beta_*}{\|\beta_*\|}\right) + c_2(t) \left(\tfrac{\left(\mathbf{I}_d - \frac{\beta_*\beta_*^\intercal}{\|\beta_*\|^2}\right)w_0}{\sqrt{\|w_0\|^2-\left(\frac{\beta_*^\intercal w_0}{\|\beta_*\|}\right)^2}}\right)
\end{equation}
where $c_1(t)$ is the coefficient in the direction of $\beta_*$ and $c_2(t)$ is the coefficient in the direction orthogonal to $\beta_*$ on the two-dimensional plane defined by $w_0$.
From the definition of $\phi$ we can easily obtain the coefficients $ c_1 = \|w\|\phi$ and $c_2 = \sqrt{\|w\|^2-c_1^2}$.
We always choose the positive square root for $c_2$, as $c_2(t) \ge 0$ for all $t$. 
See \cref{app:experimental-details-single-neuron} for experimental details of how we ran our simulations and a notebook generating these exact solutions.

\begin{figure}[H]
    \centering
    \includegraphics[width=\columnwidth]{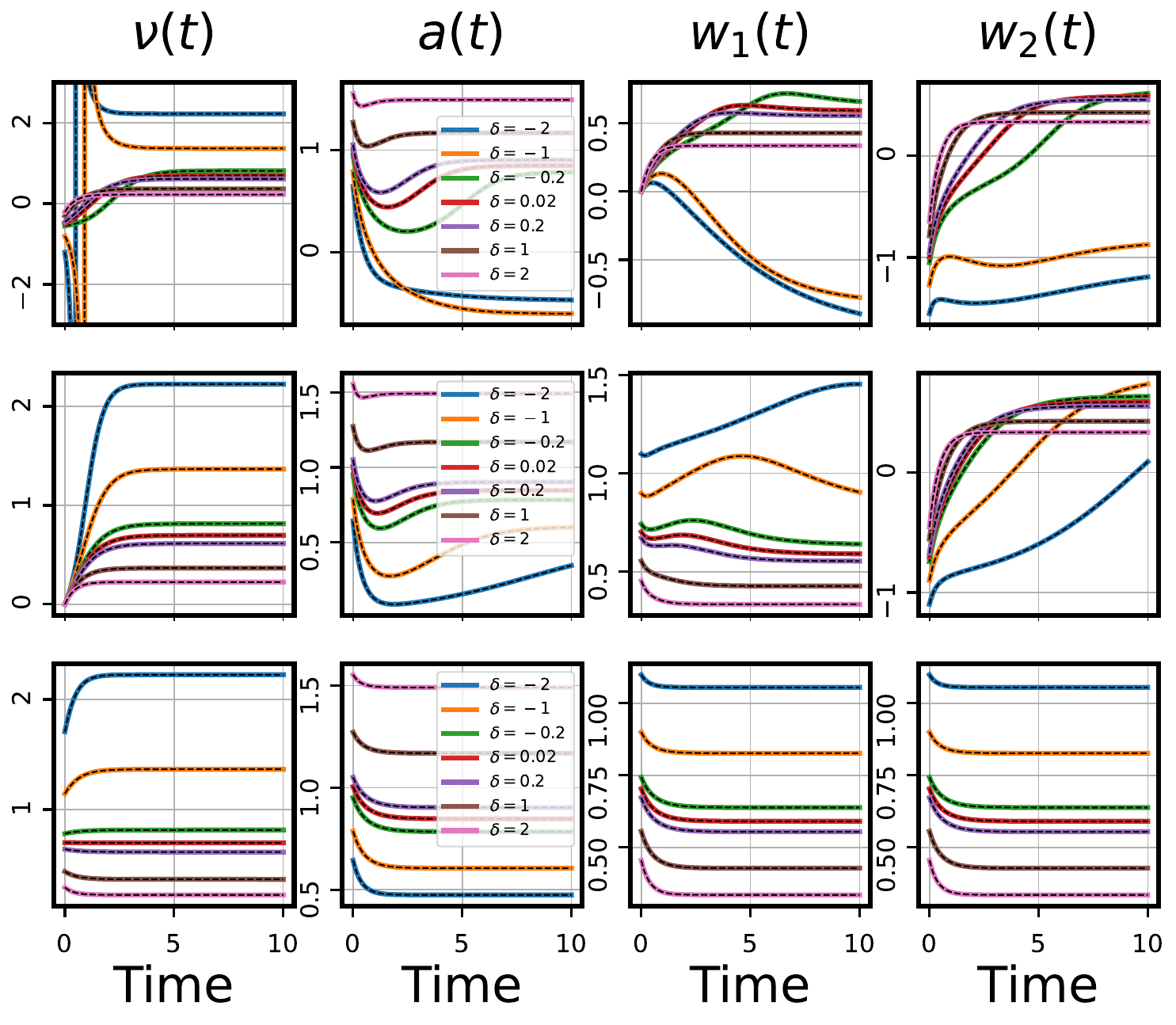}

    \caption{\textbf{Exact temporal dynamics of relevant variables in single-hidden neuron model.} Our theory recovers the time evolution under gradient flow of the quantities considered in this section, specifically $\nu$, $\varphi$, and $\zeta$, as well as the resulting dynamics of the model parameters $\{a, w_1, w_2\}$. The true $\beta*$ is a unit vector pointing in $\pi/4$ direction; $\beta(0)$ is a unit vector pointing towards $3\pi/2$, $-\pi/4$, and $\pi/4$ directions, respectively, for each of the three rows. $\delta$ then defines how $a(0)$ and $\|w(0)\|$ are chosen for a particular $\beta(0)$ where by convention we choose $a(0) > 0$.}
    \label{fig:single-neuron-trajectories}
\end{figure}

\subsection{Function space dynamics \texorpdfstring{of $\beta$}{}}
\label{app:single-neuron-beta}

The network's function is determined by the product $\beta = aw$ and governed by the ODE,
\begin{equation}
    \label{eq:single-neuron-beta}
    \dot{\beta} = a \dot{w} + \dot{a} w = -\underbrace{\left(\eta_w a^2\mathbf{I}_d + \eta_a w w^\intercal\right)}_{M} \underbrace{\left(X^\intercal X \beta - X^\intercal y\right)}_{X^\intercal \rho}.
\end{equation}
Notice, that the vector $X^\intercal \rho$ driving the dynamics of $\beta$ is the gradient of the loss with respect to $\beta$, $X^\intercal \rho = \nabla_\beta \mathcal{L}$.
Thus, these dynamics can be interpreted as preconditioned gradient flow on the loss in $\beta$ space where the preconditioning matrix $M$ depends on time through its dependence on $a^2$ and $ww^\intercal$.
The matrix $M$ also characterizes the NTK matrix, $K = XMX^\intercal$.
As discussed in \cref{sec:single-neuron}, our goal is to understand the evolution of $M$ along a trajectory $\{\beta(t) \in \mathbb{R}^d \;:\; t \ge 0 \}$ solving \cref{eq:single-neuron-beta}.

First, notice that by expanding $\|\beta\|^2 = a^2\|w\|^2$ in terms of the conservation law, we can show
\begin{equation}
    a^2 = \frac{\sqrt{\delta^2 + 4\eta_a\eta_w\|\beta\|^2} + \delta}{2\eta_w},
\end{equation}
which is the unique positive solution of the quadratic expression $\eta_wa^4 - \delta a^2 - \eta_a\|\beta\|^2 = 0$.
When $a^2 > 0$ we can use this solution and the outer product $\beta\beta^\intercal = a^2 ww^\intercal$ to solve for $ww^\intercal$ in terms of $\beta$,
\begin{equation}
    ww^\intercal = \frac{\sqrt{\delta^2 + 4\eta_a\eta_w\|\beta\|^2} - \delta}{2\eta_a}\frac{\beta\beta^\intercal}{\|\beta\|^2}.
\end{equation}
Plugging these expressions into $M$ gives
\begin{equation}
    M = \frac{\sqrt{\delta^2 + 4\eta_a\eta_w\|\beta\|^2} + \delta}{2}\mathbf{I}_d + \frac{\sqrt{\delta^2 + 4\eta_a\eta_w\|\beta\|^2} - \delta}{2}\frac{\beta\beta^\intercal}{\|\beta\|^2}.
\end{equation}

Thus, given any initialization $a_0, w_0$ such that $a(t)^2 \neq 0$ for all $t \ge 0$, we can express the dynamics of $\beta$ entirely in terms of $\beta$.
This is true for all initializations with $\delta \ge 0$, except if initialized on the saddle point at the origin.
It is also true for all initializations with $\delta < 0$ where the sign of $a$ does not switch signs.
In the next section we will show how to interpret these trajectories as time-warped mirror flows for a potential that depends on $\delta$.
As a means of keeping the analysis entirely in $\beta$ space, we will make the slightly more restrictive assumption to only study trajectories given any initialization $\beta_0$ such that $\|\beta (t)\| > 0$ for all $t \ge 0$.

Notice, that $\eta_a$ and $\eta_w$ only appear in the dynamics for $\beta$ as the product $\eta_a\eta_w$ or in the expression for $\delta$.
By defining $\beta' = \sqrt{\eta_a\eta_w} \beta$ and $y' = \sqrt{\eta_a\eta_w} y$ and studying the dynamics of $\beta'$, we can absorb $\eta_a\eta_w$ into the $\beta$ terms in $M$ and the additional factor $\sqrt{\eta_a\eta_w}$ into the $\beta$ and $y$ terms in $\rho$.
This transformation of $\beta$ and $y$ merely rescales $\beta$ space without changing the loss landscape or location of critical points. 
As a result, from here on we will, without loss of generality, study the dynamics of $\beta$ assuming $\eta_a\eta_w = 1$.

\subsubsection{Kernel dynamics}
\label{app:single-neuron-kernel}

The dynamics of the NTK matrix $K = X M X^\intercal$ is determined by $\dot{M}$. From \cref{eq:M_c=1}, which is derived in this section, we can write $\dot{M} = \frac{2 \|\beta\|}{\kappa} (\mathbf{I}_d + \hat{\beta}\hat{\beta}^\intercal) \partial_t \|\beta\| + \frac{\kappa - \delta}{2} \partial_t (\hat{\beta}\hat{\beta}^\intercal)$ where $\hat{\beta} = \tfrac{\beta}{\|\beta\|}$. From this expression we see that the change in $M$ is driven by two terms, one that depends on the change in the magnitude of $\beta$ and another that depends on the change in the direction of $\beta$. As done in the main text, we consider $\delta \gg 0$,  $\delta \ll 0$, and $\delta = 0$ to identify different regimes of learning. For $\delta \gg 0$, the coefficients in front of both terms vanish, and thus, irrespective of the trajectory taken from $\beta(0)$ to $\beta_*$, the change in the NTK is vanishing, indicative of a lazy regime. For $\delta \ll 0$, the coefficient for the first term vanishes, while the coefficient on the second term diverges. Here, the change in the NTK is driven solely by the change in the direction of $\beta$. This is why for large negative delta we observe a delayed rich regime, where the eventual alignment of $\beta$ to $\beta_*$ leads to a dramatic change in the kernel. When $\delta = 0$, the coefficients for both terms are roughly of the same order, and thus changes in both the magnitude and direction of $\beta$ contribute to a change in the kernel, indicative of a rich regime.

\subsection{Deriving the inductive bias}
\label{app:single-neuron-inductive-bias}

Until now, we have primarily considered that $X^\intercal X$ is either whitened or full rank, ensuring the existence of a unique least squares solution $\beta_*$.
In this setting, $\delta$ influences the trajectory the model takes from initialization to convergence, but all models eventually converge to the same point, as shown in \cref{fig:single-neuron-beta}.
Now we consider the over-parameterized setting where we have more features $d$ than observations $n$ such that $X^\intercal X$ is low-rank and there exists infinitely many interpolating solutions in function space.
By studying the structure of $M$ we can characterize or even predict how $\delta$ determines which interpolating solution the dynamics converge to among all possible interpolating solutions.
To do this we will extend a time-warped mirror flow analysis strategy pioneered by \citet{azulay2021implicit}.

\subsubsection{Overview of time-warped mirror flow analysis}
\label{app:mirror-flow-analysis}

Here we recap the standard analysis for determining the implicit bias of a linear network through mirror flow.
As first introduced in \citet{gunasekar2018characterizing}, if the learning dynamics of the predictor $\beta$ can be expressed as a \emph{mirror flow} for some strictly convex potential $\Phi_\alpha(\beta)$,
\begin{equation}
    \label{eq:mirror-flow}
    \dot{\beta} = -\left(\nabla^2 \Phi_\alpha(\beta)\right)^{-1} X^\intercal \rho,
\end{equation}
where $\rho = (X\beta - y)$ is the residual, then the limiting solution of the dynamics is determined by the constrained optimization problem,
\begin{equation}
    \label{eq:mirror-flow-bregman}
    \beta(\infty) = \argmin_{\beta \in \mathbb{R}^d} D_{\Phi_{\alpha}}(\beta, \beta(0)) \quad \text{s.t.} \quad  X \beta = y,
\end{equation}
where $D_{\Phi_{\alpha}}(p,q) = \Phi_{\alpha}(p) - \Phi_{\alpha}(q) - \langle \nabla \Phi_{\alpha}(q), p-q\rangle$ is the Bregman divergence defined with $\Phi_{\alpha}$.
To understand the relationship between mirror flow \cref{eq:mirror-flow} and the optimization problem \cref{eq:mirror-flow-bregman}, we consider an equivalent constrained optimization problem
\begin{equation}
    \label{eq:argmin-Q}
    \beta(\infty) = \argmin_{\beta \in \mathbb{R}^d} Q(\beta) \quad \text{s.t.} \quad  X \beta = y,
\end{equation}
where $Q(\beta) = \Phi_\alpha(\beta) - \nabla \Phi_\alpha(\beta(0))^\intercal \beta$, which is often referred to as the \emph{implicit bias}.
$Q(\beta)$ is strictly convex, and thus it is sufficient to show that $\beta(\infty)$ is a first order KKT point of the constrained optimization (\ref{eq:argmin-Q}).
This is true iff there exists $\nu \in \mathbb{R}^n$ such that $\nabla Q(\beta(\infty)) = X^\intercal \nu$.
The goal is to derive $\nu$ from the mirror flow \cref{eq:mirror-flow}.
Notice, we can rewrite \cref{eq:mirror-flow} as, $\dot{\left(\nabla \Phi_\alpha(\beta)\right)} = -X^\intercal \rho$, which integrated over time gives
\begin{equation}
    \nabla \Phi_\alpha(\beta(\infty)) - \nabla \Phi_\alpha(\beta(0)) = -X^\intercal\int_0^\infty \rho(t)dt.
\end{equation}
The LHS is $\nabla Q(\beta(\infty))$.
Thus, by defining $\nu = \int_0^\infty \rho(t)dt$, which assumes the residual decays fast enough such that this is well defined, then we have shown the desired KKT condition.
Crucial to this analysis is that there exists a solution to the second-order differential equation
\begin{equation}
    \nabla^2 \Phi_\alpha(\beta) = \left(\nabla_\theta \beta \nabla_\theta \beta^\intercal\right)^{-1},
\end{equation}
which even for extremely simple Jacobian maps may not be true \cite{gunasekar2021mirrorless}.
\citet{azulay2021implicit} showed that if there exists a smooth positive function $g(\beta) : \mathbb{R}^d \to (0,\infty)$ such that the ODE,
\begin{equation}
    \nabla^2 \Phi_\alpha(\beta) = g(\beta)\left(\nabla_\theta \beta \nabla_\theta \beta^\intercal\right)^{-1},
\end{equation}
has a solution, then the previous interpretation holds for $\Phi_\alpha(\beta)$ with $\nu = \int_0^\infty g(\beta(t'))\rho(t')dt$.
As before, it is crucial that this integral exists and is finite.
\citet{azulay2021implicit} further explained that this scalar function $g(\beta)$ can be considered as warping time $\tau(t) = \int_0^tg(\beta(t'))dt'$ on the trajectory taken in predictor space $\beta(\tau(t))$.
So long as this warped time doesn't ``stall out'', that is we require that $\tau(\infty) = \infty$, then this will not change the interpolating solution.

\subsubsection{Applying time-warped mirror flow analysis}

Here show how to apply the time-warped mirror flow analysis to the dynamics of $\beta$ derived in \cref{app:single-neuron-beta} where $\nabla_\theta \beta \nabla_\theta \beta^\intercal = M$.
We will only consider initializations $\beta_0$ such that $\|\beta(t)\| > 0$ for all $t \ge 0$, such that $M$ can be expressed as
\begin{equation}
    M = \frac{\sqrt{\delta^2 + 4\|\beta\|^2} + \delta}{2}I_d + \frac{\sqrt{\delta^2 + 4\|\beta\|^2} - \delta}{2}\frac{\beta\beta^\intercal}{\|\beta\|^2}.
\end{equation}

\textbf{Computing $M^{-1}$.}
Whenever $\|\beta\| > 0$, then $M$ is a positive definite matrix with a unique inverse that can be derived using the Sherman–Morrison formula, $(A + uv^\intercal)^{-1} = A^{-1} - \tfrac{A^{-1}uv^\intercal A^{-1}}{1 + u^\intercal A^{-1}v}$. Here we can define $A$, $u$, and $v$ as
\begin{equation}
    A = \left(\frac{\sqrt{\delta^2 + 4\|\beta\|^2} + \delta}{2}\right)I_d,\;
    u = \left(\frac{\sqrt{\delta^2 + 4\|\beta\|^2} - \delta}{2\|\beta\|^2}\right)\beta,\;
    v = \beta
\end{equation}
First notice the following simplification, $u^\intercal A^{-1}v = \frac{\sqrt{\delta^2 + 4\|\beta\|^2} - \delta}{\sqrt{\delta^2 + 4\|\beta\|^2} + \delta}$.
After some algebra, $M^{-1}$ is
\begin{equation}
    M^{-1} = \left(\frac{2}{\sqrt{\delta^2 + 4\|\beta\|^2} + \delta}\right)I_d - \left(\frac{\frac{\sqrt{\delta^2 + 4\|\beta\|^2} - \delta}{\sqrt{\delta^2 + 4\|\beta\|^2} + \delta}}{\|\beta\|^2\sqrt{\delta^2 + 4\|\beta\|^2}}\right)\beta\beta^\intercal
\end{equation}
To make notation simpler we will define the following two scalar functions,
\begin{equation}
    \label{eq:f-defn-M-inv-two-layer}
    f_\delta(x) = \frac{2}{\sqrt{\delta^2 + 4x} + \delta},\qquad
    h_\delta(x) = \frac{\sqrt{\delta^2 + 4x} - \delta}{x\sqrt{\delta^2 + 4x}\left(\sqrt{\delta^2 + 4x} + \delta\right)},
\end{equation}
such that we can express $M^{-1} = f_\delta\left(\|\beta\|^2\right)I_d - h_\delta\left(\|\beta\|^2\right)\beta\beta^\intercal$.

\textbf{Proving $M^{-1}$ is not a Hessian map.}
If $M^{-1}$ is the Hessian of some potential, then we can show that the dynamics of $\beta$ are a mirror flow.
However, from our expression for $M^{-1}$ we can actually prove that it is \emph{not} a Hessian map.
As discussed in \citet{gunasekar2021mirrorless}, a symmetric matrix $H(\beta)$ is the Hessian of some potential $\Phi(\beta)$ if and only if it satisfies the condition,
\begin{equation}
    \forall \beta\in \mathbb{R}^m, \quad \forall i,j,k \in [m] \quad 
    \frac{\partial H_{ij}(\beta)}{\partial \beta_k } = \frac{\partial H_{ik}(\beta)}{\partial \beta_j }.
\end{equation}
We will use this property to show $M^{-1}$ is not a Hessian map.
First, notice this condition is trivially true when $i = j = k$.
Second, notice that for all $i \neq j \neq k$,  
\begin{equation}
    \frac{\partial M^{-1}_{ij}}{\partial \beta_k } = \frac{\partial M^{-1}_{ik}}{\partial \beta_j } = -2\nabla h_\delta\left(\|\beta\|^2\right) \beta_i\beta_j\beta_k
\end{equation}
Thus, $M^{-1}$ is a Hessian map if and only if for all $i \neq j$, $\frac{\partial M^{-1}_{ii}}{\partial \beta_j} = \frac{\partial M^{-1}_{ij}}{\partial \beta_i}$.
Using our expression for $M^{-1}$, the LHS is
\begin{equation}
    \frac{\partial M^{-1}_{ii}}{\partial \beta_j} = 2\nabla f_\delta\left(\|\beta\|^2\right) \beta_j - 2\nabla h_\delta\left(\|\beta\|^2\right) \beta_j\beta_i^2
\end{equation}
while the RHS is
\begin{equation}
    \frac{\partial M^{-1}_{ij}}{\partial \beta_i} = -h_\delta\left(\|\beta\|^2\right) \beta_j - 2\nabla h_\delta\left(\|\beta\|^2\right) \beta_j\beta_i^2
\end{equation}
Thus, $M^{-1}$ is a Hessian map if and only if $2\nabla f_\delta(x) + h_\delta(x) = 0$.
Plugging in our definitions of $f_\delta(x)$ and $h_\delta(x)$ we find
\begin{equation}
    \label{eq:hessian-condition-f-h}
    2\nabla f_\delta(x) + h_\delta(x) = \frac{-4}{\sqrt{\delta^2 + 4x}(\sqrt{\delta^2 + 4x} + \delta)^2},
\end{equation}
which does not equal zero and thus $M^{-1}$ is not a Hessian map.

\textbf{Finding a scalar function $g_\delta(x)$ such that $g_\delta(\|\beta\|^2)M^{-1}$ is a Hessian map.}
While we have shown that $M^{-1}$ is not a Hessian map, it is very close to a Hessian map.
Here we will show that there exists a scalar function $g_\delta(x)$ such that $g_\delta\left(\|\beta\|^2\right)M^{-1}$ is a Hessian map.
For any $g_\delta(x)$ can define $g_\delta\left(\|\beta\|^2\right)M^{-1}$ in terms of two new functions $\tilde{f}_\delta(x)$ and $\tilde{h}_\delta(x)$ evaluated at $x = \|\beta\|^2$,
\begin{equation}
    g_\delta\left(\|\beta\|^2\right)M^{-1} = \underbrace{g_\delta\left(\|\beta\|^2\right)f_\delta\left(\|\beta\|^2\right)}_{\tilde{f}_\delta\left(\|\beta\|^2\right)}I_d - \underbrace{g_\delta\left(\|\beta\|^2\right)h_\delta\left(\|\beta\|^2\right)}_{\tilde{h}_\delta\left(\|\beta\|^2\right)}\beta\beta^\intercal.
\end{equation}
Thus, as derived in the previous section, we get the analogous condition on $\tilde{f}_\delta(x)$ and $\tilde{h}_\delta(x)$ for $g_\delta\left(\|\beta\|^2\right)M^{-1}$ to be a Hessian map,
\begin{equation}
    2 \underbrace{\left(\nabla g_\delta(x) f_\delta(x) + g(x)\nabla f_\delta(x)\right)}_{\nabla \tilde{f}_\delta(x)} + \underbrace{g_\delta(x)h_\delta(x)}_{\tilde{h}_\delta(x)} = 0
\end{equation}
Rearranging terms we find that $g_\delta(x)$ must solve the ODE
\begin{equation}
    \nabla g_\delta(x) = -\left(2f_\delta(x)\right)^{-1}\left(2\nabla f_\delta(x) + h_\delta(x)\right)g_\delta(x).
\end{equation}
Using our previous expressions (\cref{eq:f-defn-M-inv-two-layer} and \cref{eq:hessian-condition-f-h}) we find
\begin{equation}
-\left(2f_\delta(x)\right)^{-1}\left(2\nabla f_\delta(x) + h_\delta(x)\right) = \frac{1}{\sqrt{\delta^2 + 4x}(\sqrt{\delta^2 + 4x} + \delta)},
\end{equation}
which implies $g_\delta(x)$ solves the differential equation, $\nabla g_\delta(x) = \frac{g_\delta(x)}{\sqrt{\delta^2 + 4x}(\sqrt{\delta^2 + 4x} + \delta)}$.
The solution is $g_\delta(x) = c\sqrt{\sqrt{\delta^2 + 4x} + \delta}$, where $c \in \mathbb{R}$ is a constant.
Let $c = 1$.
Plugging in our expressions for $g_\delta\left(\|\beta\|^2\right)$, $f_\delta\left(\|\beta\|^2\right)$, $h_\delta\left(\|\beta\|^2\right)$, we get that
\begin{equation}
    \label{eq:g-m-inv-two-layer}
    g_\delta\left(\|\beta\|^2\right)M^{-1} = \left(\frac{2}{\sqrt{\sqrt{\delta^2 + 4\|\beta\|^2} + \delta}}\right)I_d - \left(\frac{\frac{\sqrt{\delta^2 + 4\|\beta\|^2} - \delta}{\sqrt{\sqrt{\delta^2 + 4\|\beta\|^2} + \delta}}}{\|\beta\|^2\sqrt{\delta^2 + 4\|\beta\|^2}}\right)\beta\beta^\intercal
\end{equation}
is a Hessian map for some unknown potential $\Phi_\delta(\beta)$.

\textbf{Solving for the potential $\Phi_\delta(\beta)$.}
Take the ansatz that there exists some function scalar $q(x)$ such that $\Phi_\delta(\beta) = q_\delta(\|\beta\|) + c_\delta$ where $c_\delta$ is a constant such that $\Phi_\delta(\beta) > 0$ for all $\beta \neq 0$ and $\Phi_\delta(0) = 0$.
The Hessian of this ansatz takes the form,
\begin{equation}
    \nabla^2 \Phi_\delta(\beta) = \left(\frac{\nabla q(\|\beta\|)}{\|\beta\|}\right) I_d - \left(\frac{\nabla q(\|\beta\|)}{\|\beta\|^{3}} - \frac{\nabla^2 q(\|\beta\|)}{\|\beta\|^2}\right)\beta\beta^\intercal.
\end{equation}
Equating terms from our expression for $g_\delta\left(\|\beta\|^2\right)M^{-1}$ (equation~\ref{eq:g-m-inv-two-layer}) we get the expression for $\nabla q(\|\beta\|)$
\begin{equation}
    \label{eq:gradient-condition-two-layer}
    \nabla q(\|\beta\|) = \frac{2\|\beta\|}{\sqrt{\sqrt{\delta^2 + 4\|\beta\|^2} + \delta}},
\end{equation}
which plugged into the second term gives the expression for $\nabla^2 q(\|\beta\|)$,
\begin{equation}
    \label{eq:hessian-condition-two-layer}
    \nabla^2 q(\|\beta\|) = \frac{2}{\sqrt{\sqrt{\delta^2 + 4\|\beta\|^2} + \delta}} - \left(\frac{\frac{\sqrt{\delta^2 + 4\|\beta\|^2} - \delta}{\sqrt{\sqrt{\delta^2 + 4\|\beta\|^2} + \delta}}}{\sqrt{\delta^2 + 4\|\beta\|^2}}\right) = \frac{\sqrt{\sqrt{\delta^2 + 4\|\beta\|^2} + \delta}}{\sqrt{\delta^2 + 4\|\beta\|^2}}.
\end{equation}
We now look for a function $q(x)$ such that both these conditions (\cref{eq:gradient-condition-two-layer} and \cref{eq:hessian-condition-two-layer}) are true.
Consider the following function and its derivatives,
\begin{align}
    q(x) &= \frac{1}{3}\left(\sqrt{\delta^2 + 4x^2} - 2\delta\right)\sqrt{\sqrt{\delta^2 + 4x^2} + \delta}\\
    \nabla q(x) &= \frac{2x}{\sqrt{\sqrt{\delta^2 + 4x^2} + \delta}}\\
    \nabla^2 q(x) &= \frac{\sqrt{\sqrt{\delta^2 + 4x^2} + \delta}}{\sqrt{\delta^2 + 4x^2}}
\end{align}
Letting $x = \|\beta\|$ notice $\nabla q(\|\beta\|)$ and $\nabla^2 q(\|\beta\|)$ satisfies the previous conditions.
Furthermore, $\nabla^2 q(x) > 0$ for all $\delta$ as long as $x \neq 0$ and thus $q(x)$ is a convex function which achieves its minimum at $x = 0$. 
Thus, the constant $c_\delta = -q(0)$ is
\begin{equation}
    c_\delta = 
    \begin{cases}
        0 & \text{if } \delta \le 0\\
        \frac{\sqrt{2}|\delta|^\frac{3}{2}}{3} & \text{if } \delta > 0
    \end{cases} = \max\left(0, \mathrm{sgn}(\delta)\frac{\sqrt{2}|\delta|^\frac{3}{2}}{3}\right),
\end{equation}
and the potential $\Phi_\delta(\beta)$ is 
\begin{equation}
    \Phi_\delta(\beta) = \frac{1}{3}\left(\sqrt{\delta^2 + 4\|\beta\|^2} - 2\delta\right)\sqrt{\sqrt{\delta^2 + 4\|\beta\|^2} + \delta} + \max\left(0, \mathrm{sgn}(\delta)\frac{\sqrt{2}|\delta|^\frac{3}{2}}{3}\right).
\end{equation}
Finally, putting it all together, we can express the inductive bias as in \cref{thrm:single-neuron-implicit-bias}.

\subsubsection{Connection to Theorem 2 in \texorpdfstring{\citet{azulay2021implicit}}{Azulay et al.}}

We discuss how \cref{thrm:single-neuron-implicit-bias} connects to Theorem 2 in \citet{azulay2021implicit}, which we rewrite:

\begin{theorem}[Theorem 2 from \citet{azulay2021implicit}]
    \label{thrm:single-neuron-azulay}
    For a depth 2 fully connected network with a single hidden neuron ($h = 1$), any $\delta \ge 0$, and initialization $\beta_0$ such that $\beta_0 \neq 0$, if the gradient flow solution $\beta(\infty)$ satisfies $X \beta(\infty) = y$, then,
    \begin{equation}
        \beta(\infty) = \argmin_{\beta \in \mathbb{R}^d} q_\delta(\|\beta\|) + z^\intercal \beta \quad \mathrm{s.t.} \quad X \beta = y
    \end{equation}
    where $q_\delta(x) = \frac{\left(x^2 - \frac{\delta}{2}\left(\frac{\delta}{2} + \sqrt{x^2 + \frac{\delta^2}{4}}\right)\right)\sqrt{\sqrt{x^2 + \frac{\delta^2}{4}} - \frac{\delta}{2}}}{x}$ and $z = -\frac{3}{2}\sqrt{\sqrt{\|\beta_0\|^2 + \frac{\delta^2}{4}} - \frac{\delta}{2}} \frac{\beta_0}{\|\beta_0\|}$.
\end{theorem}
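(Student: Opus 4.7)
The plan is to extend the time-warped mirror flow analysis of Azulay et al.\ (recapped in Appendix~\ref{app:mirror-flow-analysis}) to arbitrary $\delta \in \mathbb{R}$, building directly on the function-space dynamics $\dot\beta = -M\,X^\intercal \rho$ with $M = \tfrac{\kappa+\delta}{2}\mathbf{I}_d + \tfrac{\kappa-\delta}{2}\tfrac{\beta\beta^\intercal}{\|\beta\|^2}$ and $\kappa = \sqrt{\delta^2+4\|\beta\|^2}$ already derived in this section. First I would invert $M$ using the Sherman--Morrison formula to obtain $M^{-1} = f_\delta(\|\beta\|^2)\mathbf{I}_d - h_\delta(\|\beta\|^2)\beta\beta^\intercal$ for explicit scalar functions $f_\delta, h_\delta$; the hypothesis $\beta(t)\neq 0$ along the trajectory guarantees $M$ is positive definite so that $M^{-1}$ is well-defined throughout training.

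Next I would check the integrability condition $\partial_{\beta_k}(M^{-1})_{ij} = \partial_{\beta_j}(M^{-1})_{ik}$ required for $M^{-1}$ to itself be a Hessian map. The three-distinct-index case is automatic by symmetry, so the whole obstruction collapses to the single scalar identity $2\nabla f_\delta + h_\delta = 0$, which one can verify fails. To bypass this I would seek a positive scalar warping $g_\delta(\|\beta\|^2)$ such that $g_\delta M^{-1}$ \emph{is} a Hessian map. Imposing the same integrability condition on $g_\delta f_\delta$ and $g_\delta h_\delta$ yields a linear separable ODE for $g_\delta$ whose solution, up to a harmless multiplicative constant, is $g_\delta(x) = \sqrt{\sqrt{\delta^2+4x}+\delta}$.

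Third I would recover the potential itself via the radial ansatz $\Psi_\delta(\beta) = q_\delta(\|\beta\|)$. Matching the isotropic and rank-one components of $\nabla^2 \Psi_\delta(\beta)$ against $g_\delta(\|\beta\|^2)M^{-1}$ gives consistent first-order conditions $\nabla q_\delta(r) = 2r/\sqrt{\sqrt{\delta^2+4r^2}+\delta}$ and $\nabla^2 q_\delta(r) = \sqrt{\sqrt{\delta^2+4r^2}+\delta}/\sqrt{\delta^2+4r^2}$, and direct integration recovers the stated closed form for $\Psi_\delta$. Since $\nabla^2 q_\delta > 0$, $\Psi_\delta$ is strictly convex on $\mathbb{R}^d$.

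Finally I would close the loop with the standard KKT wrap-up: the warped mirror flow reads $\tfrac{d}{dt}\nabla\Psi_\delta(\beta) = -g_\delta(\|\beta\|^2)\,X^\intercal\rho$, so integrating from $0$ to $\infty$ places $\nabla\Psi_\delta(\beta(\infty)) - \nabla\Psi_\delta(\beta_0)$ in $\mathrm{range}(X^\intercal)$. Combined with feasibility $X\beta(\infty)=y$ and strict convexity, this identifies $\beta(\infty)$ as the unique minimizer of $\Psi_\delta(\beta) - \nabla\Psi_\delta(\beta_0)^\intercal\beta$ on $\{\beta : X\beta = y\}$. A short simplification using the radial form yields $\nabla\Psi_\delta(\beta_0) = \psi_\delta\,\beta_0/\|\beta_0\|$ with $\psi_\delta = \sqrt{\sqrt{\delta^2+4\|\beta_0\|^2}-\delta}$, recovering the advertised expression. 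The main technical subtlety I anticipate lies in the extension to $\delta<0$: the warping $g_\delta(\|\beta\|^2)$ can shrink towards zero when $\|\beta\|$ is small, so one needs $\beta(t)\neq 0$ both to keep $g_\delta>0$ along the trajectory and to ensure the warped time $\tau(t)=\int_0^t g_\delta\,ds$ satisfies $\tau(\infty)=\infty$ with $\int_0^\infty g_\delta\rho\,ds$ finite, which is precisely the regime where the extension beyond Azulay et al.\ is nontrivial.
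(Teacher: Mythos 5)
Your proposal reconstructs, essentially step for step, the time-warped mirror flow argument the paper itself gives in Appendix~A.4 to prove its more general Theorem~3.1 (arbitrary $\delta$, assuming $\beta(t)\neq 0$), from which the stated $\delta \ge 0$ result follows once you note the potential and linear term you derive, $\Psi_\delta$ and $-\psi_\delta\beta_0/\|\beta_0\|$, equal $\tfrac{2\sqrt{2}}{3}q_\delta$ and $\tfrac{2\sqrt{2}}{3}z$ respectively---a harmless common rescaling that does not change the argmin. This is the same approach as the paper's.
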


The most striking difference is in the expressions for the inductive bias.
\citet{azulay2021implicit} take an alternative route towards deriving the inductive bias by inverting $M$ in terms of the original parameters $a$ and $w$ and then simplifying $M^{-1}$ in terms of $\beta$, which results in quite a different expression for their inductive bias.
However, they are actually functionally equivalent.
It requires a bit of algebra, but one can show that
\begin{equation}
    \Phi_\delta(\beta) =  \frac{2\sqrt{2}}{3}q_\delta(\|\beta\|) + c_\delta.
\end{equation}
%
%
Another important distinction between our two theorems lies in the assumptions we make.
\citet{azulay2021implicit} consider only initializations such that $\delta \ge 0$ and $\beta_0 \neq 0$. 
We make a less restrictive assumption by considering initializations $\beta_0$ such that $\|\beta(t)\| > 0$ for all $t \ge 0$, which allows for both positive and negative $\delta$.
Except for a measure zero set of initializations, all initializations considered by \citet{azulay2021implicit} also satisfy our assumptions.
In both cases, our assumptions ensure that $M$ is invertible for the entire trajectory from initialization to interpolating solution. 
However, it is worth considering whether the theorems would hold even when there exists a point on the trajectory where $M$ is low-rank. 
As discussed in \cref{app:single-neuron-beta}, this can only happen for an initialization with $\delta < 0$ and where the sign of $a$ changes.
Only at the point where $a(t) = 0$ does $M$ become low-rank.
A similar challenge arose in this setting when deriving the exact solutions presented in \cref{app:single-neuron-negative}. 
We were able to circumvent the issue in part by introducing \cref{lemma:crossing} proving that this sign change could only happen at most once given any initialization.
This lemma was based on the setting with whitened input, but a similar statement likely holds for the general setting.
If this were the case, we could define $M$ at this unique point on the trajectory in terms of the limit of $M$ as it approached this point. 
This could potentially allow us to extend the time-warped mirror flow analysis to all initializations such that $\|\beta_0\| > 0$.

\subsubsection{Exact solution when interpolating manifold is one-dimensional}

When the null space of $X^\intercal X$ is one-dimensional, the constrained optimization problems in \cref{thrm:single-neuron-implicit-bias} and \cref{thrm:single-neuron-azulay} have an exact analytic solution.
In this case we can parameterize all interpolating solutions $\beta$ with a single scalar $\alpha \in \mathbb{R}$ such that $\beta = \beta_* + \alpha v$ where $X^\intercal X v = 0$ and $\|v\| = 1$.
Using this description of $\beta$, we can then differentiate the inductive bias with respect to $\alpha$, set to zero, and solve for $\alpha$.
We will use the following expressions,
\begin{equation}
    \nabla_x q(x) = \frac{3}{2}\mathrm{sign}(x)\sqrt{\sqrt{x^2 + \frac{\delta^2}{4}} - \frac{\delta}{2}}, \quad
    \nabla_\alpha \|\beta\| = \frac{\alpha}{\|\beta\|}, \quad
    \nabla_\alpha z^\intercal \beta = z^\intercal v.
\end{equation}
We will also use the expression, $\|\beta\|^2 = \|\beta_*\|^2 + \alpha^2$.
Pulling these expressions together we get the following equation for $\alpha$,
\begin{equation}
    \sqrt{\sqrt{\|\beta_*\|^2 + \alpha^2 + \frac{\delta^2}{4}} - \frac{\delta}{2}} \frac{\alpha}{\sqrt{\|\beta_*\|^2 + \alpha^2}} = -\frac{2z^\intercal v}{3}.
\end{equation}
If we let $k = -\frac{2z^\intercal v}{3}$, the solution for $\alpha$ is
\begin{equation}
    \alpha = k \sqrt{\frac{k^2 + \delta}{2} + \sqrt{\left(\frac{k^2 + \delta}{2}\right)^2 + \|\beta_*\|^2}}.
\end{equation}
This solution always works for the initializations we considered in \cref{thrm:single-neuron-implicit-bias}. 
Interestingly, it appears that $\beta = \beta_* - \alpha v$ also works for initializations not previously considered. 
This includes trajectories that pass through the origin, resulting in a change in the sign of $a$.

\clearpage
\section{Wide and Deep Linear Networks}
\label{app:wide-deep-linear-network}

Here we discuss how our analysis techniques, developed in the previous section for a single-neuron linear network, can be extended to linear networks with multiple neurons, outputs, and layers.

\subsection{Wide linear networks}

We consider the dynamics of a two-layer linear network with $h$ hidden neurons and $c$ outputs, $f(x;\theta) = A^\intercal W x$, where $W \in \mathbb{R}^{h \times d}$ and $A \in \mathbb{R}^{h \times c}$.
We assume that $h \ge \min(d,c)$, such that this parameterization can represent all linear maps from $\mathbb{R}^d \to \mathbb{R}^c$.
As in the single-neuron setting, the rescaling symmetry in this model between the first and second layer implies the $h \times h$ matrix $\Delta =  A_0A_0^\intercal - W_0W_0^\intercal$ determined at initialization remains conserved throughout gradient flow \cite{du2018algorithmic}.
This can be easily shown from the temporal dynamics of $A$ and $W$,
\begin{align}
    \label{eq:ODE-A-wide}
    \dot{A} &= - \eta_a W X^{\intercal} (X \beta -Y),\\
    \label{eq:ODE-W-wide}
    \dot{W}^{\intercal} &= -\eta_wX^\intercal (X\beta -Y)A^{\intercal}.
\end{align}
Extending derivations in \cite{braun2022exact}, the NTK matrix can be expressed as 
\begin{equation}
    K = \left(\mathbf{I}_{c} \otimes X\right)\left(\eta_w A^{\intercal}A \oplus \eta_a W^{\intercal}W\right)\left(\mathbf{I}_{c} \otimes X^\intercal\right),
\end{equation}
where $\otimes$ and $\oplus$ denote the Kronecker product and sum respectively.
The Kronecker sum is defined for square matrices $C \in \mathbb{R}^{c \times c}$ and $D \in \mathbb{R}^{d \times d}$ as $C \oplus D = C \otimes \mathbf{I}_{d} + \mathbf{I}_{c} \otimes D$.

\subsubsection{Parameter space dynamics}

Inspired by our analysis of the single-neuron setting, we introduce two coordinate transformations to study the parameter space dynamics of a wide two-layer linear network.
In both analyses we assume whitened input $X^\intercal X = \mathbf{I}_d$ and let $\eta_a = \eta_w = 1$.
However, we will find that the analysis of the dynamics in function space, for general unwhitened data, is more tractable.

\textbf{Parameter dynamics when $c = 1$.}
Drawing insights from our analysis of the single-neuron scenario ($h = c = 1$), we might consider a combination of hyperbolic and spherical coordinate transformations to study the parameter space dynamics of a wide two-layer linear network.
We consider the following two quantities for each hidden neuron $k \in [h]$:
\begin{equation}
    \mu_k = a_k\|w_k\|, \qquad \phi_k = \frac{w_k^\intercal \beta_*}{\|w_k\|\|\beta_*\|}.
\end{equation}
We will also consider a new matrix quantity $Q \in \mathbb{R}^{h\times h}$ with elements $Q_{kk'} = \frac{w_k^\intercal w_{k'}}{\|w_k\|\|w_{k'}\|}$.
The resulting dynamics for $\mu$ and $\phi$ can be entirely written in terms $\mu, \phi, \Delta$:
\begin{align}
    \dot{\mu} &= \sqrt{\mathrm{Diag}(\Delta)^2 + 4\mathrm{Diag}(\mu)^2}\left(\phi - Q\mu\right),\\
    \dot{\phi} &= M\mathrm{Diag}(\mu)\left((\|\beta_*\|^2 - \phi^\intercal \mu)I_h + \mathrm{Diag}(\phi)Q\mu - \phi^2\right),
\end{align}
where $M = 2\left(\sqrt{\mathrm{Diag}(\Delta)^2 + 4 \mathrm{Diag}(\mu)^2} - \mathrm{Diag}(\Delta)\right)^{-1}$.
Using the conserved structure of $\Delta$ we can express $Q$ as a function of $\mu$ and $M$,
\begin{equation}
    Q = M \mu \mu^\intercal M - M^{1/2}\Delta M^{1/2}.
\end{equation}
This approach yields a coupled nonlinear dynamical system with $2h$ variables.
Imposing additional assumptions on the initialization, such as permutation invariance between hidden neurons, can simplify the system of differential equations.
A similar approach was used by \citet{saad1995exact} to derive a set of differential equations for a soft committee machine model, capturing its online learning dynamics in a teacher-student setup, which \citet{goldt2019generalisation} extended to its generalization error dynamics.

\textbf{Parameter dynamics when $c = h$.}
In this analysis we assume an initialization such that the conserved quantities $\Delta = \delta \mathbf{I}_h$, an assumption we will discuss further in \cref{app:simplifying-assumptions-Delta}, and that $A$ is invertible throughout training.
Let $\beta_* = X^\intercal Y$, which for whitened input, is the unique minimum of the dynamics in function space.
%
%
We consider the variable $\nu = A^{-1} W \beta_* \in \mathbb{R}^{c \times c}$.
Using the identity that $\dot{A^{-1}} = - A^{-1} \dot{A} A^{-1}$ and our assumption on $\Delta$, we find that the matrix $\nu$ evolves according to the matrix Riccati ODE,
\begin{equation}
    \dot{\nu} = \beta_*^\intercal\beta_* - \delta \nu - \nu^2.
\end{equation}
Additionally, consider the variable $C = A^\intercal A$, which evolves according to the matrix Bernoulli ODE
\begin{equation}
    \dot{C} = C (\nu + \delta \mathbf{I}_h) +(\nu + \delta \mathbf{I}_h)^\intercal C  - 2C^2.
\end{equation}
Taken together we have found a change of variables, analogous to the one introduced in \cref{app:single-neuron-positive} for the single-neuron setting, that evolves according to a matrix Riccati and Bernoulli equation,
\begin{align}
    \dot{\nu} &= \beta_*^\intercal\beta_* - \delta \nu - \nu^2, && \nu(0) = A_0^{-1}W_0\beta_*,\\
    \dot{C} &= C (\nu + \delta \mathbf{I}_h) + (\nu + \delta \mathbf{I}_h)^\intercal C - 2C^2, && C(0) = A_0^\intercal A_0.
\end{align}
However, solving this system exactly as we did in the single-neuron setting is challenging. Unless we assume that $\nu$ and $\beta_*^\intercal\beta_*$ share the same eigenspace -- allowing us to decouple the dynamics of $\nu$ into a set of scalar Riccati equations -- the system cannot be easily solved.
Instead, we will find that the dynamics of the product $W^\intercal A$ in function space is more tractable and requires fewer assumptions.

\subsubsection{Function space dynamics}
We consider the dynamics of $\beta = W^\intercal A \in \mathbb{R}^{d \times c}$ in function space, which is governed by the ODE,
\begin{equation}
\label{eq:two-layer-beta-dynamics-AA-WW}
    \dot{\beta}=W^{\intercal}\dot{A} + \dot{W}^{\intercal}A = -\left(\eta_w X^\intercal(X \beta -Y)A^{\intercal}A + \eta_a W^{\intercal}W X^{\intercal}  (X\beta -Y)\right).
\end{equation}
Vectorizing using the identity  $ \mathrm{vec}(ABC) = ( C^{\intercal}\otimes A ) \mathrm{vec}(B)$ equation \ref{eq:two-layer-beta-dynamics-AA-WW} becomes
 \begin{align}
    \mathrm{vec}\left(\dot{\beta}\right) &= - \mathrm{vec} \left ( \eta_w \mathbf{I}_d X^\intercal(X \beta -Y)A^{\intercal}A  +  \eta_a W^{\intercal}W X^{\intercal} (X \beta -Y)\mathbf{I}_c \right ),\\
    &=  -( \eta_w A^{\intercal}A \otimes  \mathbf{I}_d  +  \eta_a \mathbf{I}_c \otimes W^{\intercal}W  )\mathrm{vec}(X^\intercal X\beta - X^\intercal Y),\\
    \label{eq:vect_beta}
    & =  -  \underbrace{\left(\eta_w A^{\intercal}A \oplus \eta_a W^{\intercal}W\right)}_{M} \mathrm{vec}(X^\intercal X\beta - X^\intercal Y).
\end{align}
As in the single-neuron setting, we find that the dynamics of $\beta$ can be expressed as gradient flow preconditioned by a matrix $M$ that depends on quadratics of $A$ and $W$.

\subsubsection{Proving \texorpdfstring{\cref{thrm:vec_beta}}{}}

We first prove \cref{thrm:vec_beta}.
Consider a single hidden neuron $k \in [h]$ of the multi-output model defined by the parameters $w_k \in \mathbb{R}^d$ and $a_k \in \mathbb{R}^c$.
Let $\beta_k = w_ka_k^\intercal$ be the $\mathbb{R}^{d \times c}$ matrix representing the contribution of this hidden neuron to the input-output map of the network $\beta = \sum_{k=1}^h \beta_k$.
Consider the two gram matrices $\beta_k^\intercal \beta_k \in \mathbb{R}^{c \times c}$ and $\beta_k \beta_k^\intercal \in \mathbb{R}^{d \times d}$,
\begin{equation}
    \beta_k^\intercal \beta_k = \|w_k\|^2 a_ka_k^\intercal, \qquad \beta_k\beta_k^\intercal = \|a_k\|^2 w_kw_k^\intercal.
\end{equation}
Notice that we can express $\|\beta_k\|_F^2$ as
\begin{equation}
    \|\beta_k\|_F^2 = \mathrm{Tr}(\beta_k^\intercal\beta_k) = \mathrm{Tr}(\beta_k\beta_k^\intercal) =  \|a_k\|^2\|w_k\|^2
\end{equation}
At each hidden neuron we have the conserved quantity\footnote{As long as $c > 1$, then the surface of this $d + c$ hyperboloid is always connected, however its topology will depend on the relationship between $d$ and $c$.} $\eta_w \|a_k\|^2 - \eta_a\|w_k\|^2 = \delta_k$ where $\delta_k \in \mathbb{R}$.
Using this quantity we can invert the expression for $\|\beta_k\|_F^2$ to get
\begin{align}
    \|a_k\|^2 &= \frac{\sqrt{\delta_k^2 + 4\eta_a \eta_w\|\beta_k\|_F^2} + \delta_k}{2\eta_w},\\
    \|w_k\|^2 &= \frac{\sqrt{\delta_k^2 + 4\eta_a \eta_w\|\beta_k\|_F^2} - \delta_k}{2\eta_a}.
\end{align}
When $\|\beta_k\|_F^2 > 0$, we can use these expressions to solve for the outer products $a_ka_k^\intercal$ and $w_kw_k^\intercal$ in terms of $\beta_k$ and $\delta_k$,
\begin{align}
    a_ka_k^\intercal &= \frac{\sqrt{\delta_k^2 + 4\eta_a \eta_w \|\beta_k\|_F^2} + \delta_k}{2\eta_w} \frac{\beta_k^\intercal\beta_k}{\|\beta_k\|_F^2},\\
    w_kw_k^\intercal &= \frac{\sqrt{\delta_k^2 + 4\eta_a \eta_w \|\beta_k\|_F^2} - \delta_k}{2\eta_a} \frac{\beta_k\beta_k^\intercal}{\|\beta_k\|_F^2}.
\end{align}
By substituting these expressions into the decompositions $A^\intercal A = \sum_{k=1}^h a_ka_k^\intercal$ and $W^\intercal W = \sum_{k=1}^h w_k w_k^\intercal$, we derive the representation for $M$ presented in \cref{thrm:vec_beta}: $M = \sum_{k = 1}^h M_k$ where
\begin{equation}
    M_k = \left(\frac{\sqrt{\delta_k^2 + 4\eta_a\eta_w\|\beta_k\|_F^2} + \delta_k}{2}\right)\frac{\beta_k^\intercal\beta_k}{\|\beta_k\|_F^2} \oplus \left(\frac{\sqrt{\delta_k^2 + 4\eta_a\eta_w\|\beta_k\|_F^2} - \delta_k}{2}\right)\frac{\beta_k\beta_k^\intercal}{\|\beta_k\|_F^2}.
\end{equation}

\subsubsection{Understanding \texorpdfstring{$M$}{} when there is a single-neuron \texorpdfstring{$h = 1$}{}}
\label{app:wide-deep-linear-network-single-neuron}

When there is a single-hidden neuron $h = \min (d, c) = 1$, the expression for $M$ presented in \cref{thrm:vec_beta} simplifies allowing us to precisely understand the influence of $\delta$ on the learning regime.
When $h = c = 1 $, then $\frac{\beta^\intercal\beta}{\|\beta\|_F^2} = 1$.
Therefore, \cref{eq:Mi} simplifies to
\begin{equation}
        M = \frac{\sqrt{\delta^2 + \eta_a \eta_w 4\|\beta\|^2} + \delta}{2}\mathbf{I}_d + \frac{\sqrt{\delta^2 + \eta_a \eta_w 4\|\beta\|^2} - \delta}{2}\frac{\beta\beta^\intercal}{\|\beta\|^2},
\end{equation}
and we recover \cref{eq:M_c=1} presented in \cref{sec:single-neuron}.
When $h = d = 1 $, then $\frac{\beta\beta^\intercal}{\|\beta\|_F^2} = 1$ and thus \cref{eq:Mi} simplifies to,
\begin{equation}
        M = \frac{\sqrt{\delta^2 + \eta_a \eta_w 4\|\beta\|^2} + \delta}{2}\frac{\beta^\intercal\beta}{\|\beta\|^2} + \frac{\sqrt{\delta^2 + \eta_a \eta_w 4\|\beta\|^2} - \delta}{2}\mathbf{I}_c.
\end{equation}
In both settings, $M$ is the weighted sum of the identity matrix and a rank-one projection matrix.
While these equations are strikingly similar there is an interesting distinction that arises in the limits of $\delta$.
As $\delta \to \infty$, then the first expression for $M$ becomes proportional to $\mathbf{I}_d$, while the second expression for $M$ becomes proportional to the rank-1 projection $\frac{\beta^\intercal\beta}{\|\beta\|^2}$.
Conversely, as $\delta \to -\infty$, then the first expression for $M$ becomes proportional to the rank-1 projection $\frac{\beta\beta^\intercal}{\|\beta\|^2}$, while the second expression for $M$ becomes proportional to $\mathbf{I}_c$. 
When $h = d = c = 1$, then $M = \sqrt{\delta^2 + \eta_a \eta_w 4\|\beta\|^2}$ and thus in both limits of $\delta \to \pm \infty$, $M$ becomes a constant independent of $\beta$.
In all settings, when $\delta = 0$, $M$ depends on $\beta$.
In other words, the influence of $\delta$ on whether the dynamics are lazy, rich, or delayed rich, crucially depends on the relative sizes of dimensions $d$, $h$, and $c$.

\subsubsection{Interpreting \texorpdfstring{$M$}{} in different limits and architectures}
\label{app:wide-deep-linear-network-architectures}

We now seek to more generally understand the influence of the conserved quantities $\delta_i$ and the relative sizes of dimensions $d$, $h$ and $c$ on the learning regime.
For a matrix $A \in \mathbb{R}^{d \times c}$, let $\mathrm{Row}(A) \subseteq \mathbb{R}^c$ and $\mathrm{Col}(A) \subseteq \mathbb{R}^d$ denote the row and column space of $A$ respectively.

\begin{theorem}
    \label{theorem:span-beta}
    The dynamics are in the lazy regime, for all $t\ge 0$, if $\delta_k \to \infty$ for all $k\in[h]$ and there exists a least squares solution $\beta_* \in \mathbb{R}^{d \times c}$ such that
    \begin{equation}
        \mathrm{Row}(\beta_*) \subseteq \mathrm{Span}\left( \bigcup_{k=1}^{h} \mathrm{Row}\left(\beta_k(0)\right) \right),
    \end{equation}
    or $\delta_k \to -\infty$ for all $k\in[h]$ and there exists a solution such that
    \begin{equation}
        \mathrm{Col}(\beta_*) \subseteq \mathrm{Span}\left( \bigcup_{k=1}^{h} \mathrm{Col}\left(\beta_k(0)\right) \right).
    \end{equation}
\end{theorem}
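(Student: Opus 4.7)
The plan is to identify the leading-order structure of $M$ from \cref{thrm:vec_beta} in each asymptotic limit, show that the induced $\beta$ dynamics are confined to a data-dependent subspace that contains the prescribed $\beta_*$, and verify that the sub-dominant terms which could drive $\beta$ out of this subspace vanish in the limit, so that the NTK stays frozen and the dynamics are lazy for all $t \ge 0$.

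For $\delta_k \to \infty$, asymptotic expansion of $\sqrt{\delta_k^2 + 4\eta_a\eta_w\|\beta_k\|_F^2}$ shows the two coefficients in \cref{eq:Mi} scale as $\delta_k + O(1/\delta_k)$ and $O(1/\delta_k)$ respectively. Writing $\beta_k = w_k a_k^\intercal$, the matrix $\beta_k^\intercal\beta_k/\|\beta_k\|_F^2 = a_k a_k^\intercal/\|a_k\|^2 = P_{\mathrm{Row}(\beta_k)}$ is the orthogonal projector onto $\mathrm{span}(a_k) \subseteq \mathbb{R}^c$, so at leading order $M \sim C \otimes \mathbf{I}_d$ with the symmetric matrix $C := \sum_k \delta_k P_{\mathrm{Row}(\beta_k(t))}$. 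The vectorization identity $(C \otimes \mathbf{I}_d)\mathrm{vec}(G) = \mathrm{vec}(GC)$ then gives $\dot{\beta} = -(X^\intercal X \beta - X^\intercal Y)\,C$, confining the rows of $\dot{\beta}$ to $\mathrm{Row}(C) = \sum_k \mathrm{Row}(\beta_k(t))$. To show this subspace is effectively frozen, \cref{eq:ODE-A-wide} gives $\|\dot{a}_k\| \le \eta_a \|X\|\,\|X\beta - Y\|\,\|w_k\|$, and the conservation law $\eta_w\|a_k\|^2 - \eta_a\|w_k\|^2 = \delta_k$ together with bounded $\|\beta_k\|_F = \|a_k\|\|w_k\|$ forces $\|a_k\| = \Theta(\sqrt{\delta_k})$ and $\|w_k\| = O(1/\sqrt{\delta_k})$. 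Rescaling time by $\tau = \delta_k t$ so that $\beta$ evolves on $O(1)$ intervals, the rescaled drift of $a_k/\|a_k\|$ is $O(\delta_k^{-3/2}) \to 0$, so $P_{\mathrm{Row}(\beta_k(t))} \to P_{\mathrm{Row}(\beta_k(0))}$ uniformly in $t$, $M$ becomes time-independent, and the NTK $K = (\mathbf{I}_c \otimes X)M(\mathbf{I}_c \otimes X^\intercal)$ is frozen. The row-space hypothesis on $\beta_*$ guarantees the frozen linear dynamics on $\{\beta : \mathrm{Row}(\beta) \subseteq V\}$, with $V = \mathrm{Span}\bigl(\bigcup_k \mathrm{Row}(\beta_k(0))\bigr)$, actually converge to $\beta_*$ rather than stalling with a residual that would eventually excite the sub-dominant terms and break laziness.

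The $\delta_k \to -\infty$ case is symmetric under exchange of the two Kronecker factors: now the second coefficient in \cref{eq:Mi} dominates, giving $M_k \sim |\delta_k|\,\mathbf{I}_c \otimes P_{\mathrm{Col}(\beta_k)}$ with $P_{\mathrm{Col}(\beta_k)} = w_k w_k^\intercal/\|w_k\|^2$; the dynamics become $\dot{\beta} = -D\,X^\intercal(X\beta - Y)$ with $D := \sum_k |\delta_k| P_{\mathrm{Col}(\beta_k(t))}$, confining the columns of $\dot{\beta}$ to $\sum_k \mathrm{Col}(\beta_k(t))$, and the roles of $a_k$ and $w_k$ interchange so the same rescaled-time argument freezes $\mathrm{Col}(\beta_k)$. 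The column-space hypothesis then delivers convergence within the invariant column subspace. The main obstacle is making the ``freezing'' step rigorous: it requires uniform-in-$t$ control on $\|X\beta - Y\|$ to bound $\|\dot{a}_k\|$ (respectively $\|\dot{w}_k\|$), and this control itself depends on the limiting preconditioner being contractive on the invariant subspace. The natural resolution is a bootstrap argument that self-consistently establishes exponential residual decay inside the candidate invariant subspace together with vanishing direction drift of $a_k/\|a_k\|$ (respectively $w_k/\|w_k\|$), both improving as $|\delta_k| \to \infty$.
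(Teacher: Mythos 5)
Your proof is correct and takes essentially the same approach as the paper's: take the leading-order limit of $M_k$ from \cref{thrm:vec_beta}, identify it as a projector onto the one-dimensional row (resp.\ column) space of $\beta_k$, argue that the per-neuron projectors — and hence $M_k$ and the NTK — remain frozen, and use the span hypothesis to guarantee convergence to $\beta_*$ within the frozen subspaces. The paper establishes the freezing geometrically (in the strict limit $\dot{\beta}_k$ has its rows inside $\mathrm{Row}(\beta_k)$, and since $\beta_k = w_k a_k^\intercal$ is rank one this forces $\mathrm{Row}(\beta_k(t)) = \mathrm{Row}(\beta_k(0))$ exactly), whereas you bound the angular drift of $\hat{a}_k$ under the time rescaling $\tau = \delta_k t$; the rate you state, $O(\delta_k^{-3/2})$, is really $\|\partial_\tau a_k\|$ rather than $\|\partial_\tau \hat{a}_k\| \le \|\partial_\tau a_k\|/\|a_k\| = O(\delta_k^{-2})$ since $\|a_k\| = \Theta(\sqrt{\delta_k})$ (a benign conflation, as the true rate is tighter), and the bootstrap you flag as an obstacle is not addressed in the paper's proof either, which simply takes the formal limit.
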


\begin{proof}
    As $\delta_k \to \infty$, $M_k \to |\delta_k|\tfrac{\beta_k^\intercal \beta_k}{\|\beta_k\|_F^2} \otimes \mathbf{I}_{d}$, implying $\dot{\beta_k} = -|\delta_k|\frac{\partial \mathcal{L}}{\partial \beta}\left(\tfrac{\beta_k^\intercal \beta_k}{\|\beta_k\|_F^2}\right)$.
    Notice that $\left(\tfrac{\beta_k^\intercal \beta_k}{\|\beta_k\|_F^2}\right)$ is the unique orthogonal projection matrix onto the one-dimensional row space of $\beta_k$.
    Thus, the dynamics of each $\beta_k$ follow a projected gradient descent in their row space.
    As a result, $M_k$ will not change and thus the NTK will be static.
    By assumption there exists a least squares solution $\beta_*$ such that the rows of $\beta_*$ are in the span of the rows of $\beta_k$.
    Thus, a solution will be reached as $t \to \infty$, while the $M_k$ remain static.

    As $\delta_k \to -\infty$ for all $k\in[h]$, $M_k \to \mathbf{I}_{c} \otimes |\delta_k|\tfrac{\beta_k\beta_k^\intercal}{\|\beta_k\|_F^2}$, and an analogous argument can be made.
\end{proof}

Note that the assumptions in \cref{theorem:span-beta} can be more intuitively expressed in terms of the parameter space $(W, A)$. Except in highly degenerate cases, the assumption $\mathrm{Row}(\beta_*) \subseteq \mathrm{Span}\left( \bigcup_{k=1}^{h} \mathrm{Row}\left(\beta_k(0)\right) \right)$ is equivalent to the existence of a $\beta_*$ whose rows lie in the span of $\{a_k(0)\}_{k=1}^h$, or, equivalently, to the existence of a matrix $W$ such that $\beta_* = W^\intercal A(0)$. Similarly, the condition $\mathrm{Col}(\beta_*) \subseteq \mathrm{Span}\left( \bigcup_{k=1}^{h} \mathrm{Col}\left(\beta_k(0)\right) \right)$ is in most cases equivalent to the existence of a matrix $A$ such that $\beta_* = W(0)^\intercal A$.

A direct consequence of \cref{theorem:span-beta} is that networks which narrow from input to output ($d > c$) must enter the lazy regime with probability 1 as all \(\delta_k \rightarrow \infty\) whenever $h \ge c$ and assuming independent initializations for all $\beta_k$.
In this case, the rows of $\{\beta_1,\dots,\beta_h\}$ span all of $\mathbb{R}^c$ and thus the condition on the least squares solution is trivially true.
By the same logic, networks which expand from input to output ($d < c$) do so as all \(\delta_k \to -\infty\)  whenever $h \ge d$ and assuming independent initializations for all $\beta_k$.
Additionally, when $h \ge \max(d,c)$ and assuming independent initializations for all $\beta_k$, then all networks enter the lazy regime as either all $\delta_k \to \infty$ or all $\delta_k \to -\infty$.

Another interesting implication of \cref{theorem:span-beta}, is that if there does not exist a least squares solution $\beta_*$ with rows in the span of the rows of $\{\beta_1,\dots,\beta_h\}$, then the network will enter a delayed rich regime as all $\delta_k \to \infty$, where the magnitude of the $\delta_k$ will determine the delay.
In this setting, the network is initially lazy, attempting to fit the solution within the row space of the $\beta_k$, but eventually the direction of the rows must change in order to fit the problem, leading to a rich phase.
A similar statement involving the columns of $\beta_*$ is true as all $\delta_k \to -\infty$.

\subsubsection{Simplifying \texorpdfstring{$M$}{} through assumptions on \texorpdfstring{$\Delta$}{}}
\label{app:simplifying-assumptions-Delta}

We now consider how introducing structures on $\Delta$ can lead to simpler expressions for $M$.
A natural assumption to consider is the following:

\begin{assumption}[Isotropic initialization]
    Let $A \in \mathbb{R}^{h \times c}$ and $W \in \mathbb{R}^{h \times d}$ be initialized such that $\Delta = \eta_w A(0)A(0)^\intercal - \eta_a W(0)W(0)^\intercal= \delta \mathbf{I}_h$.
\end{assumption}

In \emph{square networks}, where the dimensions of the input, hidden, and output layers coincide ($d = h = c$), and the weights are initialized as $A \sim \mathcal{N}(0, \sigma_a^2/c)$ and $W \sim \mathcal{N}(0, \sigma_w^2/d)$, this assumption is naturally satisfied with $\delta = \sigma_a^2 - \sigma_w^2$ as the dimension $h \to \infty$.
However, a limitation of this assumption is that for general $\delta$ it requires $h \le \min( d, c)$.
Specifically, when $\delta > 0$, the isotropic initialization requires that $A(0)A(0)^\intercal \succ 0$, which implies $h \le c$.
Similarly, when $\delta <  0$, the isotropic initialization requires that $W(0)W(0)^\intercal \succ 0$, which implies $h \le d$.
Now we prove two important implications of the isotropic initialization assumption.

\begin{lemma}
    \label{lemma:WW_in_terms_Beta_wide}
    Let $\Delta = \delta \mathbf{I}_h$. If either $\delta \geq 0$ or $\delta < 0$ and $h \geq d$, we have that
    \begin{equation}
        W^{\intercal}W = \frac{1}{\eta_a}  \left ( - \frac{\delta}{2}\mathbf{I}_d + \sqrt{\eta_a \eta_w \beta \beta^{\intercal}  + \frac{\delta^2}{4}\mathbf{I}_d} \right).
    \end{equation}
\end{lemma}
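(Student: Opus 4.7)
The plan is to convert the hypothesis $\Delta = \delta \mathbf{I}_h$ into a quadratic matrix equation for $X := \eta_a W^\intercal W$ whose positive-semidefinite solution matches the claimed formula.

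\textbf{Step 1: turn the conservation law into a polynomial identity in $W^\intercal W$.} The conservation law reads $\eta_w A A^\intercal - \eta_a W W^\intercal = \delta \mathbf{I}_h$. Sandwiching it between $W^\intercal$ on the left and $W$ on the right and using $\beta = W^\intercal A$ so that $W^\intercal A A^\intercal W = \beta \beta^\intercal$, I get
\begin{equation*}
\eta_w \beta \beta^\intercal = \eta_a (W^\intercal W)^2 + \delta\, W^\intercal W.
\end{equation*}
With $X = \eta_a W^\intercal W$, this is $X^2 + \delta X = \eta_a \eta_w \beta \beta^\intercal$.

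\textbf{Step 2: complete the square and take the PSD root.} Adding $\tfrac{\delta^2}{4}\mathbf{I}_d$ to both sides yields $\bigl(X + \tfrac{\delta}{2}\mathbf{I}_d\bigr)^2 = \eta_a \eta_w \beta \beta^\intercal + \tfrac{\delta^2}{4}\mathbf{I}_d$. The right-hand side is PSD, so it has a unique PSD square root. To conclude $X + \tfrac{\delta}{2}\mathbf{I}_d = \sqrt{\eta_a \eta_w \beta \beta^\intercal + \tfrac{\delta^2}{4}\mathbf{I}_d}$ (rather than minus that root), I need $X + \tfrac{\delta}{2}\mathbf{I}_d \succeq 0$. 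Dividing by $\eta_a$ then gives exactly the stated formula for $W^\intercal W$.

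\textbf{Step 3: verify positivity of $X + \tfrac{\delta}{2}\mathbf{I}_d$ in the two cases.} When $\delta \geq 0$ this is immediate since $X = \eta_a W^\intercal W \succeq 0$. The delicate case is $\delta < 0$. Here I use the conservation law in the complementary form $\eta_a W W^\intercal = \eta_w A A^\intercal + \lvert \delta \rvert \mathbf{I}_h$, which shows $\eta_a W W^\intercal \succeq \lvert \delta \rvert \mathbf{I}_h$. In particular $WW^\intercal$ has full rank $h$, forcing $h \leq d$. Combined with the stated hypothesis $h \geq d$, we have $h = d$, so $W^\intercal W$ and $W W^\intercal$ share the same spectrum and thus $\eta_a W^\intercal W \succeq \lvert \delta \rvert \mathbf{I}_d$. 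Therefore $X + \tfrac{\delta}{2}\mathbf{I}_d \succeq \tfrac{\lvert \delta \rvert}{2}\mathbf{I}_d \succ 0$.

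\textbf{Expected obstacle.} The algebraic manipulation in Steps 1–2 is essentially mechanical; the only subtlety is the sign choice for the square root in Step 3. The downstream case $\delta < 0$ is the one that requires care because the isotropic initialization assumption forces $h \leq d$ while the lemma's hypothesis adds $h \geq d$, effectively specializing to the square bottleneck $h = d$. Recognizing that this is the regime in which $W^\intercal W$ inherits the spectral lower bound $\lvert \delta \rvert / \eta_a$ from $WW^\intercal$ (and hence $\eta_a W^\intercal W + \tfrac{\delta}{2} \mathbf{I}_d$ remains PSD) is the only nontrivial point in the proof.
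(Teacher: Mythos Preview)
Your proof is correct and follows essentially the same route as the paper: sandwich the conservation law by $W^\intercal(\cdot)W$, complete the square, and justify taking the principal square root by showing $W^\intercal W + \tfrac{\delta}{2\eta_a}\mathbf{I}_d \succeq 0$ in each case. Your treatment of the $\delta < 0$ case is in fact slightly more explicit than the paper's, which simply asserts that $WW^\intercal + \tfrac{\delta}{2\eta_a}\mathbf{I}_h \succ 0$ together with $h \ge d$ implies $W^\intercal W + \tfrac{\delta}{2\eta_a}\mathbf{I}_d \succ 0$; you unpack why this forces $h = d$ and hence equality of spectra.
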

\begin{proof} 
The quantity $ \eta_wAA^\intercal - \eta_aWW^\intercal= \delta \mathbf{I}_h$ is conserved in gradient flow. 
Multiplying on the left by $W^\intercal$ and on the right by $W$ we have that
\begin{equation}
\eta_a ( W^{\intercal}W)^2 +\delta W^{\intercal}W = \eta_w \beta \beta^{\intercal}.
\end{equation}
Completing the square by adding $\frac{\delta^2}{4\eta_a}\mathbf{I}_d$ to both sides and dividing by $\eta_a$ we get the equality,
\begin{equation}
\left( W^{\intercal}W + \frac{\delta}{2\eta_a}\mathbf{I}_d \right)^2 = \frac{\delta^2}{4\eta_a^2}\mathbf{I}_d + \frac{\eta_w}{\eta_a}\beta\beta^\intercal
\end{equation}
For $\delta \geq 0$, $W^{\intercal}W + \frac{\delta}{2\eta_a}\mathbf{I}_d \succeq 0$. 
For $\delta < 0$, then we know from the conserved quantity that $WW^\intercal + \frac{\delta}{2\eta_a}\mathbf{I}_h = \frac{\eta_w}{\eta_a}AA^\intercal - \frac{\delta}{2 \eta_a}\mathbf{I}_h \succ 0$, which implies when $h \ge d$ that $W^{\intercal}W + \frac{\delta}{2\eta_a}\mathbf{I}_d \succ 0$.
As a result, we can take the principal square root of each side,
\begin{equation}
    W^{\intercal}W + \frac{\delta}{2\eta_a}\mathbf{I}_d = \sqrt{\frac{\delta^2}{4\eta_a^2}\mathbf{I}_d + \frac{\eta_w}{\eta_a}\beta\beta^\intercal},
\end{equation}
which rearranged gives the final result.
\end{proof}

\begin{lemma}
    \label{lemma:AA_in_terms_Beta_wide}
    Let $\Delta = \delta \mathbf{I}_h$. If either $\delta \leq 0$ or $\delta > 0$ and $h \geq c$, we have that
    \begin{equation}
        A^\intercal A = \frac{1}{\eta_w} \left ( \frac{\delta}{2}\mathbf{I}_c + \sqrt{\eta_a \eta_w \beta^{\intercal}  \beta + \frac{\delta^2}{4}\mathbf{I}_c} \right).
    \end{equation}
\end{lemma}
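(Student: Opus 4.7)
The statement is the exact dual of \cref{lemma:WW_in_terms_Beta_wide}, obtained by exchanging the roles of $A$ and $W$ and using $\beta^\intercal = A^\intercal W$. The plan is therefore to mirror the proof structure of the previous lemma, but with the key algebraic manipulation being a right-multiplication/left-multiplication on the opposite side of the conserved quantity so that a Gram matrix of $\beta$ of the correct shape ($c \times c$) appears.

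Concretely, I would start from the conservation law $\eta_w AA^\intercal - \eta_a WW^\intercal = \delta \mathbf{I}_h$, and multiply on the left by $A^\intercal$ and on the right by $A$. Since $\beta = W^\intercal A$, we have $A^\intercal WW^\intercal A = \beta^\intercal \beta$, and so the identity becomes
\begin{equation}
    \eta_w (A^\intercal A)^2 - \delta\, A^\intercal A = \eta_a\, \beta^\intercal \beta.
\end{equation}
Dividing through by $\eta_w$ and completing the square by adding $\tfrac{\delta^2}{4\eta_w^2}\mathbf{I}_c$ to both sides yields
\begin{equation}
    \left(A^\intercal A - \tfrac{\delta}{2\eta_w}\mathbf{I}_c\right)^{2} = \tfrac{\eta_a}{\eta_w}\beta^\intercal \beta + \tfrac{\delta^2}{4\eta_w^2}\mathbf{I}_c.
\end{equation}
Taking the principal square root of both sides and rearranging gives exactly the claimed formula.

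The one genuine subtlety — and the step where I expect to have to think carefully — is justifying that the matrix $A^\intercal A - \tfrac{\delta}{2\eta_w}\mathbf{I}_c$ is positive semidefinite, so that the principal square root indeed recovers it (and not some other matrix square root). For $\delta \le 0$ this is immediate since $A^\intercal A \succeq 0$ and $-\tfrac{\delta}{2\eta_w} \ge 0$. For $\delta > 0$ this is where the dimensional hypothesis $h \ge c$ enters: the conservation law gives $\eta_w A A^\intercal = \eta_a WW^\intercal + \delta \mathbf{I}_h \succeq \delta \mathbf{I}_h$, so $AA^\intercal$ has rank $h$, which forces $\mathrm{rank}(A) = h$. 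Combined with $h \ge c$ this gives $h = c$, so $A$ is square and invertible, $A^\intercal A$ has the same spectrum as $AA^\intercal$, and therefore $A^\intercal A \succeq \tfrac{\delta}{\eta_w}\mathbf{I}_c \succ \tfrac{\delta}{2\eta_w}\mathbf{I}_c$. This verifies the positivity needed to invoke the principal square root, closing the argument.
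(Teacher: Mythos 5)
Your proof is correct and is exactly the argument the paper invokes by saying the proof is ``analogous'' to that of \cref{lemma:WW_in_terms_Beta_wide}: sandwich the conserved quantity by $A^\intercal(\cdot)A$, recognize $A^\intercal W W^\intercal A = \beta^\intercal\beta$, complete the square in $A^\intercal A$, and take the principal square root, with the sign/dimension hypothesis guaranteeing $A^\intercal A - \tfrac{\delta}{2\eta_w}\mathbf{I}_c \succeq 0$. Your explicit observation that $\delta > 0$ together with $h\ge c$ forces $h=c$ (so $A^\intercal A$ inherits the spectral lower bound from $AA^\intercal$) is a slightly more careful statement of the positivity step that the paper leaves implicit, but it is the same underlying idea.
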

\begin{proof} 
    The proof is analogous to the proof of \cref{lemma:WW_in_terms_Beta_wide}.
\end{proof}

From \cref{lemma:WW_in_terms_Beta_wide} and \cref{lemma:AA_in_terms_Beta_wide} we can prove \cref{thrm:vec_beta-isotropic}, as shown below.

\begin{proof}
    We start from
    \begin{equation}
        \mathrm{vec}\left(\dot{\beta}\right) = -\underbrace{\left(\eta_wA^{\intercal}A \oplus \eta_aW^{\intercal}W\right)}_{M} \mathrm{vec}(X^\intercal X\beta - X^\intercal Y),
    \end{equation}
    Plugging in expressions for $W^\intercal W$ from \cref{lemma:WW_in_terms_Beta_wide} and $A^\intercal A$ from \cref{lemma:AA_in_terms_Beta_wide} we can directly write,
    \begin{align}
        M &= \left( \frac{\delta}{2}\mathbf{I}_c + \sqrt{\eta_a \eta_w \beta^{\intercal}  \beta + \frac{\delta^2}{4}\mathbf{I}_c} \right) \oplus \left ( - \frac{\delta}{2}\mathbf{I}_d + \sqrt{\eta_a \eta_w \beta \beta^{\intercal}  + \frac{\delta^2}{4}\mathbf{I}_d} \right)\\
        &= \left(\sqrt{\eta_a \eta_w\beta^{\intercal}\beta + \frac{\delta^2}{4}\mathbf{I}_c}   \otimes \mathbf{I}_d\right ) + \left(\mathbf{I}_c \otimes \sqrt{\eta_a\eta_w \beta \beta^{\intercal} + \frac{\delta^2}{4}\mathbf{I}_d}\right)
    \end{align}
\end{proof}

From this expression for $M(\beta)$ we can easily consider how it simplifies in limiting settings of $\delta$:
\begin{equation}
    M \to
    \begin{cases}
        \delta \mathbf{I}_{dc} &\delta \to -\infty\\
        \sqrt{\eta_a \eta_w\beta^{\intercal}\beta} \otimes \mathbf{I}_d + \mathbf{I}_c \otimes \sqrt{\eta_a\eta_w \beta \beta^{\intercal}} &\delta = 0\\
        \delta \mathbf{I}_{dc} &\delta \to \infty. 
    \end{cases}
\end{equation}

As $\delta \to \pm \infty$,  $M \to \delta \mathbf{I}_{dc}$, and the dynamics are lazy.
In this limit, the dynamics of $\beta$ converge to the trajectory of linear regression trained by gradient flow and along this trajectory the NTK matrix remains constant.
When $\delta = 0$, $M = \sqrt{\eta_a \eta_w\beta^{\intercal}\beta} \otimes \mathbf{I}_d + \mathbf{I}_c \otimes \sqrt{\eta_a\eta_w \beta \beta^{\intercal}}$, and the dynamics are rich. 
Here the NTK changes in both magnitude and direction through training.
In the next section we will attempt to better understand these dynamics for intermediate values of $\delta$ through the lens of a mirror flow.

\subsubsection{Deriving a mirror flow for the singular values of \texorpdfstring{$\beta$}{}}
\label{app:wide-deep-linear-network-mirror}

For a matrix $\beta$, the dynamics of one of its singular values are given by $\dot{\sigma} = u^\intercal \dot{\beta} v$, where $u$ and $v$ are the corresponding left and right singular vectors.
This equality can be derived from chain rule and the fact that $\|u\| = \|v\| = 1$:
\begin{equation}
    \dot{\sigma} = \dot{u}^\intercal \beta v + u^\intercal \dot{\beta} v + u^\intercal \beta \dot{v} = \dot{u}^\intercal u \sigma + u^\intercal \dot{\beta} v + \sigma v^\intercal \dot{v} = u^\intercal \dot{\beta} v.
\end{equation}
In the last equality we used that fact that for any vector $z$ with a fixed norm, $\dot{\|z\|^2} = 2\dot{z}^\intercal z = 0$.
Letting $\mathrm{diag}: \mathbb{R}^{d \times c} \to \mathbb{R}^{\min(d, c)}$ be the operator that, given a rectangular matrix, returns a vector of the elements on the main diagonal, we can then write,
\begin{align}
    \dot{\lambda} = \mathrm{diag}(U^\intercal \dot{\beta} V)
\end{align}
\begin{equation}
    \dot{\lambda} = -M \nabla_\lambda \mathcal{L} 
\end{equation}
where $M$ is a diagonal matrix and $\nabla_\lambda \mathcal{L}$ is the gradient of the loss with respect to the singular values of $\beta$. 
Without loss of generality we consider $\eta_a = \eta_w = 1$.
\begin{lemma}
\label{lemma:mirror-singular}
Let $\Delta = \delta \mathbf{I}_h$. We then have that $\dot{\lambda} = -M \nabla_\lambda \mathcal{L}$, where $M \in \mathbb{R}^{\min(d, c) \times \min(d, c)}$ is a diagonal matrix with
\begin{align}
    M_{ii} = \begin{cases}
        \sqrt{\delta^2 + 4 \lambda_{i}^2} & i \leq \min(d, h, c) \\
        0 & \mathrm{otherwise}
    \end{cases}
\end{align}
\begin{proof}
First note that
\begin{align}
    \dot{\lambda} &= \mathrm{diag}(U^\intercal \dot{\beta}V)\\
    &= -\mathrm{diag}\left(U^\intercal \left[ X^\intercal (X\beta - Y)A^\intercal A + W^\intercal W X^\intercal (X\beta - Y) \right]V\right)\\
    &= -\mathrm{diag}\left(U^\intercal X^\intercal (X\beta - Y)V\Sigma_A^2 + \Sigma_W^2 U^\intercal X^\intercal (X\beta - Y)V \right)
\end{align}
where we let $W^\intercal W = U \Sigma_W^2 U^\intercal$ and $A^\intercal A = V \Sigma_A^2 V^\intercal$, using the fact that, under $\Delta = \mathbf{I}_h$, the eigenvectors of $A^\intercal A$ are the right singular vectors of $\beta$ and the eigenvectors of $W^\intercal W$ are the left singular vectors of $\beta$. This expression rewrites as
\begin{align}
    \dot{\lambda} = -M \mathrm{diag} \left( U^\intercal X^\intercal (X\beta - Y)V \right)
\end{align}
where $M \in \mathbb{R}^{\min(d, c) \times \min(d, c)}$ is a diagonal matrix with $M_{ii} = (\Sigma_A^2)_{ii} + (\Sigma_{W}^2)_{ii}$.
For $i \leq \min(d, h, c)$, one can show that $M_{ii} = \sqrt{\delta^2 + 4 \lambda_{i}^2}$.
This is because for $i \leq \min(d, h, c)$, $(\Sigma_{A}^2)_{ii} = (\Sigma_{W}^2)_{ii} + \delta$ from the conservation law and $(\Sigma_{W}^2)_{ii}(\Sigma_{A}^2)_{ii} =  \lambda_i^2$ from the definition of $\lambda$.
Together this implies $(\Sigma_{W}^2)_{ii}\left(\delta + (\Sigma_{W}^2)_{ii} \right) = \lambda_{i}^2$, which is a quadratic equation in $(\Sigma_{W}^2)_{ii}$.
If $h < \min(d, c)$ then $M_{ii} = 0$ for $i > \min(d, c)$ accounting for rank deficiency of both $A$ and $W$ in this case.
Additionally, in our setting of MSE loss, it is straightforward to show that
\begin{align}
    \frac{\partial \mathcal{L}}{\partial \lambda_i} = \left(U^\intercal X^\intercal (X\beta - Y) V\right)_{ii}
\end{align}
We then have that $\nabla_\lambda \mathcal{L} = \mathrm{diag}\left(U^\intercal X^\intercal (X\beta - Y) V \right)$, which, combined with our expression for $M$, completes the proof.
\end{proof}
\end{lemma}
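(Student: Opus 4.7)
The plan is to combine the scalar singular-value identity $\dot\sigma = u^\intercal \dot\beta v$ derived just above the lemma with the function-space ODE for $\beta$ and the alignment structure forced by the isotropic conservation law $\Delta = \delta \mathbf{I}_h$. First I would stack the $\min(d,c)$ scalar identities into the vector equation $\dot\lambda = \mathrm{diag}(U^\intercal \dot\beta V)$, where $U$ and $V$ collect the left and right singular vectors of $\beta$. Substituting the dynamics from \cref{eq:two-layer-beta-dynamics-AA-WW} (with $\eta_a = \eta_w = 1$ without loss of generality) gives
\begin{equation}
\dot\lambda = -\mathrm{diag}\left(U^\intercal X^\intercal(X\beta-Y)\,A^\intercal A\,V \;+\; U^\intercal W^\intercal W\,X^\intercal(X\beta-Y)\,V\right).
\end{equation}

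Second I would exploit the critical consequence of $\Delta = \delta \mathbf{I}_h$: by \cref{lemma:WW_in_terms_Beta_wide} and \cref{lemma:AA_in_terms_Beta_wide}, the Gram matrix $W^\intercal W$ is a matrix function of $\beta\beta^\intercal = U\Sigma^2 U^\intercal$ and $A^\intercal A$ is a matrix function of $\beta^\intercal \beta = V\Sigma^2 V^\intercal$. Consequently $W^\intercal W$ is diagonalized by $U$ and $A^\intercal A$ by $V$, so writing $\Sigma_W^2 = U^\intercal W^\intercal W U$ and $\Sigma_A^2 = V^\intercal A^\intercal A V$ for these diagonal forms, the right-hand side collapses to $-(\Sigma_A^2 + \Sigma_W^2)\,\mathrm{diag}(U^\intercal X^\intercal(X\beta-Y)V)$, where the preconditioner is manifestly diagonal. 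A standard chain-rule computation on the MSE loss, using $\partial \beta/\partial \lambda_i = u_i v_i^\intercal$, identifies $\nabla_\lambda \mathcal{L} = \mathrm{diag}(U^\intercal X^\intercal(X\beta-Y)V)$, which establishes the mirror-flow form with $M = \Sigma_A^2 + \Sigma_W^2$.

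Third I would compute the entries of $M$. For indices $i \le \min(d,h,c)$, the conservation law applied in the singular basis gives $(\Sigma_A^2)_{ii} - (\Sigma_W^2)_{ii} = \delta$, while $\beta = W^\intercal A$ together with the simultaneous diagonalization forces $(\Sigma_W^2)_{ii}(\Sigma_A^2)_{ii} = \lambda_i^2$. Solving the resulting quadratic $(\Sigma_W^2)_{ii}^2 + \delta(\Sigma_W^2)_{ii} - \lambda_i^2 = 0$ and taking the nonnegative root yields the sum $M_{ii} = \sqrt{\delta^2 + 4\lambda_i^2}$, matching the stated formula.

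The main obstacle is the rank-deficient regime $h < \min(d,c)$, where one of the Gram matrices is necessarily rank-deficient and the square-root identities for $W^\intercal W$ and $A^\intercal A$ hold only on a subspace. I would address this by noting that $\mathrm{rank}(\beta) \le h$ is preserved by gradient flow, so for $i > h$ the singular values $\lambda_i \equiv 0$ and both $(\Sigma_W^2)_{ii}$ and $(\Sigma_A^2)_{ii}$ vanish along the corresponding singular directions; this forces $M_{ii} = 0$ for $i > \min(d,h,c)$ and accounts for the second branch of the case statement, completing the proof.
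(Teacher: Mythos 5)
Your proposal is correct and follows essentially the same route as the paper: expand $\dot\lambda = \mathrm{diag}(U^\intercal\dot\beta V)$, observe that under $\Delta = \delta\mathbf{I}_h$ the matrices $W^\intercal W$ and $A^\intercal A$ are simultaneously diagonalized with $\beta\beta^\intercal$ and $\beta^\intercal\beta$ so the preconditioner collapses to the diagonal $\Sigma_A^2 + \Sigma_W^2$, then solve the quadratic from the conservation law to get $\sqrt{\delta^2 + 4\lambda_i^2}$. The one place you add value is in explicitly justifying the simultaneous diagonalization (via the matrix-function form in Lemmas \ref{lemma:WW_in_terms_Beta_wide}--\ref{lemma:AA_in_terms_Beta_wide}, or equivalently the commutation argument), which the paper's proof simply asserts as a fact; your treatment of the rank-deficient branch $h < \min(d,c)$, via rank preservation under gradient flow, is likewise consistent with the paper's brief remark.
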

Leveraging \cref{lemma:mirror-singular}, we can show that the singular values of $\beta$ evolve under a mirror flow in the following theorem.
\begin{theorem}
\label{thrm:multi-neuron-singular}
Let $\Delta = \delta \mathbf{I}_h$ and assume $h \geq \min(d, c)$ and $\delta \ne 0$. We then have that the dynamics of $\lambda$, the singular values of $\beta$, are given by the mirror flow
\begin{equation}
    \dot{\lambda} = -\left( \nabla^2 \Phi_\delta(\lambda) \right)^{-1} \nabla_{\lambda} \mathcal{L},
\end{equation}
where $\Phi_\delta(\lambda) = \sum_{i=1}^{\min(d, c)} q_\delta(\lambda_{i})$ and $q_\delta$ is the hyperbolic entropy potential
\begin{equation} 
q_\delta(x) = \frac{1}{4}\left(2x \sinh^{-1}\left(\frac{ 2x}{|\delta|}\right) - \sqrt{ 4x^2 + \delta^2} + |\delta|\right).
\end{equation}
\end{theorem}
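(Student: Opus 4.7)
The plan is to reduce the theorem to the preceding lemma and then verify, by direct differentiation, that the proposed hyperbolic entropy potential $\Phi_\delta$ has Hessian $M^{-1}$. Since $\Phi_\delta(\lambda) = \sum_{i=1}^{\min(d,c)} q_\delta(\lambda_i)$ is separable, its Hessian is diagonal, so the whole theorem collapses to a one-variable identity $q_\delta''(x) = 1/\sqrt{\delta^2 + 4x^2}$.

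First, I would invoke \cref{lemma:mirror-singular} with the hypothesis $h \geq \min(d,c)$. This guarantees $\min(d,h,c) = \min(d,c)$, so the preceding lemma yields $\dot{\lambda} = -M\nabla_\lambda \mathcal{L}$ with $M$ diagonal and all relevant entries $M_{ii} = \sqrt{\delta^2 + 4\lambda_i^2}$. Since $\delta \neq 0$, $M$ is strictly positive definite and hence invertible. It then suffices to prove that $\nabla^2 \Phi_\delta(\lambda) = M^{-1}$, because substituting this identity back into the lemma gives exactly the claimed mirror flow.

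Second, I would compute $q_\delta'$ and $q_\delta''$ explicitly. Differentiating once, the terms coming from $\frac{d}{dx}[-\sqrt{4x^2+\delta^2}]$ exactly cancel the derivative of the algebraic part of $2x\sinh^{-1}(2x/|\delta|)$ (since $|\delta|\sqrt{1+4x^2/\delta^2} = \sqrt{\delta^2+4x^2}$), leaving $q_\delta'(x) = \tfrac{1}{2}\sinh^{-1}(2x/|\delta|)$. Differentiating once more yields $q_\delta''(x) = 1/\sqrt{\delta^2+4x^2}$, which is precisely $M_{ii}^{-1}$ when $x = \lambda_i$. Since $\Phi_\delta$ is a sum of one-variable functions of the $\lambda_i$'s, its Hessian is diagonal with these entries on the diagonal, so $\nabla^2 \Phi_\delta(\lambda) = M^{-1}$ as desired.

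There is no real obstacle here once the separability is noted: the only computation is the single-variable derivative identity, and its cleanliness is exactly why the hyperbolic entropy is the right guess. The potentially subtle point to flag is the role of the assumption $h \geq \min(d,c)$: without it, $M$ would have zero entries for $i > \min(d,h,c)$ and thus fail to be invertible, so the dynamics could not be written as a mirror flow on all of $\lambda \in \mathbb{R}^{\min(d,c)}$. The assumption $\delta \neq 0$ serves the analogous purpose of keeping $M$ nonsingular even when some singular values vanish, which in turn is what keeps $\Phi_\delta$ strictly convex and makes the mirror flow formulation well-posed.
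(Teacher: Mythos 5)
Your proof is correct and follows essentially the same route as the paper's: invoke \cref{lemma:mirror-singular} (using $h \ge \min(d,c)$ and $\delta \ne 0$ to obtain the invertible diagonal $M$ with entries $\sqrt{\delta^2 + 4\lambda_i^2}$), then observe that $\nabla^2\Phi_\delta = M^{-1}$. The only difference is that you spell out the verification $q_\delta''(x) = 1/\sqrt{\delta^2 + 4x^2}$, which the paper leaves to the reader as an ``Observe.''
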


\begin{proof}
    When $\Delta = \delta \mathbf{I}_h$, then by \cref{lemma:mirror-singular} the dynamics of the singular values of $\beta$ can be expressed as $\dot{\lambda} = -M \nabla_\lambda \mathcal{L}$.
    Furthermore, when $h \geq \min(d, c)$ and $\delta \ne 0$, we have that $M = \sqrt{\delta^2 + 4\lambda^2}\mathbf{I}_{\min(d,c)}$, where $\lambda^2$ is element-wise, which is always invertible.
    Observe, this expression for $M$ is the inverse Hessian of the potential $\Phi_\delta(\lambda) = \sum_i q_\delta(\lambda_{i})$ for $q_\delta$ specified in the theorem statement.
    Thus, the dynamics for the singular values are the mirror flow $\dot{\lambda} = -\left( \nabla^2 \Phi_\delta(\lambda) \right)^{-1}\nabla_\lambda \mathcal{L}$.
\end{proof}

\cref{thrm:multi-neuron-singular} implies that the dynamics for the singular values of $\beta$ can be described as a mirror flow with a $\delta$-dependent potential.
This potential was first identified as the inductive bias for diagonal linear networks by \citet{woodworth2020kernel}.
Termed \emph{hyperbolic entropy}, this potential smoothly interpolates between an $\ell^1$ and $\ell^2$ penalty on the singular values for the rich ($\delta \to 0$) and lazy ($\delta \to \pm \infty$) regimes respectively.
Unfortunately, in our setting we cannot adapt our mirror flow interpretation into a statement on the inductive bias at interpolation because the singular vectors evolve through training.
If we introduce additional assumptions — specifically, whitened input data (\( X^\intercal X = \mathbf{I}_d \)) and a task-aligned initialization such that the singular vectors of \( \beta_0 \) are aligned with those of \( \beta_* \) — we can ensure that the singular vectors remain constant and thus derive an inductive bias on the singular values.
However, in this setting the dynamics decouple completely, implying there is no difference between applying an \( \ell^1 \) or \( \ell^2 \) penalty on the singular values.
Consequently, even though the dynamics will depend on $\delta$, the final interpolating solution will be independent of $\delta$, making a statement on the inductive bias insignificant.

\subsection{Deep linear networks}
\label{app:wide-deep-linear-network-deep}

We now consider the influence of depth by studying a depth-$(L + 1)$ linear network, $f(x;\theta) = a^\intercal \prod_{l=1}^{L} W_l x$, where $W_1 \in \mathbb{R}^{h \times d}$, $W_l \in \mathbb{R}^{h \times h}$ for $1 < l \le L$, and $a \in \mathbb{R}^{h}$. 
We assume that the dimensions $d = h$ and that all parameters share the same learning rate $\eta = 1$.
For this model the predictor coefficients are computed by the product $\beta = \prod_{l=1}^{L}W_l^\intercal a \in \mathbb{R}^d$.
Similar to our analysis of a two-layer setting, we assume an isotropic initializations of the parameters.
\begin{definition}
    \label{assump:isotropic-deep}
    There exists a $\delta \in \mathbb{R}$ such that $aa^\intercal - W_LW_L^\intercal = \delta \mathbf{I}_h$ and for all $l \in [L-1]$ $W_{l+1}^\intercal W_{l+1} = W_lW_l^\intercal$.
\end{definition}
This assumption can easily be achieved by setting $a = 0$ and $W_l = \alpha O_l$ for all $l \in [L]$,  where $O_l \in \mathbb{R}^{d \times d}$ is an random orthogonal matrix and $\alpha \ge 0$.
In this case $\delta = -\alpha^2$.
Further, notice this parameterization is naturally  achieved in the high-dimensional limit as $d \to \infty$ under a standard Gaussian initialization with a variance inversely proportional with width.
As in the two-layer setting, this structure of the initialization will remain conserved throughout gradient flow.
We now show how two natural quantities of $\beta$, its squared norm $\|\beta\|^2$ and its outer product $\beta\beta^\intercal$, can always be expressed as polynomials of $\|a\|^2$ and $W_1^\intercal W_1$ respectively.

\begin{lemma}
    For a depth-$(L+1)$ linear network with square width ($d = h$) and isotropic initialization, then for all $t \ge 0$,
    \begin{align}
        \|\beta\|^2 &= \|a\|^2\left(\|a\|^2 - \delta\right)^{L}, \label{eq:norm-beta-depth-d}\\
        \beta\beta^\intercal &= \left(W_1^\intercal W_1\right)^{L+1} + \delta\left(W_1^\intercal W_1\right)^{L}. \label{eq:outer-product-beta-depth-d}
    \end{align}
\end{lemma}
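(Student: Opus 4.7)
The strategy is to first observe that the structural relations imposed at initialization in \cref{assump:isotropic-deep} are preserved by gradient flow, via the standard conservation laws $W_{l+1}^\intercal W_{l+1} - W_l W_l^\intercal$ and $aa^\intercal - W_L W_L^\intercal$ for deep linear networks (e.g., \cite{du2018algorithmic}). Hence it suffices to derive both identities as purely algebraic consequences of the relations $W_{l+1}^\intercal W_{l+1} = W_l W_l^\intercal$ for $l \in [L-1]$ and $aa^\intercal - W_L W_L^\intercal = \delta \mathbf{I}_h$, holding at each fixed time $t$. The engine driving both proofs is the same telescoping identity: whenever $W_l W_l^\intercal = W_{l+1}^\intercal W_{l+1}$, the adjacent product $W_{l+1} (W_l W_l^\intercal)^k W_{l+1}^\intercal$ simplifies to $(W_{l+1} W_{l+1}^\intercal)^{k+1}$, collapsing a nested quadratic form one layer at a time.

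\textbf{First identity.} I would write $\|\beta\|^2 = a^\intercal (W_L \cdots W_1)(W_1^\intercal \cdots W_L^\intercal)\,a$ and peel inward from the center. A straightforward induction on $k$, using the conservation $W_l W_l^\intercal = W_{l+1}^\intercal W_{l+1}$ at each step, yields
\begin{equation*}
(W_L \cdots W_{L-k+1})\,(W_{L-k+1}^\intercal \cdots W_L^\intercal) = (W_L W_L^\intercal)^{k},
\end{equation*}
so taking $k = L$ gives $\|\beta\|^2 = a^\intercal (W_L W_L^\intercal)^L a$. Now substitute $W_L W_L^\intercal = aa^\intercal - \delta \mathbf{I}_h$ and expand spectrally: writing $P = aa^\intercal/\|a\|^2$ and $P^\perp = \mathbf{I}_h - P$, one has $W_L W_L^\intercal = (\|a\|^2 - \delta) P - \delta P^\perp$, which is diagonal in the $P,P^\perp$ decomposition, giving $(W_L W_L^\intercal)^L = (\|a\|^2 - \delta)^L P + (-\delta)^L P^\perp$. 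Sandwiching with $a$ kills the $P^\perp$ component and produces $\|a\|^2 (\|a\|^2 - \delta)^L$, as claimed.

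\textbf{Second identity.} For $\beta\beta^\intercal = (W_1^\intercal \cdots W_L^\intercal)\,aa^\intercal\,(W_L \cdots W_1)$, substitute $aa^\intercal = W_L W_L^\intercal + \delta \mathbf{I}_h$ to split into two terms. The $\delta$-term is $\delta\,(W_L \cdots W_1)^\intercal (W_L \cdots W_1)$; applying the same telescoping induction \emph{outward} from the center — now using $W_{l+1}^\intercal W_{l+1} = W_l W_l^\intercal$ in the reverse direction — collapses this to $\delta\,(W_1^\intercal W_1)^L$. The remaining term has an extra factor of $W_L W_L^\intercal$ at the center; the identical telescoping absorbs this extra factor and yields $(W_1^\intercal W_1)^{L+1}$. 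Summing the two contributions gives the stated expression.

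\textbf{Main obstacle.} No conceptual difficulty arises; the entire argument is an index-tracking exercise. The only care required is to verify the telescoping induction cleanly — specifically, that each collapse step $W_l (\cdot)^k W_l^\intercal \mapsto (W_l W_l^\intercal)^{k+1}$ proceeds symmetrically on both sides and terminates at the correct exponent ($L$ in one case, $L{+}1$ in the other). A single inductive lemma on partial products, stated once, handles both identities simultaneously.
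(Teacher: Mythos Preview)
Your proposal is correct and matches the paper's proof essentially line for line: both arguments telescope the products $W_L\cdots W_1 W_1^\intercal\cdots W_L^\intercal$ and $W_1^\intercal\cdots W_L^\intercal W_L\cdots W_1$ using the conserved relations $W_{l+1}^\intercal W_{l+1}=W_lW_l^\intercal$, then substitute $aa^\intercal = W_LW_L^\intercal + \delta\mathbf{I}_h$. Your spectral $P,P^\perp$ expansion is simply an explicit version of what the paper records as ``which expanded gives the desired result.''
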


\begin{proof}
    The norm of the regression coefficients is the product $\|\beta\|^2 = a^\intercal\left(\prod_{l=1}^{L}W_l\right)\left(\prod_{l=1}^{L}W_l\right)^\intercal a$.
    Using the conservation of the initial conditions between consecutive weight matrices, $W_{l+1}^\intercal W_{l+1} = W_lW_l^\intercal$, we can express this telescoped product as $\|\beta\|^2 = a^\intercal\left(W_LW_L^\intercal\right)^d a$.
    When plugging in the conservation between last two layers, this implies $\|\beta\|^2 = a^\intercal\left(aa^\intercal - \delta \mathbf{I}_h\right)^d a$, which expanded gives the desired result.

    The outer product of the regression coefficients is $\beta \beta^\intercal = \left(\prod_{l=1}^{L}W_l\right)^\intercal aa^\intercal\left(\prod_{l=1}^{L}W_l\right)$.
    Using the conserved initial conditions of the last weights we can factor the outer product as the sum, $\beta \beta^\intercal = \left(\prod_{l=1}^{L}W_l\right)^\intercal W_LW_L^\intercal\left(\prod_{l=1}^{L}W_l\right) + \delta\left(\prod_{l=1}^{L}W_l\right)^\intercal \left(\prod_{l=1}^{L}W_l\right)$.
    Both these telescoping products factor using the conservation of the initial conditions between consecutive weight matrices giving the desired result.
\end{proof}

We now demonstrate how the quadratic terms $|a|^2$ and $W_1^\intercal W_1$ significantly influence the dynamics of $\beta$, similar to our analysis in the two-layer setting.

\begin{lemma}
    \label{lemma:beta-dynamics-d-layer}
    The dynamics of $\beta$ are given by a differential equation $\dot{\beta} = -M X^\intercal \rho$ where $M$ is a positive semi-definite matrix that solely depends on $\|a\|^2$, $W_1^\intercal W_1$, and $\delta$,
    \begin{equation}
        M =  \left(W_1^\intercal W_1\right)^L + \|a\|^2 \left(\sum_{l=0}^{L-1}(\|a\|^2 - \delta)^l\left(W_1^\intercal W_1\right)^{L-1-l}\right).
    \end{equation}
\end{lemma}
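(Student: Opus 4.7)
The plan is to differentiate $\beta=\prod_{l=1}^{L}W_l^{\intercal}\,a$ directly by the product rule, substitute in the gradient-flow equations for each layer, and then use the conservation identities of Definition~\ref{assump:isotropic-deep} to collapse the resulting telescoping products. Writing $B_l := W_1^{\intercal}\cdots W_{l-1}^{\intercal}\in\mathbb{R}^{d\times h}$ (with $B_1=\mathbf{I}_d$) and $A_l := W_{l+1}^{\intercal}\cdots W_L^{\intercal}\,a\in\mathbb{R}^{h}$ (with $A_L=a$), one has $\beta = B_l\,W_l^{\intercal}A_l$, and differentiating this representation gives
\begin{equation*}
    \dot{\beta} \;=\; \sum_{l=1}^{L} B_l\,\dot{W_l}^{\intercal}A_l \;+\; \Big(\prod_{l=1}^{L}W_l^{\intercal}\Big)\,\dot{a}.
\end{equation*}

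The next step is to insert the gradient flow dynamics. A standard backprop calculation gives $\nabla_{W_l}\mathcal{L}=A_l\,\rho^{\intercal}X\,B_l$ (a rank-one outer product) and $\nabla_a\mathcal{L}=W_L\cdots W_1 X^{\intercal}\rho$. Substituting and using that $A_l^{\intercal}A_l=\|A_l\|^2$ is a scalar, the $l$-th layer contribution becomes $-\|A_l\|^2\,B_l B_l^{\intercal}\,X^{\intercal}\rho$, while the $\dot{a}$ contribution becomes $-\big(W_1^{\intercal}\cdots W_L^{\intercal}\big)\big(W_L\cdots W_1\big)X^{\intercal}\rho$. Thus $\dot{\beta}=-MX^{\intercal}\rho$ with
\begin{equation*}
    M \;=\; \sum_{l=1}^{L}\|A_l\|^2\,B_l B_l^{\intercal} \;+\; \big(W_1^{\intercal}\cdots W_L^{\intercal}\big)\big(W_L\cdots W_1\big).
\end{equation*}

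The heart of the argument is then to simplify $B_l B_l^{\intercal}$ and $\|A_l\|^2$ using the conserved quantities, which persist under gradient flow by the symmetry argument of \citet{du2018algorithmic}. A short induction using $W_{l+1}^{\intercal}W_{l+1}=W_l W_l^{\intercal}$ telescopes any interior pair $W_l^{\intercal}W_l$ into an outer factor, yielding $B_l B_l^{\intercal}=(W_1^{\intercal}W_1)^{l-1}$ and $(W_1^{\intercal}\cdots W_L^{\intercal})(W_L\cdots W_1)=(W_1^{\intercal}W_1)^{L}$. For $\|A_l\|^2$, the same telescoping reduces the product to $a^{\intercal}(W_L W_L^{\intercal})^{L-l}a$, and then the conservation $W_L W_L^{\intercal}=aa^{\intercal}-\delta\mathbf{I}_h$ gives $\|A_l\|^2=\|a\|^2(\|a\|^2-\delta)^{L-l}$ by induction on the exponent (each multiplication by $aa^{\intercal}-\delta\mathbf{I}_h$ preserves the form $c\cdot aa^{\intercal}+d\cdot\mathbf{I}_h$ and contracting with $a$ on both sides produces the factor $\|a\|^2-\delta$). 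Substituting these closed forms and reindexing the sum via $m=L-l$ produces exactly the stated expression for $M$, and positive semi-definiteness follows because each summand is a power of the PSD matrix $W_1^{\intercal}W_1$ with a nonnegative scalar coefficient (note $\|a\|^2-\delta=\|W_L\|_F^2/h \cdot h \geq 0$ from the conservation law applied to the trace, so the scalar coefficients are always nonnegative).

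The main obstacle is the careful bookkeeping of transposes and indices in the product rule and backprop formulas. Once the two telescoping identities $B_l B_l^{\intercal}=(W_1^{\intercal}W_1)^{l-1}$ and $\|A_l\|^2=\|a\|^2(\|a\|^2-\delta)^{L-l}$ are established, the remainder is a routine reindexing.
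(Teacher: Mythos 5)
Your proof follows the same telescoping route the paper intends: differentiate $\beta$ by the product rule, substitute the gradient-flow updates for each $W_l$ and $a$, collapse the partial products $B_lB_l^\intercal$ and $\|A_l\|^2$ using the two conservation identities, and reindex. The key identities $B_lB_l^\intercal=(W_1^\intercal W_1)^{l-1}$, $(W_1^\intercal\cdots W_L^\intercal)(W_L\cdots W_1)=(W_1^\intercal W_1)^L$, and $\|A_l\|^2=\|a\|^2(\|a\|^2-\delta)^{L-l}$ are all correct, and the final reindexing gives the stated $M$; the paper itself only says ``a similar telescoping strategy,'' so you are supplying exactly the details it omits.

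There is, however, a genuine error in the positive-semidefiniteness step. You justify $\|a\|^2-\delta\geq 0$ by writing $\|a\|^2-\delta=\|W_L\|_F^2/h\cdot h$ ``from the conservation law applied to the trace.'' Taking the trace of $aa^\intercal - W_LW_L^\intercal = \delta\mathbf{I}_h$ gives $\|a\|^2 - \|W_L\|_F^2 = h\delta$, i.e., $\|a\|^2 - h\delta = \|W_L\|_F^2$, which equals $\|a\|^2-\delta$ only when $h=1$, so the trace argument does not establish what you need in general. The fix is immediate and does not go through the trace at all: contracting the conserved relation $W_LW_L^\intercal=aa^\intercal-\delta\mathbf{I}_h$ with $a$ gives $\|W_L^\intercal a\|^2 = a^\intercal W_LW_L^\intercal a = \|a\|^2\bigl(\|a\|^2-\delta\bigr)\geq 0$, so $\|a\|^2-\delta\geq 0$ whenever $a\neq 0$; and if $a=0$ the $\|a\|^2$ prefactor kills every term in the sum so $M=(W_1^\intercal W_1)^L$, which is PSD since $W_1^\intercal W_1\succeq 0$. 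With that replacement the proof is complete.
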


\begin{proof}
    Using a similar telescoping strategy used in the previous proof we obtain the form of $M$.
\end{proof}

Finally, we consider how the expression for $M$ simplifies in the limit as $\delta \to 0$ allowing us to be precise about the inductive bias in this setting.

\begin{theorem}
    \label{thm:deep-rich}
    For a depth-$(L+1)$ linear network with square width ($d = h$) and isotropic initialization $\beta_0$ such that $\|\beta(t)\| > 0$ for all $t \ge 0$, then in the limit as $\delta \to 0$, if the gradient flow solution $\beta(\infty)$ satisfies $X \beta(\infty) = y$, then,
    \begin{equation}
        \beta(\infty) = \argmin_{\beta \in \mathbb{R}^d} \left(\frac{L+1}{L+2}\right)\|\beta\|^{\frac{L+2}{L+1}} - \left(\frac{\beta(0)}{\|\beta(0)\|^{\frac{L}{L+1}}}\right)^\intercal \beta \quad \mathrm{s.t.} \quad X \beta = y.
    \end{equation}
\end{theorem}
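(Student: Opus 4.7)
The plan is to derive a closed-form expression for the preconditioner $M$ purely in terms of $\beta$ in the limit $\delta \to 0$, recognize that $M^{-1}$ is a Hessian map up to a scalar time-warp factor, and then apply the time-warped mirror flow argument (as in \cref{thrm:single-neuron-implicit-bias}) to read off the KKT condition stated in the theorem.

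First I would exploit the two identities established just before the statement, namely $\|\beta\|^{2} = \|a\|^{2}(\|a\|^{2}-\delta)^{L}$ and $\beta\beta^{\intercal} = (W_{1}^{\intercal}W_{1})^{L+1} + \delta (W_{1}^{\intercal}W_{1})^{L}$. Taking $\delta \to 0$ the first gives $\|a\|^{2} \to \|\beta\|^{2/(L+1)}$, and the second forces $(W_{1}^{\intercal}W_{1})^{L+1} \to \beta\beta^{\intercal}$, a rank-one PSD matrix. Since a PSD matrix whose $(L+1)$st power has rank one must itself be rank one, we obtain $W_{1}^{\intercal}W_{1} = \|\beta\|^{2/(L+1)}\, \hat{\beta}\hat{\beta}^{\intercal}$ with $\hat{\beta} = \beta/\|\beta\|$. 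Consistency with the telescoping conservation $W_{l+1}^{\intercal}W_{l+1} = W_{l}W_{l}^{\intercal}$ (so that the derived rank-one structure is preserved along the flow) is the minor technical check here.

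Next I would substitute these limits into \cref{lemma:beta-dynamics-d-layer}. Reindexing the sum and using $(W_{1}^{\intercal}W_{1})^{k} = \|a\|^{2k}\hat{\beta}\hat{\beta}^{\intercal}$ for $k\ge 1$ while $(W_{1}^{\intercal}W_{1})^{0} = \mathbf{I}_{d}$, all cross terms collapse and
\begin{equation}
M \;=\; \|\beta\|^{2L/(L+1)}\bigl(\mathbf{I}_{d} + L\,\hat{\beta}\hat{\beta}^{\intercal}\bigr).
\end{equation}
Sherman--Morrison gives $M^{-1} = \|\beta\|^{-2L/(L+1)}\bigl(\mathbf{I}_{d} - \tfrac{L}{L+1}\hat{\beta}\hat{\beta}^{\intercal}\bigr)$. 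Then I would compute the Hessian of the candidate potential $\Phi(\beta) = \tfrac{L+1}{L+2}\|\beta\|^{(L+2)/(L+1)}$, using the standard identity $\nabla^{2}\|\beta\|^{p} = p\|\beta\|^{p-2}\mathbf{I}_{d} + p(p-2)\|\beta\|^{p-4}\beta\beta^{\intercal}$, which yields
\begin{equation}
\nabla^{2}\Phi(\beta) \;=\; \|\beta\|^{-L/(L+1)}\bigl(\mathbf{I}_{d} - \tfrac{L}{L+1}\hat{\beta}\hat{\beta}^{\intercal}\bigr).
\end{equation}
Hence $M^{-1} = g(\beta)\,\nabla^{2}\Phi(\beta)$ with the positive scalar time-warp $g(\beta) = \|\beta\|^{-L/(L+1)}$, and $\nabla\Phi(\beta) = \|\beta\|^{-L/(L+1)}\beta$.

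Finally I would close via the time-warped mirror flow template recalled in \cref{app:mirror-flow-analysis}. Writing the flow as $\nabla^{2}\Phi(\beta)\,\dot{\beta} = -g(\beta)^{-1}X^{\intercal}\rho$, I integrate $\tfrac{d}{dt}\nabla\Phi(\beta(t)) = -\|\beta(t)\|^{L/(L+1)}X^{\intercal}\rho(t)$ from $0$ to $\infty$ to obtain $\nabla\Phi(\beta(\infty)) - \beta(0)/\|\beta(0)\|^{L/(L+1)} = X^{\intercal}\nu$ with $\nu = -\int_{0}^{\infty}\|\beta(t)\|^{L/(L+1)}\rho(t)\,dt$. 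This is precisely the first-order KKT condition for the strictly convex constrained program in the statement, and since $\Phi$ is strictly convex the KKT point is the unique minimizer. The main obstacles will be (i) making rigorous the commutation of the $\delta \to 0$ limit with gradient flow, in particular justifying that $W_{1}^{\intercal}W_{1}$ maintains the collapsed rank-one form we derived from the conservation laws, and (ii) verifying the time-warp is non-degenerate, i.e.\ that the warped time $\tau(t)=\int_{0}^{t}\|\beta(s)\|^{L/(L+1)}ds$ diverges and $\nu$ is finite under the standing assumption $\|\beta(t)\|>0$ together with the hypothesis that $\beta(\infty)$ interpolates.
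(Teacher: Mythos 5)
Your proposal follows the paper's own proof essentially verbatim: solve the isotropic conservation identities for $\|a\|^2$ and $W_1^\intercal W_1$ in the $\delta\to 0$ limit, substitute into \cref{lemma:beta-dynamics-d-layer} to get $M = \|\beta\|^{2L/(L+1)}\bigl(\mathbf{I}_d + L\,\hat{\beta}\hat{\beta}^\intercal\bigr)$, invert via Sherman--Morrison, identify the time-warp $g(\beta)=\|\beta\|^{-L/(L+1)}$ and potential $\Phi(\beta)=\tfrac{L+1}{L+2}\|\beta\|^{(L+2)/(L+1)}$, and close with the time-warped mirror flow argument of \cref{app:mirror-flow-analysis}. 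As a small bonus, your $M^{-1}$ has the correct minus sign on the rank-one term (the paper's displayed formula has a sign typo), and your extraction of $W_1^\intercal W_1 = \|\beta\|^{2/(L+1)}\hat{\beta}\hat{\beta}^\intercal$ via the rank-one-root argument fills in a step the paper leaves implicit.
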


\begin{proof}
    Whenever $\|\beta\| > 0$ and in the limit as $\delta \to 0$, then we can find a unique expression for $\|a\|^2$ and $W_1^\intercal W_1$ in terms of $\|\beta\|^2$ and $\beta\beta^\intercal$,
    \begin{equation}
        \|a\|^2 = \|\beta\|^{\frac{2}{L+1}}, \qquad W_1^\intercal W_1 = \|\beta\|^{-\frac{2L}{L+1}} \beta\beta^\intercal.
    \end{equation}
    Plugged into the previous expression for $M$ results in a positive definite rank-one perturbation to the identity,
    \begin{equation}
        M =  \|\beta\|^{\frac{2L}{L+1}} \mathbf{I}_d + L\|\beta\|^{-\frac{2}{L+1}} \beta\beta^\intercal.
    \end{equation}
    Using the Sherman-Morrison formula we find that $M^{-1}$ is
    \begin{equation}
        M^{-1} = \|\beta\|^{-\frac{2L}{L+1}}\mathbf{I}_d + \left(\frac{L}{L+1}\right)\|\beta\|^{-\frac{4L+2}{L+1}}\beta\beta^\intercal
    \end{equation}
    We can now apply a time-warped mirror flow analysis similar to the analysis presented in \cref{app:single-neuron-inductive-bias}.
    Consider the time-warping function $g_\delta(\|\beta\|) = \|\beta\|^{-\frac{L}{L+1}}$ and the potential $\Phi(\beta) = \left(\frac{L+1}{L+2}\right)\|\beta\|^{\frac{L+2}{L+1}}$, then its not hard to show $M^{-1} = g_\delta(\|\beta\|)\nabla^2\Phi(\beta)$.
    This gives the desired result.
\end{proof}

This theorem is a generalization of Proposition 1 derived in \cite{azulay2021implicit} for two-layer linear networks in the rich limit to deep linear networks in the rich limit. We find that the inductive bias, $Q(\beta) = (\tfrac{L+1}{L+2})\|\beta\|^{\frac{L+2}{L+1}} - \|\beta_0\|^{-\frac{L}{L+1}}\beta_0^\intercal \beta$, strikes a depth-dependent balance between attaining the minimum norm solution and preserving the initialization direction.

\clearpage
\section{Piecewise Linear Networks}
\label{app:nonlinear}

Here, we elaborate on the theoretical results presented in \cref{sec:nonlinear}. 
Our goal is to extend the tools developed in our analysis of linear networks to piecewise linear networks and understand their limitations. 
We focus on the dynamics of the input-output map, rather than on the inductive bias of the interpolating solutions.
As discussed in \citet{azulay2021implicit, vardi2021implicit}, extending a mirror flow style analysis directly to non-trivial piecewise linear networks is very difficult or provably impossible. 
In this section, we first describe the properties of the input-output map of a piecewise linear function, then describe the dynamics of a two-layer network, and finally discuss the challenges in extending this analysis to deeper networks and potential directions for future work.

\begin{wrapfigure}{R}{0.45\textwidth}
    \vspace{-43pt}
    \begin{subfigure}{0.49\linewidth}
        \centering
        \includegraphics[width=\linewidth]{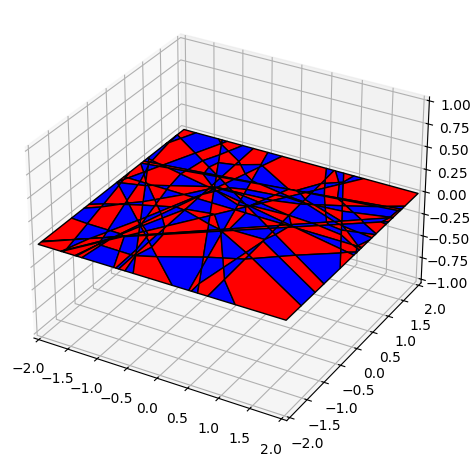}
        \caption{With Biases}
    \end{subfigure}
    \begin{subfigure}{0.49\linewidth}
        \centering
        \includegraphics[width=\linewidth]{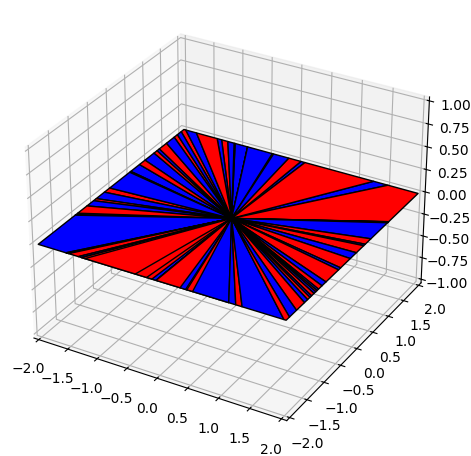}
        \caption{Without Biases}
    \end{subfigure}
    \caption{\textbf{Surface of a ReLU network.}
    Here we depict the surface of a three-layer ReLU network $f(x;\theta) : \mathbb{R}^2 \to \mathbb{R}$ with twenty hidden units per layer at initialization, comparing configurations with biases (left) and without biases (right).
    The network with biases partitions input space into convex polytopes that tile input space.
    The network without biases partitions input space into convex conic sections emanating from the origin.
    Each region exhibits a distinct activation pattern, allowing the partition to be colored with two colors based on the parity of active neurons.
    The network operates linearly within each region and maintains continuity across boundaries.
    }
    \vspace{-20pt}
    \label{fig:relu-network-surface}
\end{wrapfigure}

\subsection{Surface of a piecewise linear network}
\label{app:nonlinear-surface}

The input-output map of a piecewise linear network $f(x;\theta)$, with $l$ hidden layers and $h$ hidden neurons per layer, is comprised of potentially $O(h^{dl})$ connected linear regions, each with their own vector of predictor coefficients \cite{raghu2017expressive}.
The exploration of this complex surface has been the focus of numerous prior works, the vast majority of them focused on counting and bounding the number of linear regions as a function of the width and depth \cite{pascanu2013number, montufar2014number, telgarsky2015representation, arora2016understanding, raghu2017expressive, serra2018bounding, hanin2019complexity, hanin2019deep}.
The central object in all of these studies is the \emph{activation region},
\begin{definition}
    For a piecewise linear network $f(x;\theta)$, comprising $N$ hidden neurons with pre-activation $z_i(x;\theta)$ for $i \in [N]$, let the \emph{activation pattern} $\mathcal{A}$ represent an assignment of signs $a_i \in \{-1,1\}$ to each hidden neuron.
    The \emph{activation region} $\mathcal{R}(\mathcal{A};\theta)$ is the subset of input space that generates $\mathcal{A}$,
    \begin{equation}
        \mathcal{R}(\mathcal{A};\theta) = \{x \in \mathbb{R}^d \;|\; \forall i \;  a_iz_i(x;\theta) > 0\}.
    \end{equation}
\end{definition}

The input-output map is linear within each non-empty activation region and continuous at the boundary between regions.
Linearity implies that every non-empty\footnote{While it is trivial to see that for a network $f(x;\theta)$ with $N$ hidden neurons there are $2^N$ distinct activation patterns, not all activation patterns are attainable. See \citet{raghu2017expressive} for a discussion.} activation region is associated with a \emph{linear predictor} vector $\beta_{\mathcal{R}} \in \mathbb{R}^d$ such that for all $x \in R(\mathcal{A}; \theta)$, $\beta_\mathcal{R} = \nabla_x f(x;\theta)$.
Continuity implies that the boundary between regions is formed by a hyperplane determined by where the pre-activation for a neuron is exactly zero, $\{x : z_i(x;\theta) = 0\}$.
When the neighboring regions have different linear predictors\footnote{It is possible for neighboring regions to have the same linear predictor. Some works define linear regions as maximally connected component of input space with the same linear predictor \cite{hanin2019deep}.}, then this hyperplane is orthogonal to their difference, which is a vector in the span of the first-layer weights.
Taken together, this implies that the union of all activation regions forms a convex partition of input space, as shown in \cref{fig:relu-network-surface}.
We now present a surprisingly simple, yet to the best of our knowledge not previously understood property of this partition:

\begin{proposition}[2-colorable]
    If $f(x;\theta)$ lacks redundant neurons, implying that every neuron influences an activation region, then the partition of input space can be colored with two distinct colors such that neighboring regions do not share the same color.
\end{proposition}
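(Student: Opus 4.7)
The plan is to exhibit an explicit two-coloring by assigning each region $\mathcal{R}(\mathcal{A};\theta)$ the color $c(\mathcal{R}) = |\{i : a_i = +1\}| \bmod 2$, namely the parity of the number of active neurons in its activation pattern. The proposition then reduces to showing that any two regions sharing a $(d-1)$-dimensional face have activation patterns differing in exactly one coordinate, so that their parities differ by one.

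First I would characterize the boundary structure of a region. Each pre-activation $z_i(x;\theta)$ is a continuous piecewise linear function of $x$, so within any fixed region $\mathcal{R}$ it coincides with an affine function, and its zero set inside $\mathcal{R}$ is a portion of a hyperplane. Since $\mathcal{R}$ is cut out by the strict sign conditions $a_i z_i > 0$, its topological boundary is contained in $\bigcup_{i=1}^{N} \{x : z_i(x;\theta) = 0\}$. In particular any $(d-1)$-dimensional face of $\mathcal{R}$ must lie inside at least one level set $\{z_{i^*} = 0\}$ for some neuron $i^*$.

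Second, I would argue that under the hypothesis each $(d-1)$-dimensional face lies in exactly one such level set. Two distinct hyperplanes in $\mathbb{R}^d$ intersect in a set of dimension at most $d-2$, so a $(d-1)$-dimensional face cannot simultaneously sit inside two distinct hyperplanes unless they coincide as sets. Coincidence across the face forces the two relevant pre-activations to be positive scalar multiples of one another on both adjacent regions, which in turn forces the two neurons to share a sign pattern throughout those regions. Propagating this observation shows that one of the two neurons can be eliminated without altering the piecewise linear predictor on any region, contradicting the hypothesis that every neuron influences some activation region. Thus each $(d-1)$-dimensional face is contained in the zero set of exactly one neuron's pre-activation $z_{i^*}$.

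Finally, crossing such a face transversely from $\mathcal{R}$ into its neighbor $\mathcal{R}'$ flips only the sign of $z_{i^*}$, so the activation patterns of $\mathcal{R}$ and $\mathcal{R}'$ agree on every coordinate except the $i^*$-th. Consequently their parities differ by one and $c(\mathcal{R}) \neq c(\mathcal{R}')$, establishing $c$ as a proper two-coloring of the adjacency graph. The main obstacle is the second step: making the implication ``non-redundant neurons give distinct face hyperplanes'' fully rigorous requires a precise formalization of what it means for a neuron to influence a region (naturally: flipping its bit changes the linear predictor $\beta_\mathcal{R}$ on at least one realizable region), together with careful handling of the case where two neurons happen to define the same hyperplane across one pair of adjacent regions but not others. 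The remaining parts of the argument are largely topological bookkeeping.
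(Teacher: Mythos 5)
Your approach matches the paper's: color each region by the parity of its active neurons, and use the non-redundancy hypothesis to argue that crossing a $(d-1)$-dimensional face flips exactly one neuron's activation, so parity changes across every adjacency. The paper's own justification is even more terse than yours — it simply asserts that non-redundancy rules out two activations changing simultaneously at a boundary — so the formalization difficulty you flag in your second step (pinning down what ``influences an activation region'' means, and handling neurons whose zero sets coincide only locally) is a genuine gap in the paper's argument as well, not an artifact of your write-up.
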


The justification for this proposition is straightforward.
There is one color for regions with an even number of active neurons and another for regions with an odd number of active neurons.
Because $f(x;\theta)$ lacks redundant neurons, there does not exist a boundary between activation regions where two neurons activations change simultaneously.
In this work, we solely utilize this proposition for visualization purposes, as shown in \cref{fig:relu-network-surface}.
Nonetheless, we believe it may be of independent interest as it strengthens the connection between the surface of piecewise linear networks and the \emph{mathematics of paper folding}, a connection previously alluded to in the literature \cite{montufar2014number}.

\subsection{Dynamics of a two-layer piecewise linear network}
\label{app:nonlinear-two-layer}

We consider the dynamics of a two-layer piecewise linear network without biases, $f(x;\theta) = a^\intercal \sigma(W x)$, where $W \in \mathbb{R}^{h \times d}$ and $a \in \mathbb{R}^h$.
The activation function is $\sigma(z) = \max(z, \gamma z)$ for $\gamma \in [0,1)$, which includes ReLU $\gamma = 0$ and Leaky ReLU $\gamma \in (0,1)$.
We permit $h > d$, which in the limit as $h \to \infty$, ensures the network possesses the functional expressivity to represent any continuous nonlinear function from $\mathbb{R}^d$ to $\mathbb{R}$ passing through the origin.
Following a similar strategy used in \cref{sec:wide-deep-linear}, we consider the contribution to the input-output map from a single hidden neuron $k \in [h]$ with parameters $w_k \in \mathbb{R}^d$ and $a_k \in \mathbb{R}$.
As in the linear setting, each hidden neuron is associated with a conserved quantity, $\delta_k = \eta_w a_k^2 - \eta_a\|w_k\|^2$.
Unlike in the linear setting, this neuron's contribution to the output $f(x_i;\theta)$ is regulated by whether the input $x_i$ is in the neuron's \emph{active halfspace}, $\{x \in \mathbb{R}^d : w_k^\intercal x > 0\}$.
Let $C \in \mathbb{R}^{h \times n}$ be the matrix with elements $c_{ki} = \sigma'(w_k^\intercal x_i)$, which determines the activation of the $k^{\mathrm{th}}$ neuron for the $i^{\mathrm{th}}$ training data point.
The subgradient $\sigma'(z) = 1$ if $z > 0$, $\sigma'(z) \in [\gamma, 1]$ if $z = 0$, and $\sigma'(z) = \gamma$ if $z < 0$.
These activation functions exhibit positive homogeneity, implying $\sigma(z) = \sigma'(z) z$.
Thus, we can express $\sigma(w_k^\intercal x_i) = c_{ki}w_k^\intercal x_i$, allowing us to express the gradient flow dynamics for $w_k$ and $a_k$ as
\begin{equation}
    \label{eq:nonlinear-gradient-flow-a-w}
    \dot{a}_k = -\eta_a w_k^\intercal \left(\sum_{i=1}^nc_{ki}x_i\rho_i\right), \qquad \dot{w}_k = -\eta_w a_k \left(\sum_{i=1}^nc_{ki}x_i\rho_i\right),
\end{equation}
where $\rho_i = f(x_i;\theta) - y_i$ is the residual associated with the $i^{\mathrm{th}}$ training data point.
If we let $\beta_k = a_kw_k$, which determines the contribution of each hidden neuron to the output $f(x_i;\theta)$, then its not hard to see that the gradient flow dynamics of $\beta_k$ are
\begin{equation}
    \label{eq:nonlinear-gradient-flow-beta}
    \dot{\beta}_k = -\underbrace{\left(\eta_wa_k^2\mathbf{I}_d + \eta_aw_kw_k^\intercal\right)}_{M_k}\underbrace{\left(\textstyle\sum_{i=1}^nc_{ki}x_i\rho_i\right)}_{\xi_k}.
\end{equation}
As in the linear setting, the matrix $M_k \in \mathbb{R}^{d \times d}$ appears as a preconditioning matrix on the dynamics
Using the exact same derivation presented in \cref{app:single-neuron-beta}, whenever $a_k^2 \neq 0$, we can express $M_k$ entirely in terms of $\beta_k$ and $\delta_k$,
\begin{equation}
    M_k = \frac{\sqrt{\delta_k^2 + 4\eta_a\eta_w\|\beta_k\|^2} + \delta_k}{2}\mathbf{I}_d + \frac{\sqrt{\delta_k^2 + 4\eta_a\eta_w\|\beta_k\|^2} - \delta_k}{2}\frac{\beta_k\beta_k^\intercal}{\|\beta_k\|^2}.
\end{equation}
However, unlike in the linear setting, the vector $\xi_k \in \mathbb{R}^d$ driving the dynamics is not shared for all neurons because of its dependence on $c_{ki}$.
Additionally, the NTK matrix in this setting depends on $M_k$ and $C$, with elements $K_{ij} = \sum_{k = 1}^h c_{ki} x_i^\intercal \left(\eta_wa_k^2\mathbf{I}_d + \eta_aw_kw_k^\intercal\right) x_j c_{kj}$.
Thus, in order to assess the temporal dynamics of the NTK matrix, we must understand the dynamics of $M_k$ and $C$.
We consider a \emph{signed spherical coordinate} transformation separating the dynamics of $\beta_k$ into its directional $\hat{\beta}_k = \mathrm{sgn}(a_k)\tfrac{\beta_k}{\|\beta_k\|}$ and radial $\mu_k = \mathrm{sgn}(a_k)\|\beta_k\|$ components, such that $\beta_k = \mu_k \hat{\beta}_k$.
Here, $\hat{\beta}_k$ determines the orientation and direction of the halfspace where the $k^{\mathrm{th}}$ neuron is active, while $\mu_k$ determines the slope of the linear region in this halfspace.
These coordinates evolve according to,
\begin{equation}
    \dot{\mu}_k = -\sqrt{\delta_k^2 + 4\eta_a\eta_w\mu_k^2}\hat{\beta}_k^\intercal \xi_k, \qquad
    \dot{\hat{\beta}}_k = -\frac{\sqrt{\delta_k^2 + 4\eta_a\eta_w\mu_k^2} + \delta_k}{2\mu_k} \left(\mathbf{I}_d - \hat{\beta}_k\hat{\beta}_k^\intercal\right)\xi_k.
\end{equation}
These equations can be derived directly from \cref{eq:nonlinear-gradient-flow-a-w} through chain rule similar to \cref{app:single-neuron-mu-phi}.
In fact its worth noting that the this change of coordinates is similar to the change of coordinates used in the single-neuron analysis.
Expressed in terms of the parameters, $\hat{\beta}_k = \frac{w_k}{\|w_k\|}$ and $\mu_k = a_k \|w_k\|$.


\clearpage
\section{Experimental Details}
\label{app:experimental-details}
We used Google Cloud Platform (GCP) nodes to run all experiments. Figure 1 experiments were run on a node with 360 AMD Genoa CPU cores with runtime totaling approximately 90 minutes including averaging over seeds as described below. Neural network training and NTK calculation for Figure 5 was performed on single A100 GPU nodes. Runtime was approximately 20 hours for Figure 5(a), four hours for 5(b), 12 hours for 5(c) (with individual runs ranging from five to 30 minutes depending on the number of datapoints), and 12 hours for 5(d).  Figures 2, 3, and 4 are not compute-heavy, and these experiments were run on a personal computer. Overall, we estimate approximately 200 hours of single A100 runtime as well as 100 hours of the 360-core node accounting for failed runs and exploratory experiments.

\subsection{Figure 1: Teacher-Student with Two-layer ReLU Networks}
\label{app:experimental-details-two-layer}
For \cref{fig:two-layer-relu} we consider a student-teacher setup similar to that in \cite{chizat2019lazy}, with one-hidden layer ReLU networks of the form $f(x;\theta) = \sum_{i=1}^m a_i \sigma(w_i^\intercal x)$, where $f:\mathbb{R}^d\to\mathbb{R}$ and $\sigma$ is the ReLU activation function. The teacher model, $f^\mathrm{teacher}$, has $m=k$ hidden neurons initialized as $w^\mathrm{teacher}_i \overset{\text{i.i.d.}}{\sim} \mathrm{Unif}(S^{d-1})$ and $a_i \overset{\text{i.i.d.}}{\sim} \mathrm{Unif}(\{\pm1\})$ for $i \leq k$.
The student, $f^\mathrm{student}$, in turn, has $h$ hidden neurons. We use a symmetrized initialization, as considered in \cite{chizat2019lazy}, where for $i \leq h/2$, we sample $w_i \overset{\text{i.i.d.}}{\sim} S^{d-1}$ and $a_i \overset{\text{i.i.d.}}{\sim} \mathrm{Unif}(\{\pm1\})$, and then for $i \geq \frac{h}{2} + 1$ we symmetrize by setting $w_i = w_{i - h/2}$ and $a_i = -a_{i - h/2}$. This ensures that $f^\mathrm{student}$ predicts 0 on any input at initialization.

Note that the \textit{base} student initialization described thus far is perfectly balanced at each neuron, that is $\delta_i = 0$ for $i \in [m]$; we also define this to be our setting where the scale $\tau$ is 1. In order to transform the base initialization into a particular setting of $\tau$ and $\delta$, we first solve for the relative layer scaling $\alpha$ in $\delta^2 = \tau^2 (\alpha^2 - \alpha^{-2})$ and then scale each $w_i$ by $\tau/\alpha$ and each $a_i$ by $\tau\alpha$.
We obtain a training dataset $\{x^{(i)}, y^{(i)}\}_{i=1}^n$ by sampling $x^{(i)} \overset{\text{i.i.d.}}{\sim} S^{d-1}$ and computing noiseless labels as $y^{(i)} = f^\mathrm{teacher}(x^{(i)};\theta^\mathrm{teacher})$. The student is then trained with full-batch gradient descent on a mean square loss objective.

\textbf{Figure 1 (a).}

Here the setting is: $d=2$, $h=50$, $k=3$, and $n=20$. We sample a single teacher and then train four students with the same base initialization but different configurations of $\tau$ and $\delta$: $(\tau=0.1, \delta=0)$ and $(\tau=2, \delta=0)$ for the left subfigure, and $(\tau=0.1, \delta=1)$ and $(\tau=0.1, \delta=-1)$ for the right subfigure. Training is for 1 million steps at a learning rate of 1e-4.

\textbf{Figure 1 (b).}

Here the setting is: $d=100$, $m=50$, $k=3$, and $n=1000$, as in Fig. 1c of \cite{chizat2019lazy}. Training is performed with learning rate of 5e-3$/\tau^2$. Test error is computed as mean square error over a held-out set of 10,000 datapoints.
We sweep over $\tau$ over a logarithmic scale in the range $[0.1, 2]$ and $\delta$ over a linear scale in the range $[-1, 1]$. We average over 16 random seeds, where the seed controls the sampling of: the teacher weights $\theta^\mathrm{teacher}$, the base initialization of $\theta^\mathrm{student}$, and the training data $\{x^{(i)}\}_{i=1}^n$. In this way, each random seed is used for a sweep over all combinations of $\tau$ and $\delta$ in the sweep; we simply apply the scaling described above to get to each point on the $(\tau, \delta)$ grid.
The kernel distance computed is as defined in \cite{fort2020deep}, where here we compute it at time $t$ relative to the kernel at initialization, i.e. $S(t) = 1 - \langle K_{0}, K_{t}\rangle / \left(\|K_{0}\|_F\|K_{t}\|_F\right)$. In \cref{fig:two-layer-supporting}, we additionally plot Hamming and parameter distances relative to initialization, as well as training loss, while training for ten times longer than in \cref{fig:two-layer-relu} (b).

Notebooks generating all two-layer experiment figures are provided \href{https://github.com/allanraventos/getrichquick/blob/main/two-layer-relu}{here}.

\begin{figure}[H] 
    \centering
    \includegraphics[width=\columnwidth]{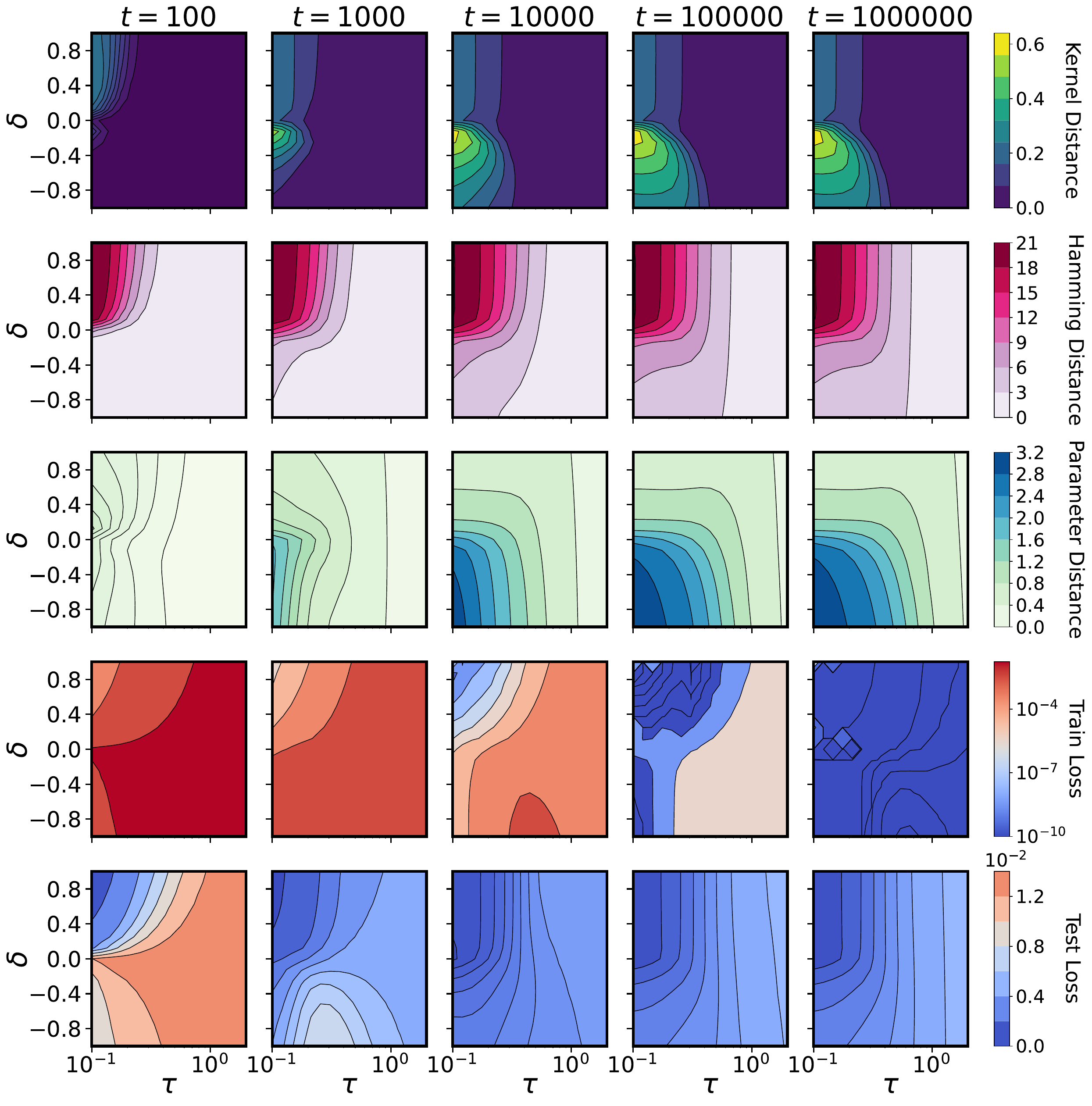}
    \caption{\textbf{Supporting figures for \cref{fig:two-layer-relu} (b).} We plot Hamming distance, parameter distance, and training loss, on top of the test loss and kernel distance considered in \cref{fig:two-layer-relu} (b), and train for ten times longer than in \cref{fig:two-layer-relu} (b). We observe that although training loss still drops between $10^5$ and $10^6$ steps, the test loss and other distances considered remain largely unchanged. Training loss is saturated at 1e-10.}
    \label{fig:two-layer-supporting}
\end{figure}

\subsection{Figures 2, 3, 4: Single-Neuron Linear Network}
\label{app:experimental-details-single-neuron}

Figures 2, 3, and 4 were generated by simulating gradient flow using  \texttt{scipy.integrate.solve\_ivp} function with the RK45 method for solving the ODEs, with a relative tolerance of $1 \times 10^{-6}$ and time span of $(0,20)$.
In the experiments with full-rank data, we used $X^\intercal X = \mathbf{I}_2$, $\beta_* = \left[\begin{smallmatrix} 0 \\ 1 \end{smallmatrix}\right]$, and $\beta_0 = \left[\begin{smallmatrix} -1 \\ 0 \end{smallmatrix}\right]$. 
For the experiment with low-rank data, we used $X^\intercal X = \left[\begin{smallmatrix} 0.25 & 0.5 \\ 0.5 & 1 \end{smallmatrix}\right]$, $\beta_* = \left[\begin{smallmatrix} 0.44 \\ 0.88 \end{smallmatrix}\right]$, and $\beta_0 = \left[\begin{smallmatrix} 0.4 \\ 0.05 \end{smallmatrix}\right]$.
See the discussion in \cref{app:single-neuron-exact-solutions} for details on how we determined our theoretical predictions.
A notebook generating all the figures is provided \href{https://github.com/allanraventos/getrichquick/blob/main/single-neuron/Single_Neuron.ipynb}{here}.

\subsection{Figure 5: }
\label{app:fig-5}

\textbf{Kernel Distance}

We trained LeNet-5~\cite{lecun1998gradient} (with ReLU nonlinearity and Max Pooling) on MNIST~\cite{lecun1998gradient}. We use He initialization~\cite{he2015delving} and divide the first layer weights by $\alpha$ and multiply the last layer weights by $\alpha$ at initialization, which keeps the network functionally the same at initialization. We trained the model for 500 epochs with a learning rate of 1e-4 and a batch size of 512. The parameter distance is defined as the $L_2$ distance between all the parameters. To quantify the distance between the activations, we binarize the hidden activation with 1 representing an active neuron. We evaluate Hamming distance over all the binarized hidden activations normalized by the the total number of the activations. We use kernel distance \cite{fort2020deep}, defined as
$S(t_1,t_2) = 1 - \langle K_{t_1}, K_{t_2}\rangle / \left(\|K_{t_1}\|_F\|K_{t_2}\|_F\right)$, which is a scale invariant measure of similarity between the NTK at two points in time. We subsample 10\% of MNIST to evaluate the Hamming distance and kernel distance. All curves in the figure are averaged over 8 runs.

\textbf{Gabor Filters}

We are training a small ResNet based on the CIFAR10 script provided in the DAWN benchmark (code available \href{https://github.com/davidcpage/cifar10-fast}{here}).
The only modifications to the provided code base are we increase the convolution kernel size from $3 \times 3$ to $15 \times 15$, to better observe the learned spatial patterns, and we set the weight decay parameter to $0$ to avoid confounding variables.
Moreover, we are dividing the convolutional filters weights by a parameter $\alpha$ (after standard initialization) which controls the balancedness of the network.
To quantify the smoothness of the filters, we compute the normalized Laplacian of each filter $w_{ij} \in \mathbb{R}^{15 \times 15}$, over input $i=(1, 2, 3)$ and output $j=(1, ..., 64)$ channels
\begin{equation}\label{eq:laplacian}
    \text{smoothness}(w_{ij}) := \left\lVert\frac{w_{ij}}{\|w_{ij}\|_2} \ast \Delta \right\rVert_2^2
\end{equation}
where the Laplacian kernel is defined as
\begin{equation}
    \Delta := \left(\begin{array}{ccc}
         -0.25 & -0.5 & -0.25\\
         -0.5 & 2 & -0.5 \\
         -0.25 & -0.5 & -0.25
    \end{array}\right).
\end{equation}

\begin{figure}[H]
    \begin{subfigure}{0.535\textwidth}
        \centering
        \includegraphics[width=\linewidth]{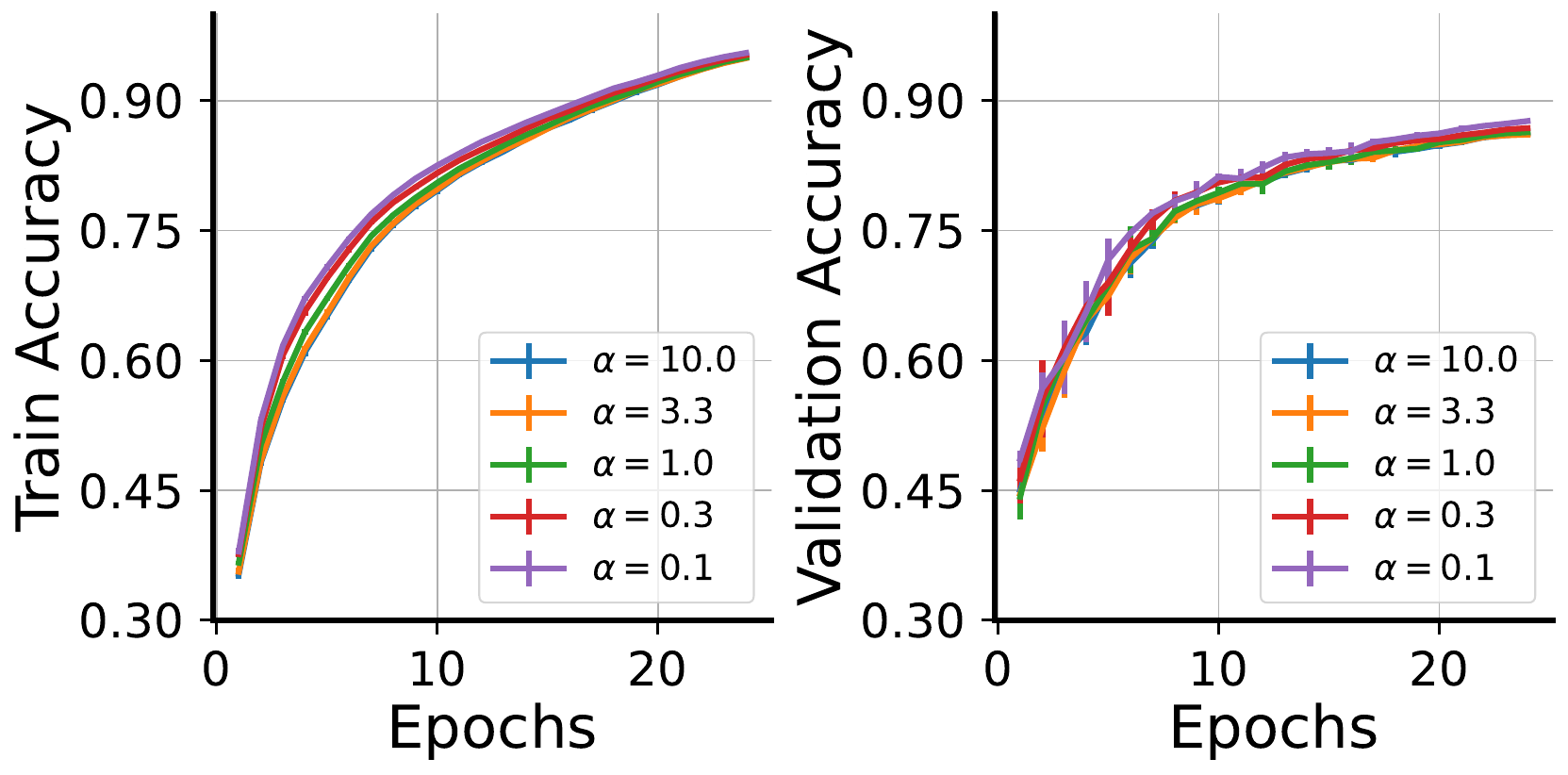}
        \caption{Accuracy}
    \end{subfigure}
    \begin{subfigure}{0.455\textwidth}
        \centering
        \includegraphics[width=\linewidth]{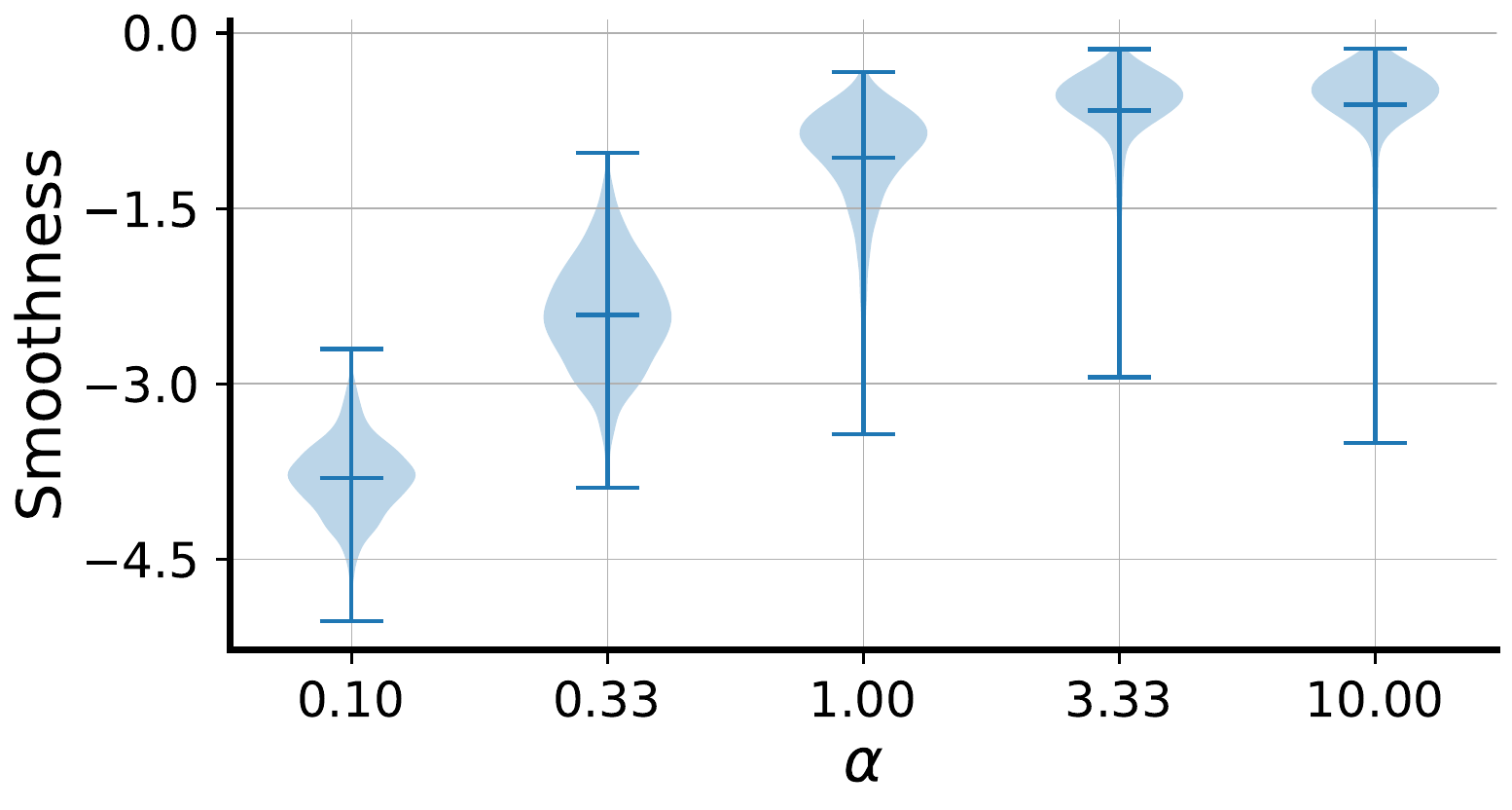}
        \caption{Gabor Smoothness}
    \end{subfigure}
    \caption{\textbf{Interpreting convolutional filters.}
    CNN experiments on CIFAR10. We can see in \textbf{A)} that all networks achieve comparable training and test accuracy, despite the modification in initialization. However, in \textbf{B)} we see that networks with a small initialization ($\alpha<1$) learn much smoother filters, giving quantiative support to results in Fig.~\ref{fig:deep-network}. The smoothness is defined as the normalized Laplacian of the filters (see text, eq.~\ref{eq:laplacian}).
    }
    \label{fig:gabors}
\end{figure}

\textbf{Random Hierarchy Model}

We refer to \cite{petrini2023deep}, who originally proposed the random hierarchy model (RHM) as a tool for studying how deep networks learn compositional data, for a more in-depth treatment. Here we briefly recap the setup following the notation used in \cite{petrini2023deep}.

An RHM essentially lets us build a random classification task with a clear hierarchical structure. The top level of the RHM specifies $m$ equivalent high-level features for \textit{each} class label in $\{1,\ldots,n_c\}$, where each feature has length $s$ and $n_c$ is the number of classes. For example, suppose the vocabulary at the top level is $\mathcal{V}_L=\{a, b, c\}$, $n_c = 2$, $m=3$, and $s=2$. Then in a particular instantiation of this RHM, we might have that Class 1 has $ab$, $aa$, and $ca$ as equivalent high-level features (this is precisely the example used in Fig.1 of \cite{petrini2023deep}). Class 2 will then have three random high-level features, with the constraint that they are \textbf{not} features for Class 1, for example, $bb, bc, ac$.

Each successive level specifies $m$ equivalent lower-level features for each ``token" in the vocabulary at the previous level. For example, if $\mathcal{V}_{L-1}=\{d,e,f\}$, we might have that $a$ can be equivalently represented as $de$, $df$, or $ff$; $b$ and $c$ will each have $m$ equivalent representations of their own. We assume that the vocabulary size, $v$, is the same at all levels. Therefore, sampling an RHM with hyperparameters $n_c, m, s, v$ requires sampling $m n_c + (L-1) m v$ rules.

In order to sample a datapoint from an RHM, we first sample a class label (e.g. Class 1), then uniformly sample one of the highest level features, (e.g. $ab$), then for each ``token" in this feature we sample lower level features (e.g. $a \to de$, $b \to ee$), and so on recursively. The generated sample will therefore have length $s^L$ and a class label. For training a neural network to perform this classification task, each input is converted into a one-hot representation, which will be of shape $(s^L, v)$, and is then flattened.
 
We use the code released by \cite{petrini2023deep} to train an MLP of width 64 with three hidden layers to learn an RHM with $L=3$, $n_c=8, m=4, s=2, v=8$. The main change we make is allowing for scaling the initialization of the first layer by $1/\alpha$ and the initialization the readout layer by $\alpha$. We then sweep over $\alpha \in \{0.03, 0.1, 0.3, 1, 3, 10\}$ and over the number of datapoints in the training set, which is specified as a fraction of the total number of datapoints the RHM can generate. We average test accuracy, which is by default computed on a held-out set of 20,000 samples, over six random seed configurations, where each configuration seeds the RHM, the neural network, and the data generation. 

We train with the default settings used in \cite{petrini2023deep}, that is stochastic gradient descent with momentum of 0.9, run for 250 epochs with a learning rate initialized at 6.4 (0.1 times width) and decayed with a cosine schedule down to 80\% of epochs. The batch size of 128; we do not use biases or weight decay.

\textbf{Grokking}

We are training a one layer transformer model on the modular arithmetic task in \citet{power2022grokking}.
Our experimental code is based on an existing Pytorch implementation (code available \href{https://github.com/teddykoker/grokking}{here}).
The only modifications to the provided code base is that we use a single transformer layer (instead of the default 2-layer model).
Prior analysis in \citet{nanda2023progress} has shown that this model can learn a minimal (attention-based) circuit that solves the task.

We study the effects on grokking time (defined as $\geq0.99$ accuracy on the validation data) of two manipulations.
Firstly, we divide the embedding weights of the positional and token embeddings by the same balancedness parameter $\alpha$ as in the CNN gabor experiments.
Secondly, like in \citet{kumar2023grokking}, we multiply the output of the model (i.e., the logits) by a factor $\tau$ and divide the learning rate by $\tau^2$.

\begin{figure}[H]
    \begin{subfigure}{0.49\textwidth}
        \centering
        \includegraphics[width=\linewidth]{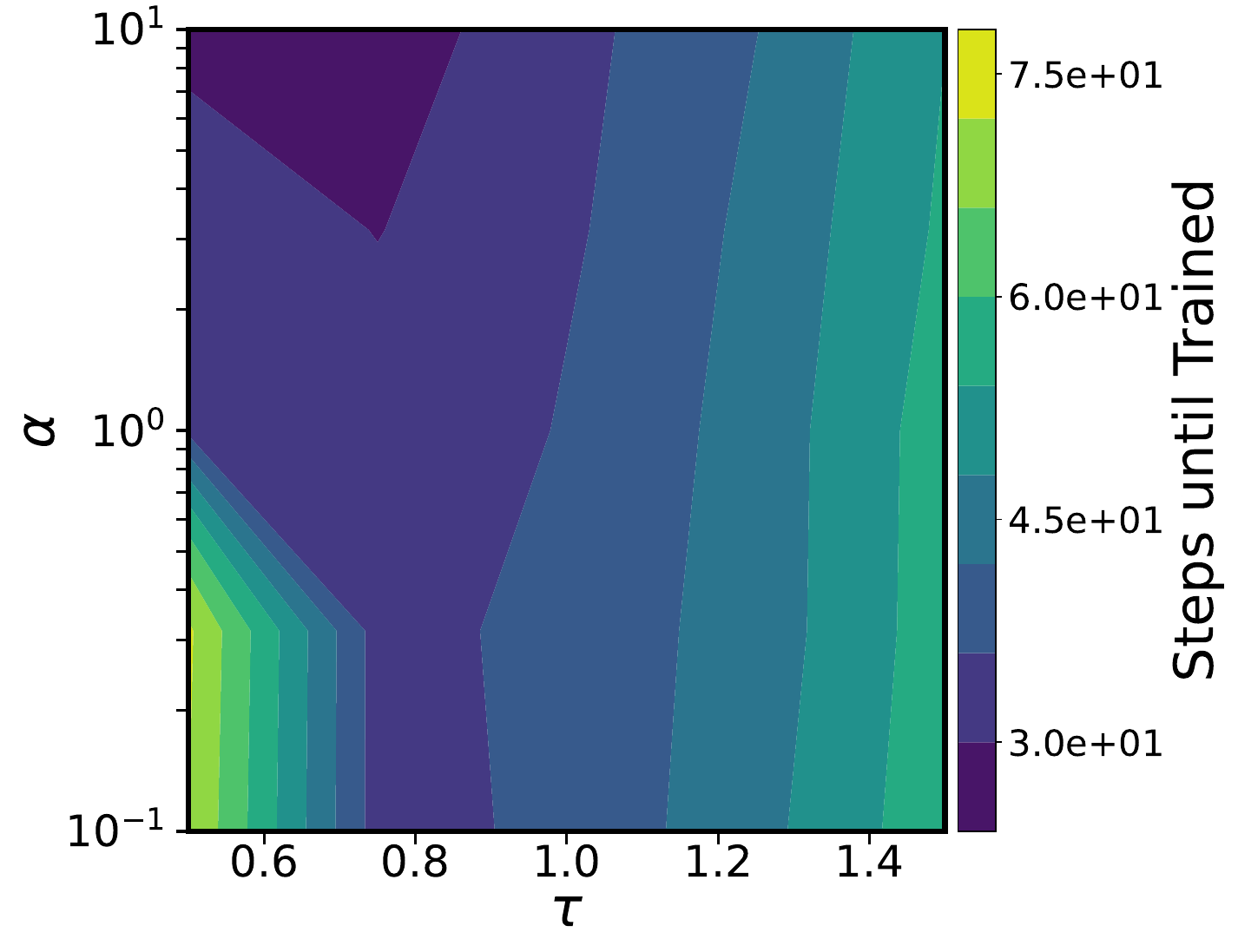}
        \caption{Steps until training}
    \end{subfigure}
    \begin{subfigure}{0.49\textwidth}
        \centering
        \includegraphics[width=\linewidth]{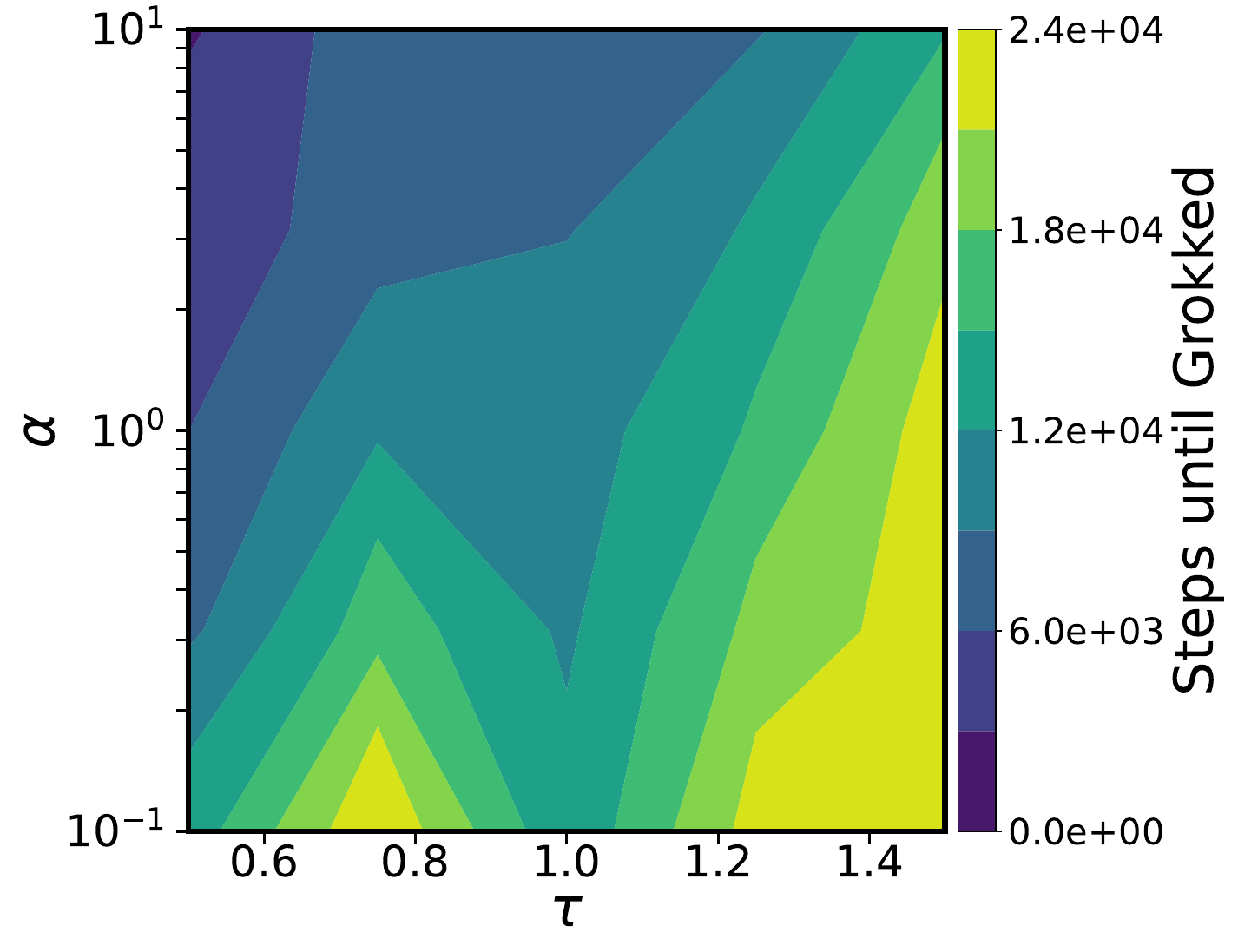}
        \caption{Steps until grokked}
    \end{subfigure}
    \caption{\textbf{Transformer Grokking in Modular Arithmetic Task.}
    \textbf{A)} Shows the number of training steps required until the training accuracy passes a predefined threshold of $99\%$; we sample scaling $\tau \in \{0.5, 0.75, 1.0, 1.25, 1.5 \}$ \citep{kumar2023grokking} and balance $\alpha \in \{0.1, 0.3, 1.0, 3.0, 10 \}$ on a regular grid over $n=5$ random initializations with a maximal computational budget of $m=30,000$ training steps.
    \textbf{B)} Same as \textbf{A)}, but reporting the number of training steps required until the test performance passes the predefined threshold of $99\%$. We clearly see the fastest grokking in an unbalanced rich setting.
    }
    \label{fig:grokking}
\end{figure}

\end{document}